\RequirePackage{fix-cm}
 \documentclass{article}

\usepackage[main, final]{neurips_2026}

 \usepackage[utf8]{inputenc} 
\usepackage[T1]{fontenc}    
\usepackage{url}            
\usepackage{booktabs}       
\usepackage{amsfonts}       
\usepackage{nicefrac}       
\usepackage{microtype}      
\usepackage{xcolor}         
\usepackage{titletoc}

\usepackage{tikz}
\usetikzlibrary{calc,patterns,shapes.geometric}
\usepackage[textsize=tiny]{todonotes}
\usepackage{global_aliases}

\title{On the Complexity of Preference-Based Bandits}

\author{%
  Ahmed Ben Yahmed\thanks{Correspondence to: a.benyahmed@criteo.com.} 
  \\
  Criteo AI Lab, Paris, France\\
  CREST, ENSAE, France\\
  FairPlay joint team\\
  \And
  Marc Abeille \\
  Criteo AI Lab, Paris, France\\
  FairPlay joint team\\
  \And
  Clément Calauzènes \\
 Criteo AI Lab, Paris, France\\
  FairPlay joint team\\
}

\begin{document}

\maketitle

\begin{abstract}


We study preference-based bandits with general reward function classes, where a learner 
sequentially selects pairs of arms and observes binary preference feedback governed by the 
Bradley--Terry model. This setting naturally arises in applications such as recommender 
systems, tournament ranking, and learning from human feedback, where relative preferences 
are easier to elicit than absolute rewards. The observation model inherits the logistic 
bandit challenge of handling the problem-dependent constant $\kappa$, which accounts for 
the non-linearity of the link function and can grow arbitrarily large. Moreover, prior work 
has predominantly focused on linear or kernelized reward models, precluding the use of 
richer function classes. To address these limitations, we consider general reward function 
classes and introduce the \emph{locally sensitive eluder dimension}, a novel 
complexity measure tailored to the logistic structure of preference feedback that yields 
fine-grained regret guarantees without unfavorable dependence on $\kappa$. Building on 
this notion, we propose \textbf{GINOP} (Generic INformative OPtimism), an algorithm that 
constructs log-loss confidence sets and jointly selects arm pairs to balance optimism and 
informative exploration. We establish a first-order regret bound that, in contrast with what previous results 
suggest, demonstrates that learning with preference feedback is as statistically efficient 
as learning from direct reward observation.  Finally, we corroborate our theoretical findings with empirical 
evaluations against competitive baselines.

\end{abstract}
\section{Introduction}
In contrast to classical stochastic bandits, where the learner plays arms and receives direct reward 
feedback~\citep{lattimore2020bandit}, dueling bandits -- also referred to as 
preference-based bandits -- constitute a sequential decision-making framework in which 
the learner plays pairs of arms (duels) 
and observes noisy preference feedback~\citep{Dueling_Survey}. Specifically, at each 
round, the learner receives a binary signal indicating which arm in the selected pair is 
preferred. Preference-based bandits naturally arise in real-world applications where 
eliciting absolute rewards is difficult, while pairwise comparisons are more natural and 
reliable. Notable examples include online recommendation systems -- where users tend to 
provide more accurate relative preferences than absolute ratings -- web search ranking, 
fine-tuning of large language models, and comparative evaluations such as choosing between 
two restaurants or movies. More broadly, this framework is well-suited to any setting in 
which relative judgments are easier to obtain than absolute reward values.

When the decision space -- comprising, for instance, items in online platforms, search 
results, or model parameters in large language models -- is large or even infinite, it is 
common to assume that the reward function is parameterized by an unknown mapping, typically 
taken to be linear~\citep{Dueling-saha-linear, pmlr-v162-bengs22a, 
li2024feelgoodthompsonsamplingcontextual} or belonging to a reproducing kernel Hilbert 
space (RKHS) associated with a given kernel~\citep{Dueling-xu2024-kernel, 
Dueling_Stackelburg_Krause, kayal2025bayesian}. However, such parametrizations may be 
overly restrictive in practice and can limit the expressiveness of the model. To address 
this limitation, the present paper considers a more general setting in which the reward 
function belongs to a rich and generic function class, and must be learned from preference 
feedback obtained through pairwise comparisons of selected arms.




\section{Setting}
\phantomsection\label{sec:setting}

\paragraph{Notation.}
The set $\{1,\ldots,T\}$ is denoted by $[T]$. The sigmoid function is defined as $\sigma(x) := (1+e^{-x})^{-1}$, and $\dot{\sigma}$ and $\ddot{\sigma}$ denote its first and second derivatives, respectively. The log-loss function is defined as
\(
\ell(a,b) := - a \Log{b} - (1-a)\Log{1-b}.
\)
The inner product between two vectors $u, v$ is denoted $\langle u, v \rangle$. We define $\|x\| := \sqrt{\langle x, x \rangle}$ and, for any positive definite matrix $V \in \mathbb{R}^{d \times d}$, $\|x\|_V := \sqrt{\langle x, Vx \rangle}$, as the Euclidean and weighted Euclidean norms of $x$, respectively. We denote by $\mathbb{B}_2^d(r)$ the $d$-dimensional Euclidean ball of radius $r$ centered at the origin. Standard asymptotic notation $\cO(\cdot)$ and $\Omega(\cdot)$ is used throughout, while $\widetilde{\cO}(\cdot)$ suppresses logarithmic factors.

\paragraph{Problem formulation.}
We consider a stochastic dueling bandit problem defined by a decision set $\cX$ and a 
reward function class $\cF$. At each round, the learner selects a pair of arms -- referred 
to as a \emph{duel} -- from $\cX$ and observes preference feedback over the selected arms. This formulation contrasts with the 
classical stochastic bandit setting, in which the learner selects a single arm $x \in \cX$ 
at each round $t$ and observes a stochastic absolute reward $r_t(x)$. In dueling bandits, at round 
$t$, the learner selects a pair of arms $(x_t, x'_t) \in \cX^2$. Rather than observing the 
individual rewards $r_t(x_t)$ and $r_t(x'_t)$, the learner receives stochastic preference 
feedback $y_t \in \{0,1\}$, where $y_t = 1$ indicates that arm $x_t$ is preferred over 
$x'_t$, and $y_t = 0$ otherwise. Yet, the objective is to cumulate as much reward as possible. 
We impose the following realizability and boundedness conditions, common in the stochastic setting.

\begin{assumption}[Realizability]
\phantomsection\label{Asp:mean_value_function}
$\exists f^\star \in \cF,~\forall t \in [T],~\forall 
x \in \cX$,~~
\(
f^\star(x) = \mathbb{E}\!\left[r_t(x) \mid x \right].
\)
\end{assumption}

\begin{assumption}[Bounded reward class]
\phantomsection\label{Asp:bounded_reward_cost}
$\exists S > 0,~\forall x \in \cX,~\forall f \in \cF$,~~
\(
f(x) \in [-S, S].
\)
\end{assumption}

\paragraph{Observation model.}
The link between the preference feedback and the underlying rewards is assumed to follow the Bradley--Terry (BT) model~\citep{Bradely_T}, a standard modeling choice in the dueling bandits literature~\citep{Dueling-saha-linear, pmlr-v162-bengs22a, li2024feelgoodthompsonsamplingcontextual}. Under this model, the probability that arm $x_t$ is preferred over $x'_t$ is given by
\begin{align*}
\mathbb{P}\!\left(x_t \succ x'_t\right)
= \mathbb{P}\!\left(y_t = 1 \mid x_t, x'_t\right)
= \sigma\!\left(f^\star(x_t) - f^\star(x'_t)\right),
\end{align*}
where $x_t \succ x'_t$ denotes that arm $x_t$ is preferred over arm $x'_t$. While the BT model provides a tractable link between rewards and preferences, it introduces a problem-dependent constant
\[
\kappa := \sup_{x,x' \in \cX} \frac{1}{\dot{\sigma}\!\left(f^\star(x) - f^\star(x')\right)},
\]
which captures the degree of non-linearity of the sigmoid function. This quantity can be prohibitively large: it scales as $\kappa \approx e^{2S}$, so that even $S = 5$ already yields $\kappa > 2.2 \times 10^{4}$. Consequently, it requires careful treatment in both the algorithm design and the theoretical analysis in order to avoid a detrimental impact on the resulting regret guarantees.

\paragraph{Performance measure.}\label{sec:Metrics_Discussion}
After selecting a pair of arms $(x_t, x'_t)$ at round $t$, the learner incurs an 
instantaneous regret. Two standard notions of instantaneous regret in the dueling bandits 
literature~\citep{Dueling-saha-linear, pmlr-v162-bengs22a, li2024feelgoodthompsonsamplingcontextual} 
are the \emph{average} and \emph{weak} instantaneous regrets, defined respectively as
\(
\rho_t^{a} := f^\star(x^\star) - \frac{f^\star(x_t) + f^\star(x'_t)}{2}
\; \text{and} \;
\rho_t^{w} := f^\star(x^\star) - \max\bigl\{f^\star(x_t),\, f^\star(x'_t)\bigr\},
\)
where $x^\star := \arg\max_{x \in \cX} f^\star(x)$ denotes the optimal arm. After $T$ 
rounds, the cumulative regret of a policy is defined as
\(
\cR_T^\tau := \sum_{t=1}^T \rho_t^\tau, \; \tau \in \{a, w\}.
\)
A desirable policy achieves sublinear regret, i.e.,
\(
\lim_{T \to \infty} \frac{\cR_T^\tau}{T} = 0,
\)
which implies that the policy asymptotically identifies the optimal arm and selects it 
consistently in comparisons. By definition, $\cR_T^w \leq \cR_T^a$. Therefore, throughout 
the remainder of the paper, we restrict attention to the average cumulative regret
\[
\cR_T := \cR_T^a = \sum_{t=1}^T \left( f^\star(x^\star) - \frac{f^\star(x_t) + f^\star(x'_t)}{2} \right).
\]





\section{Related Work and Contributions}

\paragraph{Dueling bandits.}

Learning with indirect feedback was first studied in supervised preference 
learning~\citep{NIPS2004_5b168fdb, 10.1145/1102351.1102369}, and later extended to online 
and sequential settings motivated by applications such as human-provided 
feedback~\citep{Dueling_Yue_old, Dueling_Yue, Dueling_classification}.  We refer the readers to~\cite{Dueling_Survey} for a detailed survey on
various works covering different settings of dueling bandits.


An initial line of work considers finite multi-armed domains and learns a preference matrix 
specifying pairwise relations among arms. Such approaches often rely on efficient sorting or 
tournament schemes based on win frequencies per 
action~\citep{10.5555/2986459.2986709, Dueling_zoghi2, pmlr-v37-komiyama15, NIPS2016_9de6d14f, Dueling_falahatgar17a}. 
Rather than selecting both arms jointly, these strategies simplify the problem by 
choosing one arm at random~\citep{Dueling-zoghi14, Dueling_factored}, 
greedily~\citep{Dueling-chen17c}, or from the set of previously selected 
arms~\citep{Dueling_ailon14}. Building on an online square-loss oracle, \cite{Dueling-saha22a} 
propose an efficient and optimal algorithm when the preference matrix is 
well-specified by a given function class.

To handle large or even infinite decision spaces, an alternative paradigm has emerged, namely 
\emph{utility-based dueling bandits}, to which the present work belongs. A prominent line of research adopts linear utility models. \cite{Dueling-saha-linear} and 
\cite{pmlr-v162-bengs22a} both employ UCB-style estimators but with different decision 
schemes: the former constructs a set of plausible winners and selects the most informative 
pair among them, while the latter greedily selects the first arm and then chooses a 
competitive second arm to duel. Both achieve a regret of $\widetilde{\cO}(\kappa d \sqrt{T})$, 
where $d$ is the dimension of the arm space. Similarly, \cite{di2024varianceaware} present a 
variance-aware algorithm that proceeds in phases, maintaining an active set of arms via a 
UCB-style criterion, and achieves
\(
\widetilde{\cO}\!\left(d\kappa \sqrt{\sum_{t=1}^{T} \eta_t^2} + d\kappa^2\right),
\)
where $\{\eta_t\}$ denotes the noise variance sequence. Alternatively, \cite{li2024feelgoodthompsonsamplingcontextual} propose a Thompson sampling 
method that also achieves $\widetilde{\cO}(\kappa d \sqrt{T})$. However, such linear methods 
have limited practical appeal - as they cannot capture the complex nonlinear utility functions 
arising in real-world problems - motivating the extension to Reproducing kernel Hilbert spaces (RKHS). Considering a kernelized 
logistic negative log-likelihood loss to estimate the utility, but with different decision 
rules, \cite{Dueling-xu2024-kernel} achieve a regret of $\widetilde{\cO}\bigl((\gamma_T 
T)^{3/4}\bigr)$ -- where $\gamma_T$ denotes the maximum information gain -- by retaining 
one arm from the previous round and selecting the second as the maximizer of a UCB on the 
pairwise preference. \cite{Dueling_Stackelburg_Krause} adopt a game-theoretic arm-selection 
strategy and achieve a regret of $\widetilde{\cO}(\gamma_T \kappa^2 \sqrt{T})$. Building on a more involved phased approach, \cite{kayal2025bayesian} removed the dependence on $\kappa$ compared to~\cite{Dueling_Stackelburg_Krause}. Finally, the recent work of \cite{verma2025neural} considers 
neural dueling bandits in the wide-network regime (NTK), leveraging neural tangent features 
for preference prediction to achieve a regret of
\(
\widetilde{\cO}\!\big(\kappa\tilde{d}\sqrt{T}\big),
\) with $\tilde{d}$ the effective dimension.

It is worth noting that preference-based feedback has played a central role in integrating 
human feedback into learning systems, yielding notable successes across a range of 
applications~\citep{Duleing-stiennon2022-learningsummarizehumanfeedback, Dueling-assistant, 
Dueling-saha23a-RL, Dueling-zhu23f-RL, Dueling-RLHF, Dueling-munos24a-RL}.

\paragraph{Bradley-Terry preference model and logistic bandits.}
The Bradley-Terry (BT) model is a seminal framework to derive binary preferences from absolute scores: for items with scores $v_1$ and $v_2$, the preference probability is $\sigma(v_1-v_2)$ where $\sigma$ is the sigmoid function $x \mapsto (1+e^{-x})^{-1}$. Consequently, dueling bandits under the BT assumption inherit the technical challenges of logistic bandits. Early logistic bandit analyses relied on global linearization~\citep{Filippi}, yielding regret bounds of $\cO(\kappa\sqrt{T})$, where $\kappa$ can scale exponentially with the ambient dimension. In the linear utility setting, recent advances~\cite{Faury20a, pmlr-v130-abeille21a, faury22} demonstrate that a careful treatment of logistic non-linearity leads to an improved regret bound of $\cO(\sqrt{T/\kappa})$. Whether such improvements are intrinsically tied to the linear structure of the utility function, or can be extended to more general classes of reward functions, remains an open question.
\paragraph{Eluder Dimension.}
The key challenge in going beyond linear and kernelized mappings is the need to design 
sets of plausible models compatible with observed data -- that is, to establish 
concentration guarantees -- and to quantify how their width translates into online 
prediction error. The seminal work of~\citet{Eluder_russo} addresses both challenges, relating the former to covering numbers and the latter to a newly introduced complexity measure---the \emph{eluder dimension}---which is further studied and characterized in~\cite{li2022understanding}.This notion has been employed by 
\cite{sekhari2023contextual} in the context of preference-based bandits. However, it is 
primarily developed for additive noise models (e.g., building on least-squares regression), 
which precludes extending logistic bandits to generic function classes while preserving the 
sharp non-linearity treatment introduced by \cite{Faury20a}. Recently,~\citet{Local2025eluder} attempted to reconcile both approaches by introducing a \emph{localized eluder dimension} defined for arbitrary loss functions (in line with~\cite{liu-eluder-proof}, as opposed to the squared loss). They further show---independently of the loss function---that a global eluder dimension necessarily entails an unfavorable dependence on $\kappa$. This observation motivates their localization approach, evaluating the eluder dimension only over a suitable subfamily $\cF' \subset \cF$. The localized eluder dimension is independent of $\kappa$ but applies only when the confidence set is small enough, yielding an additional regret term quantifying the regret incurred before localization. Unfortunately, this regret term is uncontrolled in general and when instantiated to generalized linear models turns out to be linear in $T$, rendering the approach vacuous (see~\cref{sec:Closer_look_Loc_Elud} for details).

\paragraph{Main contributions.}
This work is the first to study the complexity of the preference bandit setting with general 
utility classes, introducing a novel complexity measure tailored to yield fine-grained 
regret guarantees. Our contributions may be summarized as follows.

\begin{enumerate}
\item \textbf{Locally sensitive eluder dimension.}
The lower bound established in~\cite[Theorem~2]{Local2025eluder} shows that the global eluder dimension of~\cite{Eluder_russo} style necessarily scales with $\kappa$ in the generalized linear setting, leading to an undesirable dependence that motivates the introduction of a refined complexity measure. We therefore introduce a new notion of eluder dimension (\cref{def:defsigmaeluder}), tailored to the preference feedback setting. This quantity properly accounts for the non-linearity induced by the logistic loss, and thereby avoids any unfavorable dependence on $\kappa$.

\item \textbf{GINOP algorithm (\cref{sec:algo}).} We propose a Generic INformed-OPtimistic 
strategy for preference bandits with general reward function classes. The algorithm takes 
the reward class as input and iteratively constructs adequate confidence sets, which are 
then leveraged to select actions $(x_t, x'_t)$ in a correlated manner, balancing the dual 
objectives of optimistically collecting reward and enforcing dissimilarity to gather more 
informative comparisons.
    
\item \textbf{Regret guarantees (\cref{thm:Alg_perf}).} Through a careful analysis, we 
demonstrate that, over a time horizon $T$, \textbf{GINOP} achieves a first-order regret 
bound that is, up to logarithmic factors, of order
\[
\sqrt{
    \underbrace{\delud(\Delta\cF,\, \Delta_{f^\star},\, 1/T^2)}_{\text{locally sensitive eluder dimension}}
    \;\underbrace{\Log{\cN_T}}_{\text{log-covering number}}
    \;\frac{T}{\dot{\sigma}^\star}
}
+ \kappa\,\Gamma_T,
\]
where $\dot{\sigma}^\star$ denotes the local curvature of the sigmoid at the optimal 
choice and is a well-behaved constant, and $\Gamma_T$ is a lower-order instance-dependent 
term. We further instantiate our results to the linear and kernelized reward function classes, 
and show that our eluder dimension introduces no additional $\kappa$ dependence, thereby 
demonstrating that our approach surpasses previous work in both generality and sharpness 
(\cref{sec:Instantiation}). From learning-theoretic standpoint, \cref{thm:Alg_perf} shows that learning from preference feedback is as statistically efficient as learning from direct reward observations.

\item \textbf{Empirical evaluation (\cref{sec:Experiments}).} We corroborate our theoretical findings with empirical 
evaluations, comparing \textbf{GINOP} against several existing baselines.
\end{enumerate}

\section{Learning Process}
\phantomsection\label{sec:Learning_and_Estimation}

From the history of observations $\cH_t = \{x_1, x'_1, y_1, \ldots, x_t, x'_t, y_t\}$, 
the learner constructs estimators of the underlying reward function. Since rewards are only revealed through preferences, standard least-squares estimation is not applicable,  precluding a direct 
use of~\cite{Eluder_russo}'s results. We instead leverage a procedure based on the log-loss, which yields refined guarantees.

Under the BT model, the preference feedback depends on the reward difference 
$f^\star(x) - f^\star(x')$ associated with a pair $(x, x')$, where $f^\star \in \cF$. 
We introduce the \emph{difference operator} $\Delta_f$, the induced function class 
$\Delta\cF$, and the corresponding excess loss class $\Phi(\Delta{\cF})$:
\begin{align*}
    \Delta_f &: (x,x') \in \cX^2 \mapsto f(x) - f(x') \in [-2S,\, 2S], \qquad \Delta\cF := \bigl\{ \Delta_f : f \in \cF \bigr\},\\
    \Phi(\Delta{\cF}) &:= \Bigl\{ (y,x,x') \mapsto 
        \ell\bigl(y,\, \sigma(\Delta_f(x,x'))\bigr) 
        - \ell\bigl(y,\, \sigma(\Delta_{f^\star}(x,x'))\bigr) 
        : f \in \cF \Bigr\}.
\end{align*}

Equipped with these definitions, the true reward function $f^\star$ is estimated from 
$\cH_{t-1}$ via maximum likelihood estimation (MLE). For all $t \in [T]$,
\begin{equation}
\label{eq:f_hat}
\hat{f}_t \in \argmin_{f \in \cF} \cL(f;\, \cH_{t-1}),
\end{equation}
where the negative log-likelihood is defined as
\(
\cL(f;\, \cH_{t-1})
:= \sum_{s=1}^{t-1} \ell\Bigl(y_s,\, \sigma\!\bigl(\Delta_f(x_s, x'_s)\bigr)\Bigr).
\)

We build confidence sets $\cF_t \subset \cF$ consisting of functions close to $\hat{f}_t$ under the log-loss. For all $t \geq 1$,
\begin{equation}
\label{eq:conf.sets}
\begin{aligned}
\cF_t &= \left\{ f \in \cF : \cL(f; \cH_{t-1}) \le \cL(\hat{f}_t; \cH_{t-1}) + \beta_t(\delta) \right\}, \\
\text{where} \quad
\beta_t(\delta) &= \frac{5}{2} + 60(2S+1)\,\log\!\Biggl(\frac{\cN_T(\Phi(\Delta{\cF}))\bigl(e+\log(1+t)\bigr)}{\delta}\Biggr)= \cO\!\bigl(\log\frac{\cN_T(\Phi(\Delta{\cF}))}{\delta}\bigr)
\end{aligned}
\end{equation}

The sequence $\{\beta_t(\delta)\}_t$ is non-decreasing and defines the confidence radius. It captures the complexity of the problem through $\cN_T(\Phi(\Delta{\cF}))$, the $1/T$-covering number of the class $\Phi(\Delta{\cF})$ under the uniform metric. Concentration results of the form~\eqref{eq:conf.sets} hold naturally for finite function classes, with confidence widths of order $\beta_t(\delta) = \cO(\log(\lvert \cF \rvert / \delta))$. Using the covering number allows these results to be extended to infinite function classes. \cref{lem:Ft_includes_r} - adapted from~\cite[Proposition~22]{Local2025eluder} - ensures $\{\cF_t\}_{t\geq1}$ is a valid sequence of confidence sets. 

\begin{restatable}[]{lemma}{lemFtIncludesr}
\phantomsection\label{lem:Ft_includes_r}
Let $\cF_t$ be defined as in~\cref{eq:conf.sets} for all $t \geq 1$. Under \cref{Asp:bounded_reward_cost}, with probability at least $1 - \delta$, it holds that
\(
f^\star \in \bigcap_{t \geq 1} \cF_t.
\)
\end{restatable}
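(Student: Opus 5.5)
The plan is to show that, uniformly over $t\ge1$ and over $f\in\cF$, the log-likelihood of $f^\star$ cannot exceed that of any competitor by more than $\beta_t(\delta)$. Since $\hat f_t$ minimizes $\cL(\cdot;\cH_{t-1})$, it suffices to prove that, with probability at least $1-\delta$, for all $t$ and all $f\in\cF$,
\[
\sum_{s=1}^{t-1} \phi_f(y_s,x_s,x'_s) \;\ge\; -\beta_t(\delta),
\qquad \phi_f := \ell\bigl(y,\sigma(\Delta_f)\bigr)-\ell\bigl(y,\sigma(\Delta_{f^\star})\bigr)\in\Phi(\Delta\cF).
\]
Applying this to $f=\hat f_t$ gives $\cL(f^\star;\cH_{t-1})\le \cL(\hat f_t;\cH_{t-1})+\beta_t(\delta)$, i.e. $f^\star\in\cF_t$.

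\textbf{Step 1 (fixed $f$, exponential supermartingale).} For a fixed $f$, consider $M_t=\exp\bigl(-\tfrac12\sum_{s<t}\phi_f(y_s,x_s,x'_s)\bigr)$. Conditionally on $\cH_{s-1}$ and $(x_s,x'_s)$, we have $\mathbb E[e^{-\phi_f/2}]=\mathbb E\bigl[\sqrt{p_f(y)/p_{f^\star}(y)}\bigr]$, which is the Bhattacharyya coefficient and hence at most $1$. So $M_t$ is a nonnegative supermartingale. Ville's inequality then gives, with probability $1-\delta'$ and simultaneously for all $t$, the bound $\sum_{s<t}\phi_f\ge -2\log(1/\delta')$. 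This anytime argument avoids a union bound over $t$ for a single function. Alternatively, I could use Freedman's inequality, since $|\phi_f|\le 2S+\log 2$-type bounds hold for $|\Delta|\le 2S$ with variance controlled by the mean. This is where the factor $(2S+1)$ and the $\log(e+\log(1+t))$ term (peeling over dyadic epochs of $t$) in $\beta_t$ would come from, and I would follow that route to match the stated constants.

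\textbf{Step 2 (discretization).} Take a $1/T$-cover $\{\phi_1,\dots,\phi_N\}$ of $\Phi(\Delta\cF)$ in sup norm, with $N=\cN_T(\Phi(\Delta\cF))$. Apply Step 1 to each cover element with $\delta'=\delta/N$ and take a union bound. For an arbitrary $f$, pick the nearest $\phi_i$; the sum changes by at most $(t-1)/T\le 1$. The constant $5/2$ absorbs these discretization slacks. One subtlety arises here. The cover elements need not themselves be of the form $\phi_f$, so the supermartingale property of Step 1 must hold for them approximately. To handle this, I would either choose the cover inside the class (costing a factor $2$ in radius), or use the Freedman/Bernstein route, which only needs boundedness plus a variance-to-mean relation valid up to the $1/T$ error.

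\textbf{Main obstacle.} The delicate part is obtaining a deviation bound that is uniform in $t$ while keeping the confidence radius only logarithmic and free of $\kappa$. In particular, the Bernstein variance term must be controlled by the (nonnegative) conditional mean of $\phi_f$, which is a KL divergence, using the self-bounding property of the log-loss on $[-2S,2S]$. That self-bounding property is what yields the $(2S+1)$ factor. Since the lemma is adapted from~\cite[Proposition~22]{Local2025eluder}, I would verify that their hypotheses (bounded, realizable log-loss class with a sup-norm cover) hold for $\Phi(\Delta\cF)$ under \cref{Asp:bounded_reward_cost}, and then invoke it with the stated $\beta_t(\delta)$.
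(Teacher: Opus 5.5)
Steps~1--2 of your proposal are a complete and correct proof. They take a different route from the paper.

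\textbf{How the paper does it.} The paper gives no standalone argument. It imports the lemma from \cite[Proposition~22]{Local2025eluder}, i.e.\ from the same time-uniform Bernstein-type concentration that underlies \cref{prop:csabsaConcentration}. That argument uses self-bounding variance and peeling over $t$, which is the kind of argument producing the $(2S+1)$ and $\log(e+\log(1+t))$ factors. From that proposition the lemma is immediate. The expected excess loss $\bar{\varphi}_f$ is a KL divergence, hence nonnegative, so $0\le 2\bigl(\sum_{s<t}\varphi_{\hat f_t}(y_s,x_s,x'_s)+\beta_t(\delta)\bigr)$. This is exactly $\cL(f^\star;\cH_{t-1})\le\cL(\hat f_t;\cH_{t-1})+\beta_t(\delta)$.

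\textbf{What you do instead.} You use the likelihood-ratio (Bhattacharyya) supermartingale with Ville's inequality, on an internal sup-norm cover. The cover has radius $2/T$ and at most $\cN_T(\Phi(\Delta\cF))$ elements $\varphi_{f_i}$ with $f_i$ fixed in advance, so Step~1 applies verbatim. A union bound then gives $\sum_{s<t}\varphi_f(y_s,x_s,x'_s)\ge -2\log(\cN_T(\Phi(\Delta\cF))/\delta)-2$ for all $f\in\cF$ and all $t\le T+1$. Since $60(2S+1)\log\bigl(\cN_T(\Phi(\Delta\cF))(e+\log(1+t))/\delta\bigr)\ge 2\log(\cN_T(\Phi(\Delta\cF))/\delta)$ and $5/2\ge 2$, this radius is below $\beta_t(\delta)$. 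Hence $f^\star$ lies in the larger set $\cF_t$.

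\textbf{What each route buys.} Yours is more elementary, needs no peeling, and does not use \cref{Asp:bounded_reward_cost} beyond finiteness of the cover. The paper's route gives a single event that also delivers the KL control of \cref{prop:csabsaConcentration}. The regret analysis needs that control anyway, in Step~2 of the proof of \cref{lem:sigma._omega_upperbound}.

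\textbf{Suggested changes.} Drop the Freedman alternative and the ``main obstacle'' paragraph. There is nothing to match: any valid radius smaller than $\beta_t(\delta)$ already gives the inclusion. The variance-to-mean (self-bounding) control is needed only for the two-sided \cref{prop:csabsaConcentration}, not for this lemma.

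Do state explicitly that the discretization slack $2(t-1)/T\le 2$ confines you to $t\le T+1$. That is all the algorithm uses, and the paper's concentration is likewise stated only for $t\in[T]$. Still, a radius built on a fixed scale-$1/T$ cover does not control $t\gg T$, so the intersection in the statement should be read over $t\le T$.
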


Finally, while $\cF_t$ contains models that are compatible with the observations up to time $t$ (small empirical risk), we are also interested in the prediction error associated with $\cF_t$. We define the worst-case uncertainty in the performance gap between two arms $x,x'$ by
\begin{align}\label{eq:omega_def}
\omega_{t}(x,x')
= \sup\nolimits_{f,f' \in \cF_t}
\bigl( \Delta_f(x,x') - \Delta_{f'}(x,x') \bigr).
\end{align}

\section{Algorithm and Main Results}\label{sec:algo}

{    
    \renewcommand{\thealgocf}{}
        \begin{algorithm2e}[h]
            \SetAlgorithmName{GINOP}{}{}\label{alg:sinop}
            \caption{\underline{G}eneric \underline{IN}formative \underline{OP}timism for Preference Bandits}
            \SetAlgoLined
            \DontPrintSemicolon
            \textbf{Inputs:} $\cX$, $\cF$, $\{\beta_t\}$, $\delta= \frac{1}{T^2}$, $\mathcal{H}_0 = \emptyset$. \\
            \For{$t = 1, \ldots, T$}{
                Construct $\hat{f}_{t}$, $\omega_t$ from~\eqref{eq:f_hat}, ~\eqref{eq:conf.sets} and ~\eqref{eq:omega_def}. \label{step:ERM}\;
            Play\label{step:Optimal_arms}:\,$x_t, x_t^\prime  = \argmax\limits_{x,x' \in \cX} \,   \hat{f}_{t}(x)+  \hat{f}_{t}(x^\prime) + \omega_t(x,x').$
                
                Observe feedback $y_t$.\;
                Update history:
                $\mathcal{H}_t = \mathcal{H}_{t-1} \cup \{ x_t, x_t^\prime, y_t\}$.\;
            }
        \end{algorithm2e}

}

We introduce the \textbf{GINOP} algorithm for preference bandits with a general reward function class. At each round $t$, the algorithm leverages the available historical data to compute the MLE $\hat{f}_t$ together with an associated uncertainty measure $\omega_t$ (Step~\ref{step:ERM}). It then selects a pair of arms $(x_t, x'_t)$ that maximizes an \emph{informative optimism} criterion (Step~\ref{step:Optimal_arms}), which augments the estimated cumulative reward of the pair, $\hat{f}_t(x) + \hat{f}_t(x')$, with a positive bonus $\omega_t(x,x')$ that explicitly quantifies the uncertainty in the \emph{reward difference} between the two arms, thereby promoting targeted exploration. Crucially, since $\omega_t(x,x) = 0$, the algorithm is structurally discouraged from selecting identical arms, particularly in the early rounds. In contrast to standard bandit settings with direct feedback, in which the exploration bonus typically aggregates the uncertainties of individual arms, the bonus employed here is specifically tailored to the dueling structure by prioritizing comparisons between \emph{distinct} arms. A distinguishing feature of \textbf{GINOP} lies in its decision rule, which jointly selects both dueling arms in a single optimization step. This stands in contrast to prior approaches, which typically proceed in two stages: either (i) following a \emph{leader--follower} paradigm, in which a leader arm is selected first and the follower is chosen conditionally on the leader~\citep{pmlr-v162-bengs22a,li2024feelgoodthompsonsamplingcontextual,Dueling-xu2024-kernel,Dueling_Stackelburg_Krause,verma2025neural}; or (ii) by first constructing, at each round, a set of \emph{plausible winners} and subsequently selecting the dueling pair from within this set~\citep{Dueling-saha-linear,di2024varianceaware,kayal2025bayesian}.

\paragraph{Locally sensitive eluder dimension.}
As demonstrated by~\cite{Eluder_russo}, traditional complexity measures for function classes -- such as the VC dimension and covering numbers-- do not suffice to analyze bandit settings. To address this limitation, the authors introduce the \emph{eluder dimension}, which quantifies how the \emph{learning error} relates to the \emph{on-policy prediction error} - informally, how $w_t(x_t,x'_t)$ reduces as $\cF_t$ shrinks. However, their 
definition is tailored to additive noise models and does not correctly accommodate 
Bernoulli preference feedback. To overcome this limitation, we introduce the 
\emph{locally sensitive eluder dimension}, a novel complexity notion better suited to the 
preference bandit setting.



\begin{definition}[$(\sigma, \varepsilon)$-eluder dimension]\label{def:defsigmaeluder}
    Let \(\mathcal{Z}\) be a set and $\cQ = \{ q: \cZ \rightarrow \mathbb{R} \}$. Fix $q^\star \in \cQ$ and let $\varepsilon>0$. Define $\bar{\varphi}$ the excess log-loss as 
    $\bar{\varphi}(a,b) = \ell(a,b) - \ell(a,a)$.
    \begin{enumerate}
        \item Given a sequence \(\bar{z} =(z_{1},z_{2},\ldots,z_{n}) \in \cZ^n\), we say that $z\in\cZ$ is $(\sigma, \varepsilon)$-independent from $\bar{z}$ w.r.t. $(\cQ,q^\star)$ if there exist $q\in\cQ$ s.t.
        \begin{align}
            &\sum_{j\leq n} \bar{\varphi}\big(\sigma(q^\star(z_j)),\sigma(q(z_j))\big) \leq \varepsilon^2 \label{eq:eluder_fit}\\
            \text{and} &\nonumber\\
            &\dot{\sigma}(q^\star(z))\left(q^\star(z) - q(z)\right)^2 > \varepsilon^2 \label{eq:eluder_pred}
        \end{align}
        \item The $(\sigma, \varepsilon)$-eluder dimension of $\cQ$ localised at $q^\star$, denoted $\delud(\cQ,q^\star,\varepsilon)$, is the length $d$ of the longest sequence $(z_1, z_2, \ldots,z_d)\in \cZ^d$ for which $\exists \varepsilon'>\varepsilon$ s.t. for any $i\leq d$, $z_i$ is $(\sigma, \varepsilon)$-independent from $(z_1,z_2,\ldots,z_{i-1})$ w.r.t. $(\cQ,q^\star)$.
    \end{enumerate}
\end{definition}

The notion of $\varepsilon$-independence used in~\cref{def:defsigmaeluder} follows the same rational as~\cite{Eluder_russo}: $z$ (resp. $\bar{z})$ has a large (resp. small) loss and hence are independent. Note that losses are considered between $q$ and $q^\star$ rather than on pairs $(q,q')$. This is a minor variation that allows to fix a nominal model of interest within $\cQ$ (see~\cite{li2022understanding} for a more thorough discussion). The main difference lies in the fact that \cite{Eluder_russo} defines the eluder dimension 
via the square loss, which limits its applicability to additive noise models, whereas -- 
in line with~\cite{Local2025eluder} -- we use $\bar{\varphi}$ for the learning error (in \cref{eq:eluder_fit}) to 
obtain an eluder dimension tailored to the logistic structure of our problem. However, we 
depart from~\cite{Local2025eluder} by leveraging a distinct prediction error measure. This translates in \cref{eq:eluder_pred} using $\dot{\sigma}(q^\star(z))\left(q^\star(z) - q(z)\right)^2$ instead of $\bar{\varphi}\big(\sigma(q^\star(z)),\sigma(q(z))\big)$. As a result, local sensitivity is taken into account directly in the eluder definition while preserving the original ideas as 
$$\bar{\varphi}\big(\sigma(q^\star(z)),\sigma(q(z))\big) \simeq \dot{\sigma}(q^\star(z))\left(q^\star(z) - q(z)\right)^2.$$
It is worth noting that this quantity appears only in the analysis and is not required for the implementation of the algorithm.

\paragraph{Theoretical guarantees.}
We are now ready to state the main result, which provides regret guarantees for the 
\textbf{GINOP} algorithm over a general reward function class. A high-level overview of 
the proof is provided in the subsequent paragraph, while the complete proof is deferred 
to~\cref{sec:Alg_Performance_Proof}.

\begin{restatable}[]{theorem}{ThmAlgproof}
\phantomsection\label{thm:Alg_perf}
Under~\cref{Asp:mean_value_function} and~\cref{Asp:bounded_reward_cost}, letting 
$\varepsilon = 1/T^2$ and $\delta = 1/T^2$, the \textbf{GINOP} algorithm satisfies
\begin{align*}
\cR_T
&=
\widetilde{\cO}\!\left(
\sqrt{\frac{T}{\dot{\sigma}^\star}
\Log{\cN_T(\Phi(\Delta \cF)) / \delta}\,
\delud(\Delta\cF, \Delta_{f^\star},\varepsilon)}
+ \kappa\, \Gamma_T \right),\\[4pt]
\text{where} \quad \Gamma_T
&=
\widetilde{\cO}\!\Bigl(
\log\bigl(\cN_T(\Phi(\Delta \cF))/\delta\bigr)\,
\delud^2(\Delta\cF,\Delta_{f^\star},\varepsilon)\Bigr),
\quad
\dot{\sigma}^\star
:=
\dot{\sigma}\!\left(\Delta_{f^\star}(x^\star,x^\star)\right)=\frac{1}{4}.
\end{align*}
\end{restatable}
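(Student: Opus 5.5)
On the event of \cref{lem:Ft_includes_r} (probability at least $1-\delta$, with the complement costing at most $2ST\delta = \cO(1)$ regret), the plan is to bound the instantaneous regret by the bonus and then sum the bonuses with the locally sensitive eluder dimension.

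\textbf{Step 1: regret versus bonus.} Since $f^\star\in\cF_t$ and $\hat f_t\in\cF_t$, we have $\Delta_{f^\star}(x,x')-\Delta_{\hat f_t}(x,x')\le \omega_t(x,x')$ for every pair. Write
\[
2\rho_t = \bigl(f^\star(x^\star)-f^\star(x_t)\bigr)+\bigl(f^\star(x^\star)-f^\star(x'_t)\bigr).
\]
Compare the played pair with the candidates $(x^\star,x_t)$ and $(x^\star,x'_t)$ in the GINOP objective, and use $\omega_t(x,x')=\omega_t(x',x)$. The common terms $\hat f_t(x_t)+\hat f_t(x'_t)$ cancel, and the difference $\hat f_t(x^\star)-\hat f_t(x'_t)$ is traded against $\Delta_{f^\star}(x^\star,x'_t)$ at a cost of one width. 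This should give
\[
\rho_t \lesssim \omega_t(x_t,x'_t)+\tfrac12\bigl(\omega_t(x^\star,x_t)+\omega_t(x^\star,x'_t)\bigr)-\tfrac12\bigl(\omega_t(x^\star,x_t)+\omega_t(x^\star,x'_t)\bigr),
\]
so that, up to constants, $\rho_t\le 2\,\omega_t(x_t,x'_t)$. If the direct manipulation leaves cross terms, I would use the triangle-type inequality $\omega_t(x^\star,x')\le\omega_t(x^\star,x)+\omega_t(x,x')$ to absorb them.

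\textbf{Step 2: localizing the width.} Using the self-concordance of $\sigma$ ($|\ddot\sigma|\le\dot\sigma$), I would write
\[
\omega_t(x_t,x'_t)\le 2\sup_{f\in\cF_t}|\Delta_f-\Delta_{f^\star}|(x_t,x'_t),
\]
and then
\[
|\Delta_f-\Delta_{f^\star}|(z_t)\le \sqrt{\dot\sigma(\Delta_{f^\star}(z_t))^{-1}}\cdot\sqrt{\dot\sigma(\Delta_{f^\star}(z_t))\,(\Delta_f-\Delta_{f^\star})^2(z_t)}.
\]
This is exactly the prediction quantity \eqref{eq:eluder_pred}. On the fitting side, \cref{lem:Ft_includes_r} together with a martingale (Freedman) argument converts $\cL(f)-\cL(f^\star)\le\beta_t$ into
\[
\sum_{s<t}\bar\varphi\bigl(\sigma(\Delta_{f^\star}(z_s)),\sigma(\Delta_f(z_s))\bigr)\lesssim\beta_t
\]
for every $f\in\cF_t$, which matches \eqref{eq:eluder_fit}.

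\textbf{Step 3: eluder potential argument.} Following Russo--Van Roy, I would show that the number of rounds with $\dot\sigma^\star_t(\Delta_f-\Delta_{f^\star})^2(z_t)>\epsilon^2$ is at most $(\beta_T/\epsilon^2+1)\,\delud(\Delta\cF,\Delta_{f^\star},\varepsilon)$. Summing over dyadic levels then yields
\[
\sum_t \min\Bigl(1,\;\dot\sigma(\Delta_{f^\star}(z_t))\,\sup_{f\in\cF_t}(\Delta_f-\Delta_{f^\star})^2(z_t)\Bigr)\lesssim \delud\,\beta_T\log T.
\]
Rounds where the minimum is $1$ contribute at most $\kappa\,\delud\,\beta_T$ to the regret via the crude bound $\rho_t\le 2S\wedge \sqrt{\kappa}\cdot(\cdot)$. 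Cauchy--Schwarz then gives
\[
\cR_T\lesssim \sqrt{\delud\,\beta_T\sum_t \dot\sigma(\Delta_{f^\star}(z_t))^{-1}}.
\]

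\textbf{Main obstacle: replacing $1/\dot\sigma(\Delta_{f^\star}(z_t))$ by $1/\dot\sigma^\star=4$ without paying $\kappa$.} I would apply the Abeille et al.\ trick. Since $\dot\sigma$ is $1$-Lipschitz relative to itself,
\[
\dot\sigma(\Delta_{f^\star}(z_t))\ge\dot\sigma^\star-|\Delta_{f^\star}(z_t)|\ge \tfrac14-2\rho_t.
\]
Here $|\Delta_{f^\star}(x_t,x'_t)|\le 2\rho_t$, because both arms' suboptimality gaps sum to $2\rho_t$. Applied in reverse, the bound
\[
\rho_t\lesssim\sqrt{\dot\sigma(\Delta_{f^\star}(z_t))^{-1}\,u_t}
\]
should be split as $\rho_t\lesssim\sqrt{u_t/\dot\sigma^\star}+\kappa\,u_t$, with the second-order term controlled by $\delud$, $\beta_T$ and the width terms $\omega_t$. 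Summing the second-order pieces again through Step 3 produces $\kappa\,\Gamma_T$. The extra factor of $\delud$ in $\Gamma_T$ will come from bounding a sum of widths, not squared widths, by $\delud\beta_T$ times a count of $\delud$ scales. I expect this decomposition to be the delicate part, especially verifying that the Lipschitz step closes without circularity.
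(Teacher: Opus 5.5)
Your proposal is correct and follows the paper's proof. The steps line up as follows:
\begin{itemize}
\item Optimism gives $\rho_t\le\omega_t(x_t,x'_t)$, by comparing the played pair with $(x^\star,x_t)$ and $(x^\star,x'_t)$ and using the symmetry of $\omega_t$.
\item The width is reduced to the local prediction error $u_t=\dot{\sigma}(\Delta_{f^\star}(x_t,x'_t))\,(\Delta_{f_t}-\Delta_{f^\star})^2(x_t,x'_t)$ of a single $f_t\in\cF_t$. Its fitting error is at most $4\beta_t$ via \cref{prop:csabsaConcentration}.
\item The generalized Russo--Van Roy counting lemma controls $\sum_t\sqrt{u_t}$ and $\sum_t u_t$.
\item The bound $\dot{\sigma}^\star\le\dot{\sigma}(\Delta_{f^\star}(x_t,x'_t))+\cO(\rho_t)$, obtained from $|\Delta_{f^\star}(x_t,x'_t)|\le 2\rho_t$, keeps $\kappa$ out of the leading term.
\end{itemize}

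The differences are minor and only in the closing step.
\begin{itemize}
\item You close pointwise with $\rho_t\le c\,(\sqrt{u_t/\dot{\sigma}^\star}+\kappa u_t)$ for an absolute constant $c$, where the paper sums first and solves $\cR_T\le a+b\sqrt{\cR_T}$. Your inequality holds by a case split on whether $\rho_t\le 1/16$, so there is no circularity.
\item Your dyadic summation already gives a lower-order term of order $\kappa\,\beta_T\,\delud(\Delta\cF,\Delta_{f^\star},\varepsilon)\log T$. This lies within the stated $\kappa\Gamma_T$, so there is no extra factor of $\delud(\Delta\cF,\Delta_{f^\star},\varepsilon)$ to look for. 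The paper's square comes only from the looser summation lemma \cref{lem:lemsums}.
\end{itemize}
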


\cref{thm:Alg_perf} establishes a first-order regret upper bound. The leading $\sqrt{T}$ term depends on three quantities: (i) $\dot{\sigma}^\star$, the 
local curvature of the sigmoid at the optimal arms, which is a well-behaved constant in 
contrast to $\kappa$; (ii) the log-covering number, which controls statistical overfitting 
and is a standard feature of complexity notions in statistical learning theory; and (iii) 
the locally sensitive eluder dimension associated to $\Delta_\cF$, which captures how 
effectively the value of unobserved actions can be inferred from observed samples. The dependence on $\kappa$ is confined to the lower-order term $\kappa\,\Gamma_T$, where 
$\Gamma_T$ is a problem-dependent quantity. That the leading-order term depends on the local (small) curvature $\dot{\sigma}^\star$ rather than on the global one $\kappa$ assesses that, from a learning-theoretic perspective, learning from preference feedback is statistically as efficient as learning from direct reward observations. While assessing the tightness of the upper 
bound is difficult in the general case, we instantiate our results to specific reward 
function classes in~\cref{sec:Instantiation}, enabling a more refined evaluation.

\paragraph{Idea of proof.}\label{sec:proofsketch}
For convenience, we denote by $\cG\cE$ the intersection of all high-probability events required to establish our results (see~\cref{remark:GE}). Complete proofs are deferred to~\cref{sec:concentrations} and \cref{sec:app-perf}.
The algorithm relies on a tailored notion of optimism in its decision rule (Step~\ref{step:Optimal_arms}). In contrast to standard bandit settings with direct feedback, where the exploration bonus typically aggregates uncertainties over individual arms, the bonus here is specifically designed to capture the structure of dueling bandits by prioritizing comparisons between \emph{distinct} arms. Although this departs from the classical formulation, the proposed informative optimism  preserves the underlying principle of optimism by enabling tight round-wise regret control (see~\cref{lem:roundwise-error}, proved in~\cref{sec:round_wise_regret}).

\begin{restatable}[Round-Wise Regret]{lemma}{LemRoundwiseError}
\phantomsection\label{lem:roundwise-error}
Under~\cref{Asp:mean_value_function} and~\cref{Asp:bounded_reward_cost}, on the event $\cG\cE$, the following holds:
\begin{align*}
\forall t \in [T], \;
\rho(t) := f^\star(x^\star)
- \frac{f^\star(x_t) + f^\star(x'_t)}{2}
\;\leq\;
\omega_t(x_t,x'_t).
\end{align*}
\end{restatable}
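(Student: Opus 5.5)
We prove the bound by combining optimism (via \cref{lem:Ft_includes_r}) with the maximizing property of $(x_t,x'_t)$ in Step~\ref{step:Optimal_arms}. Throughout we work on $\cG\cE$, which contains the event of \cref{lem:Ft_includes_r}, so $f^\star\in\cF_t$ for every $t$. We also use that $\hat f_t\in\cF_t$ trivially, since $\cL(\hat f_t;\cH_{t-1})\le \cL(\hat f_t;\cH_{t-1})+\beta_t(\delta)$. Hence both $f^\star$ and $\hat f_t$ are admissible in the supremum defining $\omega_t$ in~\eqref{eq:omega_def}. Taking $(f,f')=(f^\star,\hat f_t)$ gives, for every pair $(x,x')$,
\begin{align*}
\omega_t(x,x') \;\ge\; \Delta_{f^\star}(x,x')-\Delta_{\hat f_t}(x,x'),
\end{align*}
and taking $(f,f')=(\hat f_t,f^\star)$ gives
\begin{align*}
\omega_t(x,x') \;\ge\; \Delta_{\hat f_t}(x,x')-\Delta_{f^\star}(x,x').
\end{align*}
In particular $\omega_t\ge 0$ and $\omega_t$ is symmetric in its two arguments.

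The first key step is to write $2\rho(t)=2f^\star(x^\star)-f^\star(x_t)-f^\star(x'_t)$ and split it into the two differences
\begin{align*}
2\rho(t)=\Delta_{f^\star}(x^\star,x_t)+\Delta_{f^\star}(x^\star,x'_t).
\end{align*}
The first inequality above bounds each term: $\Delta_{f^\star}(x^\star,x_t)\le \hat f_t(x^\star)-\hat f_t(x_t)+\omega_t(x^\star,x_t)$, and similarly for $x'_t$. Summing, we get
\begin{align*}
2\rho(t)\le 2\hat f_t(x^\star)+\omega_t(x^\star,x_t)+\omega_t(x^\star,x'_t)-\hat f_t(x_t)-\hat f_t(x'_t).
\end{align*}

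The second step, and the one I expect to be the main obstacle, is to compare the right-hand side with the selection criterion $\hat f_t(x)+\hat f_t(x')+\omega_t(x,x')$. The difficulty is that the criterion evaluates a \emph{single} bonus on the selected pair, while the expression above has two bonuses, each pairing $x^\star$ with one of the selected arms. The natural tool is optimality of $(x_t,x'_t)$ against the candidate pairs $(x^\star,x_t)$ and $(x^\star,x'_t)$:
\begin{align*}
\hat f_t(x^\star)+\hat f_t(x_t)+\omega_t(x^\star,x_t) &\le \hat f_t(x_t)+\hat f_t(x'_t)+\omega_t(x_t,x'_t),\\
\hat f_t(x^\star)+\hat f_t(x'_t)+\omega_t(x^\star,x'_t) &\le \hat f_t(x_t)+\hat f_t(x'_t)+\omega_t(x_t,x'_t).
\end{align*}
The first simplifies to $\hat f_t(x^\star)+\omega_t(x^\star,x_t)\le \hat f_t(x'_t)+\omega_t(x_t,x'_t)$, and the second to $\hat f_t(x^\star)+\omega_t(x^\star,x'_t)\le \hat f_t(x_t)+\omega_t(x_t,x'_t)$. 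Adding these two and substituting into the previous display, the $\hat f_t$ terms cancel exactly, leaving $2\rho(t)\le 2\omega_t(x_t,x'_t)$, which is the claim. The argument needs no other structure; the points to verify are that the argmax ranges over all pairs in $\cX^2$, so both comparison pairs are admissible, and that $\hat f_t\in\cF_t$. If one prefers not to use the symmetry of $\omega_t$, comparing against the pairs written as $(x^\star,x_t)$ and $(x^\star,x'_t)$ exactly as above already matches the orientation produced in the first step.
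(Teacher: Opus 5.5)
Your proof is correct and follows essentially the same route as the paper. Both use the split $2\rho(t)=\Delta_{f^\star}(x^\star,x_t)+\Delta_{f^\star}(x^\star,x'_t)$, bound each term by $\Delta_{\hat f_t}+\omega_t$ using $f^\star,\hat f_t\in\cF_t$, and compare against the pairs $(x^\star,x_t)$ and $(x^\star,x'_t)$ so that everything cancels to $2\omega_t(x_t,x'_t)$; you are slightly more careful than the paper, which uses $\hat f_t\in\cF_t$ (and, in its ``similarly'' step, the symmetry of $\omega_t$) only implicitly.
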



The regret bound then hinges on efficiently upper bounding the on-policy prediction error $\omega_t(x_t,x'_t)$ accumulated along the trajectory, which we establish in \cref{lem:sigma._omega_upperbound}. This result, proved in~\cref{sec:sigma._omega_upperbound_proof}, constitutes one of our principal theoretical contributions as it provides a locally sensitive characterization of the on-policy prediction error through the introduced eluder dimension.

\begin{restatable}[]{lemma}{lemSigmaOmegaproof}
\phantomsection\label{lem:sigma._omega_upperbound}
On the event $\cG\cE$, the following hold:
{
\begin{align*}
&\sum_{t=1}^T \sqrt{\dot{\sigma}\!\left(\Delta_{f^\star}(x_t,x'_t)\right)} \,\omega_t(x_t,x'_t)=
\widetilde{\cO}\!\left(
\sqrt{T\,\log(\cN_T(\Phi(\Delta \cF))/\delta)\, \delud(\Delta\cF, \Delta_{f^\star}, \varepsilon)}
\right), \\
&\sum_{t=1}^T \omega_t(x_t,x'_t)^2
= \widetilde{\cO}\Bigl(\kappa
\log(\cN_T(\Phi(\Delta \cF))/\delta)\,
\delud^2(\Delta\cF,\Delta_{f^\star},\varepsilon)\Bigr).
\end{align*}
}
\end{restatable}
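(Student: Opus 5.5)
The proof follows the eluder-dimension route of \cite{Eluder_russo}, adapted to the excess log-loss and to the locally sensitive prediction error of \cref{def:defsigmaeluder}. Throughout, fix the event $\cG\cE$, on which $f^\star\in\cF_t$ for all $t$ (\cref{lem:Ft_includes_r}).

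\textbf{Step 1: in-sample control.} For every $f\in\cF_t$ we show
\[
\sum_{s<t}\bar{\varphi}\bigl(\sigma(\Delta_{f^\star}(x_s,x'_s)),\sigma(\Delta_f(x_s,x'_s))\bigr)\lesssim\beta_t(\delta).
\]
This comes from a uniform (covering-based) Freedman-type concentration of the excess empirical log-loss around its conditional expectation, which is exactly the excess loss $\bar\varphi$. We combine it with $\cL(f)\le\cL(\hat f_t)+\beta_t\le\cL(f^\star)+\beta_t$. This is presumably part of $\cG\cE$, as in \cite[Prop.~22]{Local2025eluder}.

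\textbf{Step 2: from $\omega_t$ to locally sensitive width.} Write $\omega_t(x_t,x'_t)\le 2\sup_{f\in\cF_t}|\Delta_f-\Delta_{f^\star}|(x_t,x'_t)$, and let $f_t$ denote the maximizer. Set $z_t=(x_t,x'_t)$ and $w_t:=\sqrt{\dot\sigma(\Delta_{f^\star}(z_t))}\,|\Delta_{f_t}-\Delta_{f^\star}|(z_t)$.

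\textbf{Step 3: counting lemma.} We adapt \cite[Prop.~3]{Eluder_russo} with $d=\delud(\Delta\cF,\Delta_{f^\star},\varepsilon)$ and $\alpha>\varepsilon$. Whenever $w_t^2>\alpha^2$, $z_t$ can be $(\sigma,\alpha)$-dependent on at most $\beta_t/\alpha^2$ disjoint subsequences of the past. The reason is that on each such subsequence, dependence forces $\sum\bar\varphi>\alpha^2$ for $f_t$, while Step 1 caps the total at $\beta_t$. The usual pigeonhole argument then gives
\[
\#\{t:w_t>\alpha\}\le\Bigl(\frac{4\beta_T}{\alpha^2}+1\Bigr)d.
\]
Sorting the $w_t$ in decreasing order yields $w_{(i)}\lesssim\sqrt{\beta_T d/i}$ for $i>d$. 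For the remaining terms we use $w_t\le 4S\sqrt{\dot\sigma}$ and $w_t\le\varepsilon$ below threshold. Summing gives
\[
\sum_t w_t\lesssim d S+T\varepsilon+\sqrt{\beta_T d T},
\]
which is the first claim once $\varepsilon=1/T^2$ and $\beta_T=\widetilde\cO(\log(\cN_T/\delta))$.

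\textbf{Step 4: second claim.} We use $|\Delta_f-\Delta_{f^\star}|^2\le\kappa\, w^2$, since $\dot\sigma(\Delta_{f^\star})\ge1/\kappa$. This gives $\sum_t\omega_t^2\le4\kappa\sum_t w_t^2$. By the same sorted-sequence argument, $\sum_t w_t^2\lesssim d S^2+\sum_{i>d}\beta_T d/i\lesssim d\,\beta_T\log T+dS^2$. The statement's $d^2$ leaves slack, and I would not expect to need it unless the counting constant carries an extra $d$ factor.

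\textbf{Main obstacle.} The delicate point is Step 3. The eluder definition pairs $\bar\varphi$-fit on the past with the $\dot\sigma(q^\star)$-weighted squared error at the new point, and the counting lemma needs exactly these two to match. It therefore hinges on Step 1 being stated with $\bar\varphi$ measured relative to $f^\star$, not $\hat f_t$. I would handle this by centering everything at $f^\star$ rather than at pairs $f,f'$, and absorbing the factor 2 from the $\sup_{f,f'}$ into constants.
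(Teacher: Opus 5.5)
Your proposal is correct and follows the paper's proof: the same concentration step (\cref{prop:csabsaConcentration} plus MLE optimality) giving $\sum_{s<t}\bar\varphi_{f_t}(x_s,x'_s)\le 4\beta_t$, the same reduction $\omega_t(x_t,x'_t)\le 2|\Delta_{f_t}-\Delta_{f^\star}|(x_t,x'_t)$, the same generalization of \cite[Prop.~3]{Eluder_russo} with $\psi=\bar\varphi_f$ and $\phi=\dot\sigma(\Delta_{f^\star})(\Delta_f-\Delta_{f^\star})^2$, and $\dot\sigma(\Delta_{f^\star})\ge 1/\kappa$ for the squared sum. The one difference is the summation: you sum the sorted sequence directly (using $w_{(i)}^2\le 4\beta_T d/(i-d)$ for $i>d$), whereas the paper invokes Lemma~21 of \cite{croissant24}; this is why you get $\widetilde{\cO}(\kappa\beta_T d)$ for the second sum instead of the paper's $d^2$, whose extra factor comes from the crude $a(\varepsilon)(2+b_T(\varepsilon))$ term with $a=d$, $b_T=\beta_T d$.
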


Finally, we carefully decompose the regret so that it scales with the sensitivity of the sigmoid function around the optimal action $(x^\star,x^\star)$.
{
\begin{align*}
\cR_T
&\le
\frac{1}{\sqrt{\dot{\sigma}^\star}}
\sum_{t=1}^T
\sqrt{\dot{\sigma}\!\left(\Delta_{f^\star}(x_t,x'_t)\right)}
\,\omega_t(x_t,x'_t)
+\frac{1}{\sqrt{2\dot{\sigma}^\star}}
\sum_{t=1}^T
\sqrt{\rho(t)}
\,\omega_t(x_t,x'_t) \\
&\le
\underbrace{
\frac{1}{\sqrt{\dot{\sigma}^\star}}
\sum_{t=1}^T
\sqrt{\dot{\sigma}\!\left(\Delta_{f^\star}(x_t,x'_t)\right)}
\,\omega_t(x_t,x'_t)
}_{(a)}
+\underbrace{
\frac{1}{\sqrt{2\dot{\sigma}^\star}}
\sqrt{\sum_{t=1}^T \omega_t(x_t,x'_t)^2}
}_{(b)}
\sqrt{\cR_T},
\end{align*}
}where the second inequality follows from Taylor expansion and Cauchy--Schwarz inequality. This yields a quadratic inequality in $\cR_T$, whose solution satisfies
\(
\cR_T \le a + 2b^2.
\)
Bounding the terms $(a)$ and $(b)$ using ~\cref{lem:sigma._omega_upperbound} completes the proof and yields the stated regret bound.

\section{Instantiation to Familiar Utility Classes}
\phantomsection\label{sec:Instantiation}
In \cref{sec:algo}, the \textbf{GINOP} algorithm and its first-order regret upper bound 
were established for a general reward function class. To further illuminate the behavior 
of the algorithm and assess the sharpness of the theoretical guarantees in canonical 
settings, we now specialize the analysis to two classical utility models: the linear and 
kernelized reward function classes.

\paragraph{Linear utility.}\label{sec:linear_instance}
In this setting, $\cX \subset \mathbb{B}_2^d(1)$ and, for any $x \in \cX$, the reward 
function is given by
\(
f^\star(x) = \langle \theta^\star, x \rangle,
\)
where $\theta^\star \in \mathbb{R}^d$ is an unknown parameter satisfying 
$\|\theta^\star\| \leq S$. Accordingly, the class of linear reward functions $\cF_{lin}$ 
can be identified with the parameter set $\Theta \subset \mathbb{B}_2^d(S)$, i.e.,
\(
\cF_{lin} = \left\{ \langle \theta, \cdot \rangle : \theta \in \Theta \right\}.
\)
Under the Bradley--Terry model, the preference feedback satisfies
\(
y_t \sim \mathrm{Bernoulli}\!\left(\sigma\!\left(\langle \theta^\star, x_t - x'_t \rangle\right)\right).
\)
The algorithmic implementation of \textbf{GINOP} in the linear utility setting, including 
the characterization of the MLE $\hat{f}_t$ and the uncertainty quantifier $\omega_t$, is 
deferred to~\cref{sec:Eluder_for_linear}.
To specialize the regret bound of \cref{thm:Alg_perf} to $\cF_{lin}$, it remains to 
control the key complexity quantities appearing in the theorem, namely 
$\Log{\cN_T(\Phi(\Delta\cF))}$ and $\delud(\Delta\cF, \Delta_{f^\star}, \varepsilon)$, 
in the case $\cF = \cF_{lin}$.

\begin{restatable}[]{proposition}{propsigmaeluderlinear}
\phantomsection\label{prp:propsigmaeluderlinear}
In the linear reward setting, the following bounds hold:
\[
\forall\, \varepsilon > 0, \quad
\delud(\Delta{\cF_{lin}}, \Delta_{f^\star}, \varepsilon)
= \cO\bigl(d \log(1 + 1/\varepsilon)\bigr)
\quad \text{and} \quad
\Log{\cN_T(\Phi(\Delta\cF_{lin}))}
= \cO\bigl(d \log T\bigr).
\]
\end{restatable}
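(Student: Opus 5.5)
The bound on $\delud$ will come from the standard elliptical-potential argument for linear eluder dimensions, run in a geometry weighted by the local curvature $\dot\sigma(\langle\theta^\star,z\rangle)$. The covering bound will come from a Lipschitz reduction to a covering of the parameter ball.

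\emph{Set-up.} With $\cZ=\{x-x':x,x'\in\cX\}\subset\mathbb{B}_2^d(2)$, every element of $\Delta\cF_{lin}$ can be written $q_\theta(z)=\langle\theta,z\rangle$, and $q^\star=q_{\theta^\star}$. The key analytic input is a self-concordance-type lower bound on the excess log-loss. Since $\bar\varphi(\sigma(a),\sigma(b))$ is the Bernoulli KL divergence, an integral Taylor expansion with $|\ddot\sigma|\le\dot\sigma$ should give
\[
\bar\varphi(\sigma(a),\sigma(b))\ge \frac{\dot\sigma(a)\,(a-b)^2}{c_S},\qquad |a|,|b|\le 2S,
\]
with $c_S=O(1+S)$. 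This constant is polynomial in $S$, not exponential, so no $\kappa$ enters.

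\emph{Eluder bound.} Take a $(\sigma,\varepsilon)$-independent sequence $z_1,\dots,z_n$. Set $\tilde z_j=\sqrt{\dot\sigma(\langle\theta^\star,z_j\rangle)}\,z_j$ and define
\[
V_{i}=\lambda I+\sum_{j\le i}\tilde z_j\tilde z_j^\top ,\qquad \lambda=\varepsilon^2/(4S^2).
\]
Let $\theta_i$ be the witness for $z_i$. The fit condition \eqref{eq:eluder_fit} and the lower bound above give $\|\theta_i-\theta^\star\|^2_{V_{i-1}-\lambda I}\le c_S\varepsilon^2$. Since $\|\theta_i-\theta^\star\|\le 2S$, this yields $\|\theta_i-\theta^\star\|^2_{V_{i-1}}\le(c_S+1)\varepsilon^2$. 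The prediction condition \eqref{eq:eluder_pred} and Cauchy--Schwarz then give
\[
\varepsilon^2<\langle\theta^\star-\theta_i,\tilde z_i\rangle^2\le (c_S+1)\varepsilon^2\,\|\tilde z_i\|^2_{V_{i-1}^{-1}},
\]
so $\|\tilde z_i\|^2_{V_{i-1}^{-1}}>1/(c_S+1)$ for every $i$. Summing $\min(1,\|\tilde z_i\|^2_{V_{i-1}^{-1}})$ and applying the elliptical potential lemma, using $\|\tilde z_i\|\le 1$, gives
\[
\frac{n}{c_S+1}\le 2d\log\Bigl(1+\frac{n}{d\lambda}\Bigr)=2d\log\Bigl(1+\frac{4S^2 n}{d\varepsilon^2}\Bigr).
\]
Solving this implicit inequality in $n$ yields $n=O(d\log(1+1/\varepsilon))$, with constants polynomial in $S$. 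The quantifier over $\varepsilon'$ in the definition plays no role here.

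\emph{Covering number.} For fixed $(y,x,x')$, the map $u\mapsto\ell(y,\sigma(u))$ has derivative $\sigma(u)-y\in[-1,1]$, so it is $1$-Lipschitz. The map $\theta\mapsto\langle\theta,x-x'\rangle$ is $2$-Lipschitz. The reference term $\ell(y,\sigma(\Delta_{f^\star}))$ is common to every element of $\Phi(\Delta\cF_{lin})$. Hence a $1/(2T)$-cover of $\mathbb{B}_2^d(S)$ in Euclidean norm induces a $1/T$ sup-norm cover of $\Phi(\Delta\cF_{lin})$. This gives $\cN_T\le(1+4ST)^d$, so $\log\cN_T=O(d\log T)$.

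\emph{Main obstacle.} The delicate step is the curvature lower bound on $\bar\varphi$. It must scale with $\dot\sigma$ evaluated at $q^\star(z)$ specifically, matching the weight in \eqref{eq:eluder_pred}, and it must only lose a factor polynomial in the gap $|a-b|\le 4S$ rather than $\kappa$. I would prove it from $\mathrm{KL}=(a-b)^2\int_0^1(1-v)\,\dot\sigma(a+v(b-a))\,dv$ together with the bound
\[
\dot\sigma(a+v(b-a))\ge\dot\sigma(a)\,e^{-v|a-b|},
\]
which follows from $|\ddot\sigma|\le\dot\sigma$.
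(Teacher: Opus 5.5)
Your proposal is correct and follows essentially the same route as the paper. The paper also lower-bounds the excess log-loss through the integral Taylor expansion combined with the self-concordance estimate $\int_0^1(1-v)\,\dot\sigma(a+v(b-a))\,dv \geq \dot\sigma(a)/(2+|a-b|)$ (Lemma~8 of Abeille et al., which is what your $e^{-v|a-b|}$ bound gives after integration); it then builds the $\dot\sigma(\langle\theta^\star,z\rangle)$-weighted design matrix regularized at scale $\varepsilon$, uses Cauchy--Schwarz to show each new $z_t$ has weighted elliptical norm of order at least $1/(1+S)$, and closes with a log-determinant count (a determinant--trace argument following Russo and Van Roy rather than the elliptical potential lemma, an immaterial difference), while the covering bound comes, exactly as in your argument, from the $2$-Lipschitz dependence of $\Phi(\Delta_{f_\theta})$ on $\theta$ and a cover of the parameter ball.
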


Given \cref{prp:propsigmaeluderlinear}, whose proof is deferred to 
Appendix~\ref{sec:Eluder_for_linear}, the following corollary is immediate.

\begin{restatable}[]{corollary}{CorAlgproof}
\phantomsection\label{Cor:Alg_perf_lin}
In the linear reward setting, letting $\varepsilon = 1/T^2$ and $\delta = 1/T^2$, the 
\textbf{GINOP} algorithm satisfies
\(
\cR_T
=
\widetilde{\cO}\!\left(
d \sqrt{\frac{T}{\dot{\sigma}^\star}}
+ \kappa d^3 \right).
\)
\end{restatable}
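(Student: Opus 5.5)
The corollary follows by substituting the estimates of \cref{prp:propsigmaeluderlinear} into \cref{thm:Alg_perf}, with $\varepsilon=\delta=1/T^2$.

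\textbf{Step 1: the complexity terms.} With $\varepsilon=1/T^2$, the eluder bound of \cref{prp:propsigmaeluderlinear} gives
\[
\delud(\Delta\cF_{lin},\Delta_{f^\star},1/T^2)=\cO\bigl(d\log(1+T^2)\bigr)=\widetilde{\cO}(d).
\]
For the confidence radius we split the logarithm:
\[
\log\bigl(\cN_T(\Phi(\Delta\cF_{lin}))/\delta\bigr)=\log\cN_T(\Phi(\Delta\cF_{lin}))+2\log T .
\]
The covering bound of \cref{prp:propsigmaeluderlinear} controls the first summand by $\cO(d\log T)$. The second summand is $\cO(\log T)\le\cO(d\log T)$ because $d\ge1$. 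Hence the whole radius is $\widetilde{\cO}(d)$.

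\textbf{Step 2: substitution into \cref{thm:Alg_perf}.} The leading term becomes
\[
\sqrt{\frac{T}{\dot\sigma^\star}\cdot\widetilde{\cO}(d)\cdot\widetilde{\cO}(d)}=\widetilde{\cO}\Bigl(d\sqrt{T/\dot\sigma^\star}\Bigr).
\]
The lower-order term satisfies
\[
\Gamma_T=\widetilde{\cO}\bigl(d\cdot d^2\bigr)=\widetilde{\cO}(d^3),
\]
so $\kappa\Gamma_T=\widetilde{\cO}(\kappa d^3)$.

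\textbf{Possible obstacle: the hidden logarithms.} The $\widetilde{\cO}$ in \cref{thm:Alg_perf} already hides factors polylogarithmic in $T$. The substitution above adds only further polylog factors, so no hidden factor carries extra polynomial dependence on $d$ or $\kappa$. This is the one place to check that no additional $\kappa$ enters. It is guaranteed by \cref{prp:propsigmaeluderlinear}, since both of its bounds are free of $\kappa$ and of $S$ beyond logarithmic terms.

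Combining Steps 1 and 2 gives $\cR_T=\widetilde{\cO}\bigl(d\sqrt{T/\dot\sigma^\star}+\kappa d^3\bigr)$, as claimed.
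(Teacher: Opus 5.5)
Your proposal is correct and matches the paper, which treats the corollary as an immediate substitution of \cref{prp:propsigmaeluderlinear} into \cref{thm:Alg_perf} with $\varepsilon=\delta=1/T^2$, giving $\widetilde{\cO}(d)$ for both the eluder dimension and the log-covering radius. One small correction to your closing remark: in the proof of \cref{prp:propsigmaeluderlinear} the eluder bound carries a multiplicative $(1+S)$ factor, and $\beta_t$ carries a $(2S+1)$ factor, so $S$ enters polynomially rather than only logarithmically; it is simply treated as a constant absorbed into the $\cO$, which does not affect the conclusion.
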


In the linear utility setting, similarly to the standard logistic bandit framework (i.e., without preference feedback) studied in~\cite{pmlr-v130-abeille21a}, {\bf (i)} the dependence on $\kappa$ is confined to lower-order terms, and {\bf (ii)} the leading term depends on the local curvature of the sigmoid at the optimal action, $\dot{\sigma}^\star$. In particular, \cite{pmlr-v130-abeille21a} establish a minimax regret bound of the form $\widetilde{\cO}\bigl(d \sqrt{\dot{\sigma}^\star T} + \kappa\bigr)$, which we match up to a factor $1/\dot{\sigma}^\star$. We emphasize that this discrepancy is not a weakness of our analysis, but rather stems from a difference in the reward metric: our objective involves $f^\star(x)$, whereas their formulation considers $\sigma(f^\star(x))$. A straightforward adaptation of their lower bound techniques to our setting confirms the tightness of~\cref{Cor:Alg_perf_lin} with respect to $\dot{\sigma}^\star$.

The refined characterization of the scaling with respect to the sigmoid curvature allows one to fully exploit a key feature of preference feedback: actions are selected as pairs, and the observed signal depends on relative rewards. For the optimal pair $(x^\star, x^\star)$, the local curvature of the sigmoid satisfies $\dot{\sigma}^\star = \dot{\sigma}(0) = 1/4$. Consequently, the dependence on curvature disappears from the leading-order term, yielding performance comparable to that obtained under direct reward observations. In particular, the resulting guarantee aligns with that obtained from~\cite[Proposition~4 + Example~4]{Eluder_russo} in a direct reward model.

Consequently, for dueling bandits with linear utilities, \textbf{GINOP} is minimax optimal: its regret bound matches the lower bound $\Omega(d \sqrt{T})$ established by~\cite{li2024feelgoodthompsonsamplingcontextual}. This stands in contrast to prior work~\citep{Dueling-saha-linear,pmlr-v162-bengs22a,li2024feelgoodthompsonsamplingcontextual}, whose leading term carries an additional dependence on $\kappa$.


\paragraph{Kernelized utility.}
Following~\citep{Dueling_Stackelburg_Krause, Dueling-xu2024-kernel, kayal2025bayesian}, 
we assume that the utility function $f^\star$ belongs to a known reproducing kernel Hilbert 
space (RKHS). Let $k : \cX \times \cX \to \lR$ be a positive definite kernel, and let 
$\cH_k$ denote the associated RKHS, equipped with inner product $\langle \cdot, \cdot 
\rangle_{\cH_k}$ and norm $\|\cdot\|_{\cH_k}$. We assume $\|f^\star\|_{\cH_k} \leq S$ 
and $k(x,x) \leq 1$ for all $x \in \cX$. By the reproducing property, for all $f \in 
\cH_k$ and $x \in \cX$,
\(
\langle f,\, k(\cdot, x) \rangle_{\cH_k} = f(x).
\)
By Mercer's theorem, under mild regularity conditions, the kernel admits the spectral 
representation
\(
k(x, x') = \sum_{m=1}^{\infty} \gamma_m \phi_m(x)\phi_m(x'),
\)
where $\gamma_m > 0$ and $\{\psi_m := \sqrt{\gamma_m}\phi_m\}_{m \geq 1}$ forms an 
orthonormal basis of $\cH_k$. In particular, any $f \in \cH_k$ can be expressed as
\(
f(\cdot) = \sum_{m=1}^{\infty} \theta_m \psi_m(\cdot) 
= \langle \theta, \Psi(\cdot) \rangle,
\quad \text{with} \quad
\|f\|_{\cH_k}^2 = \|\theta\|_2^2 \leq S^2.
\)
We refer to $\{\gamma_m\}$ and $\{\phi_m\}$ as the Mercer eigenvalues and eigenfunctions 
of $k$, respectively. Accordingly, the class of kernelized reward functions $\cF_k$ can 
be identified with the parameter set $\Theta_k \subset \mathbb{B}_2^{\lN}(S)$, i.e.,
\(
\cF_k = \left\{ \langle \theta, \Psi(\cdot) \rangle : \theta \in \Theta_k \right\},
\)
with $f^\star(\cdot) = \langle \theta^\star, \Psi(\cdot) \rangle$ and $\theta^\star \in 
\Theta_k$. Under the Bradley--Terry model, the preference feedback satisfies
\(
y_t \sim \mathrm{Bernoulli}\!\left(\sigma\!\left(
\langle \theta^\star, \Psi(x_t) - \Psi(x'_t) \rangle\right)\right).
\) Define $z = (x, x') \in \cX \times \cX$. Following~\cite{Dueling_Stackelburg_Krause, 
kayal2025bayesian}, we introduce the \emph{dueling kernel}
\(
\mathds{k}(z_1, z_2) = k(x_1, x_2) + k(x_1', x_2') - k(x_1, x_2') - k(x_1', x_2),
\)
for $z_1 = (x_1, x_1')$ and $z_2 = (x_2, x_2')$. This construction satisfies 
$\|\Delta_f\|_{\cH_{\mathds{k}}} = \|f\|_{\cH_k}$, as established 
in~\cite[Proposition~4]{Dueling_Stackelburg_Krause}. The algorithmic implementation of \textbf{GINOP} in the kernelized utility setting, including 
the characterization of the MLE $\hat{f}_t$ and the uncertainty quantifier $\omega_t$, is 
deferred to~\cref{sec:Eluder_for_Ker}. We assess whether the resulting locally sensitive eluder dimension induces an undesirable 
dependence on~$\kappa$. For any $\lambda > 0$ and $T \geq 1$, the maximum information 
gain after $T$ observations is defined as
\(
\gamma_T(\lambda;\, \cX \times \cX)\)
\(=
\max\limits_{(x_1,x'_1),\dots,(x_T,x'_T) \in \cX \times \cX}
\frac{1}{2} \Log{\operatorname{det} \Bigl(I + \lambda^{-1} \mathds{K}_T \Bigr)} \),
with
\(\mathds{K}_{T}
=
\bigl[\mathds{k}\bigl((x_i,x_i'),(x_j,x_j')\bigr)\bigr]_{i,j=1}^{T}.
\)
The quantity $\gamma_T$ admits a natural geometric interpretation: it measures the logarithm of the maximum volume of the ellipsoid generated by $T$ points in $\cX \times \cX$, thereby capturing the intrinsic geometric complexity of the domain. \cref{prp:propsigmaeluderker}, whose proof is deferred to Appendix~\ref{sec:Eluder_for_Ker}, establishes a precise relationship between our eluder dimension and the maximum information gain.
\begin{restatable}[]{proposition}{propsigmaeluderkern}
\phantomsection\label{prp:propsigmaeluderker}
In the kernelized setting, the following bound holds:
\[
\forall\, \varepsilon > 0, \qquad
\delud\!\left(\Delta_{\cF_k},\, \Delta_{f^\star},\, \varepsilon\right)
= \cO\!\Bigl(\gamma_T(\lambda;\, \cX \times \cX)\Bigr),
\quad \text{with } \lambda = \frac{2\varepsilon(1+S)}{(2S)^2}.
\]
\end{restatable}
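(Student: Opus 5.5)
The plan is to reduce the locally sensitive eluder dimension of $\Delta\cF_k$ to a standard elliptical-potential argument in the dueling RKHS $\cH_{\mathds{k}}$. Write $z=(x,x')$ and $\varphi(z):=\Psi(x)-\Psi(x')$. Then every $q\in\Delta\cF_k$ has the form $q(z)=\langle\theta,\varphi(z)\rangle$ with $\|\theta\|\le S$, and $\langle \varphi(z_1),\varphi(z_2)\rangle=\mathds{k}(z_1,z_2)$. The key feature of \cref{def:defsigmaeluder} is that the curvature weight in \cref{eq:eluder_pred} is evaluated at $q^\star$ itself. So I compare \emph{weighted} squared errors on both sides, with weights $\dot\sigma(q^\star(z_j))$, rather than paying a global $\kappa$.

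\textbf{Step 1 (fit condition $\Rightarrow$ weighted quadratic bound).} Fix $q$ satisfying \cref{eq:eluder_fit}. I use the self-concordance-type lower bound for the excess log-loss in natural parameters, $\bar\varphi(\sigma(a),\sigma(b))\ge \dot\sigma(a)\,(a-b)^2/(2+|a-b|)$. This follows from the integral form of the Bernoulli KL divergence together with $\dot\sigma(a+u)\ge \dot\sigma(a)e^{-|u|}$. Since $|q^\star-q|\le 4S$, this gives $\sum_{j<i}\dot\sigma(q^\star(z_j))(q^\star(z_j)-q(z_j))^2\le (2+4S)\varepsilon^2$. Writing $\mu=\theta-\theta^\star$ and $w_j=\dot\sigma(q^\star(z_j))$, this reads $\mu^\top\bigl(\sum_{j<i}w_j\varphi_j\varphi_j^\top\bigr)\mu\le c_S\varepsilon^2$. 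Adding the regulariser, with $\|\mu\|\le 2S$, yields $\|\mu\|^2_{V_{i-1}}\le c_S\varepsilon^2+\lambda(2S)^2$, where $V_{i-1}=\lambda I+\sum_{j<i}w_j\varphi_j\varphi_j^\top$. Choosing $\lambda$ proportional to $\varepsilon/(2S)^2$, as in the statement (up to how $\varepsilon$ versus $\varepsilon^2$ is absorbed), makes the right-hand side $O(\varepsilon^2)$.

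\textbf{Step 2 (prediction condition $\Rightarrow$ large weighted feature norm).} \cref{eq:eluder_pred} gives $\varepsilon^2<w_i\langle\mu,\varphi_i\rangle^2\le \|\mu\|^2_{V_{i-1}}\,\|\sqrt{w_i}\varphi_i\|^2_{V_{i-1}^{-1}}$. Hence $\|\sqrt{w_i}\varphi_i\|^2_{V_{i-1}^{-1}}\ge c>0$, a constant depending only on $S$ once $\lambda$ is matched to $\varepsilon$. This is the crucial step: the weight $w_i$ multiplying the prediction error is exactly the one that rescales the feature $\tilde\varphi_i=\sqrt{w_i}\varphi_i$, which is also the feature appearing in $V$. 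No $\kappa$ appears.

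\textbf{Step 3 (elliptical potential).} Along an independent sequence of length $d$, every $\tilde\varphi_i$ has $V_{i-1}^{-1}$-norm bounded below by a constant. The matrix determinant lemma gives $d\,\log(1+\min(c,\cdot))\le \log\det(I+\lambda^{-1}\tilde K_d)$, where $\tilde K_d=W^{1/2}\mathds{K}_dW^{1/2}$ with $W=\mathrm{diag}(w_i)\preceq I/4$. Since $\tilde K_d\preceq \mathds{K}_d$ and $\log\det$ is monotone, the right-hand side is at most $2\gamma_d(\lambda;\cX\times\cX)$. The resulting inequality $d\lesssim\gamma_d$ implies $d=O(\gamma_T)$ whenever $d\le T$; in the regime of interest the sequence length is at most $T$, and $\gamma$ is nondecreasing in its index.

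The main obstacle is Step 1: obtaining the lower bound on $\bar\varphi$ with a constant depending only on $S$ rather than on $\kappa$. This relies on self-concordance of the logistic function, and I will check that the bound is anchored at $q^\star(z_j)$ and not at $q(z_j)$. A secondary concern is the bookkeeping between $\varepsilon$ and $\varepsilon^2$ when matching $\lambda$ to the stated value $2\varepsilon(1+S)/(2S)^2$. A different choice there affects only constants inside $\gamma_T$, which depends only logarithmically on $1/\lambda$ for standard kernels.
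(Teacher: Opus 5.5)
Your argument is essentially the paper's proof. It uses the same ingredients in the same order:
\begin{itemize}
\item the same self-concordance lower bound on $\bar\varphi$ anchored at $q^\star$ (Lemma~8 of \cite{pmlr-v130-abeille21a});
\item the same $\lambda$-regularised design matrix weighted by $\dot\sigma(q^\star(z_j))$;
\item Cauchy--Schwarz, giving a constant lower bound on the weighted feature norm;
\item the determinant lemma, compared against $\gamma_T$ using $\dot\sigma\le 1$.
\end{itemize}

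Your bookkeeping caveats are, if anything, more careful than the paper's. Its proof works with unsquared thresholds $\varepsilon'$, which is why $\lambda$ is linear in $\varepsilon$. It uses $2(1+S)$ where $|\Delta_f-\Delta_{f^\star}|\le 4S$ calls for your $2+4S$. It drops the factor $\tfrac12$ in $\gamma_T$. It also tacitly assumes the sequence has length at most $T$.

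One justification in your Step~3 is wrong as stated. The Loewner inequality $W^{1/2}\mathds{K}_dW^{1/2}\preceq\mathds{K}_d$ does not hold in general for diagonal $0\preceq W\preceq I$. For a counterexample, take a unit-diagonal $2\times 2$ Gram matrix with off-diagonal entry $\rho>\sqrt{3}/2$ and $W=\mathrm{diag}(1/4,\eta)$. The difference then has determinant tending to $3/4-\rho^2<0$ as $\eta\to 0$.

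The inequality you actually need still holds, by either of two routes:
\begin{itemize}
\item Compare in feature space: $\lambda I+\sum_i w_i\varphi_i\varphi_i^\top\preceq\lambda I+\sum_i\varphi_i\varphi_i^\top$ because $w_i\le 1$. This is what the paper does.
\item Use Sylvester's identity, $\det(I+\lambda^{-1}W^{1/2}\mathds{K}_dW^{1/2})=\det(I+\lambda^{-1}\mathds{K}_d^{1/2}W\mathds{K}_d^{1/2})$, together with $\mathds{K}_d^{1/2}W\mathds{K}_d^{1/2}\preceq\mathds{K}_d$.
\end{itemize}
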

It is further known that the logarithmic covering number of a bounded RKHS satisfies $\log \cN_T \asymp \gamma_T$, as discussed in~\cite[Example~5.12]{Dueling_Stackelburg_Krause,Wainwright_2019}. Building on this, we show in~\cref{sec:specified_ker}, for commonly used kernel functions, that --in line with~\cite{kayal2025bayesian}-- our results, and in particular their dependence on $\kappa$, improve upon the bounds of~\cite{Dueling-xu2024-kernel} and~\cite{Dueling_Stackelburg_Krause}.

\section{Experiments}
\phantomsection\label{sec:Experiments}
To corroborate our theoretical results, we conduct numerical experiments evaluating the performance of \textbf{GINOP} against several existing baselines across a range of reward function classes. Additional details on the experimental setup and results are deferred to~\cref{app:experiments}.

\begin{figure}[h]
    \centering
    \begin{subfigure}[b]{0.32\linewidth}
        \centering
        \includegraphics[width=\linewidth]{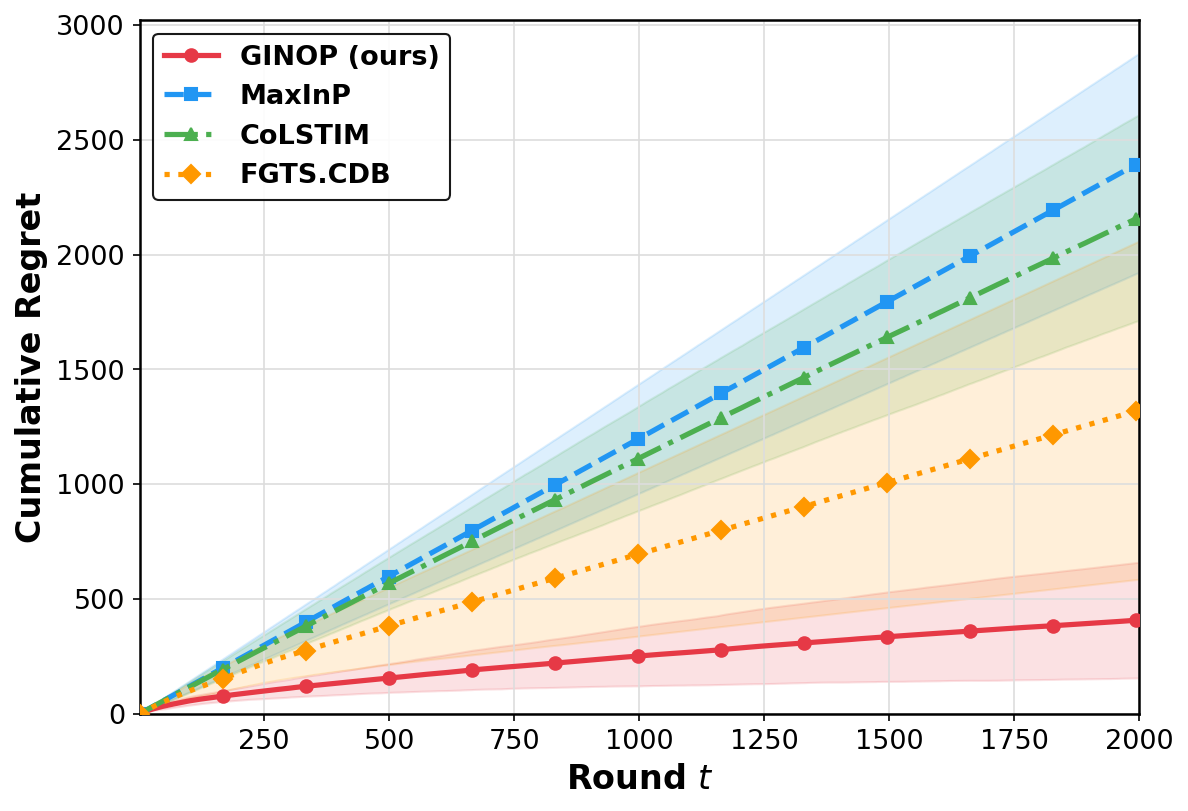}
        \caption{Linear utility.}
        \label{fig:first}
    \end{subfigure}
    \hfill
    \begin{subfigure}[b]{0.32\linewidth}
        \centering
        \includegraphics[width=\linewidth]{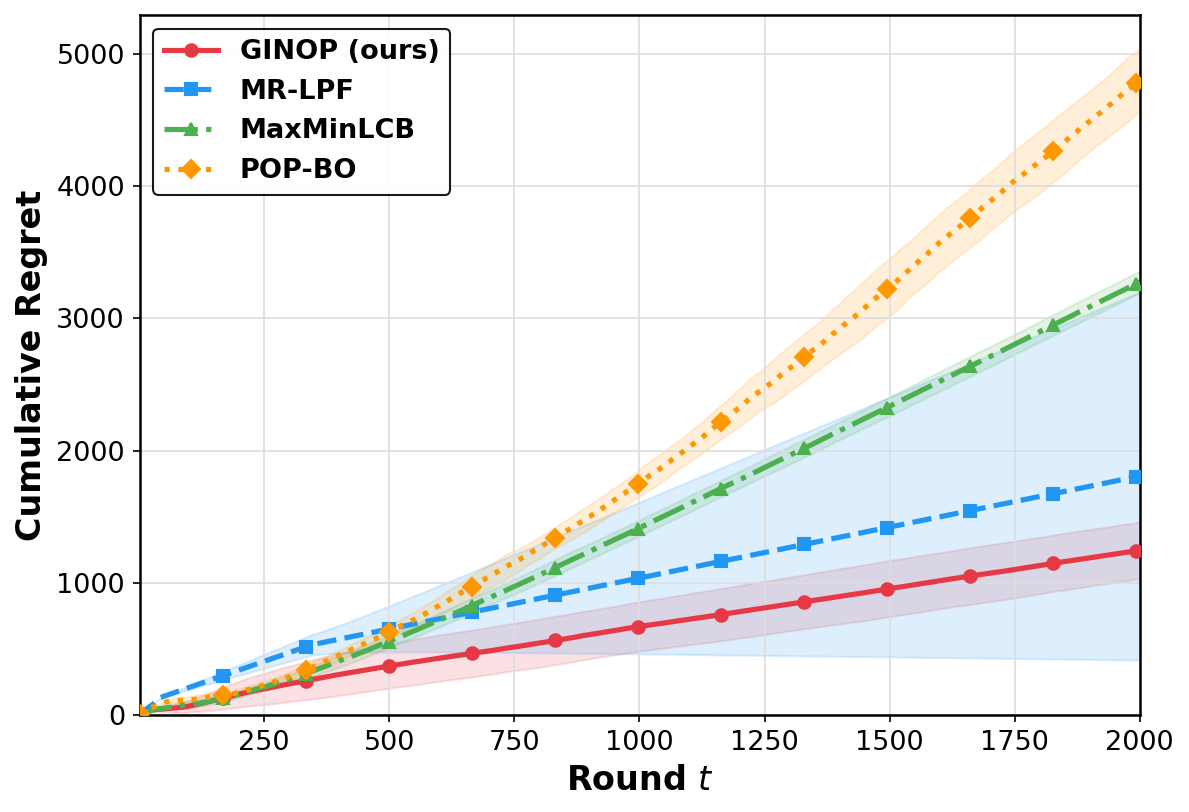}
        \caption{Kernelized utility.}
        \label{fig:second}
    \end{subfigure}
    \hfill
    \begin{subfigure}[b]{0.32\linewidth}
        \centering
        \includegraphics[width=\linewidth]{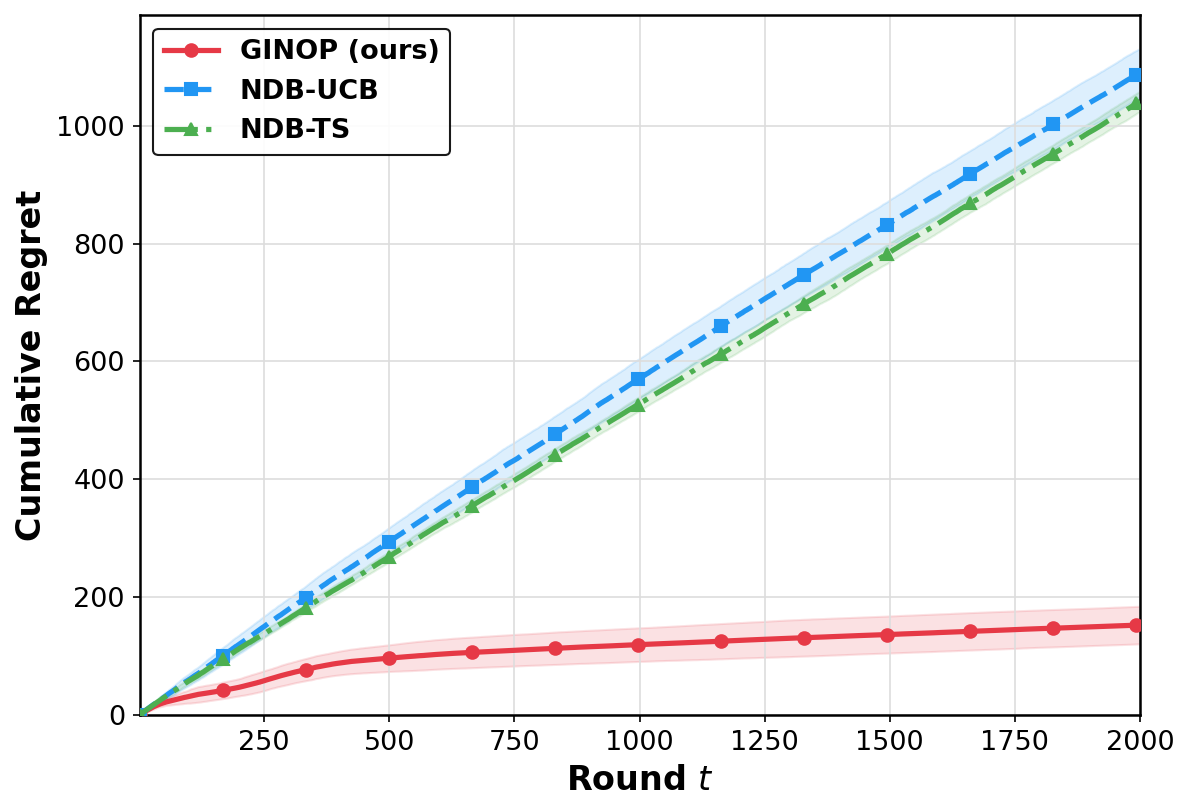}
        \caption{Against a NN.}
        \label{fig:third}
    \end{subfigure}
\caption{We benchmark \textbf{GINOP} against several baselines across different settings, reporting in each case the cumulative regret over a horizon of $T = 2000$ rounds, averaged over $20$ independent trials and considering an action set of $30$ arms. \textbf{(a)} In the linear setting, we take $d = 10$ and benchmark against \textbf{MaxInP}~\citep{Dueling-saha-linear}, \textbf{CoLSTIM}~\citep{pmlr-v162-bengs22a}, and \textbf{FGTS.CDB}~\citep{li2024feelgoodthompsonsamplingcontextual}. \textbf{(b)} In the kernelized setting, we adopt the Ackley function as the reward and employ the Matérn kernel with smoothness parameter $\nu = 2.5$; we benchmark against \textbf{POP-BO}~\citep{Dueling-xu2024-kernel}, \textbf{MaxMinLCB}~\citep{Dueling_Stackelburg_Krause}, and \textbf{MR-LPF}~\citep{kayal2025bayesian}. \textbf{(c)} In the neural setting, we consider a cosine reward function and benchmark against the neural-network-based approaches \textbf{NDB-UCB} and \textbf{NDB-TS}~\citep{verma2025neural}.}
    \label{fig:all}
\end{figure}

We observe that \textbf{GINOP} outperforms existing baselines, in agreement with our theoretical bounds, highlighting its favorable dependence on the curvature of the link function. Indeed,  all existing approaches (but \textbf{MR-LPF}) suffer from a $\kappa$-dependence both in the algorithmic design and in the theoretical performance, which inflates their regret and induces a $\sqrt{T}$-like shape that becomes apparent only at considerably larger horizons. This stresses the importance of the fine treatment of non-linearity in preference-based bandits. The exception is \textbf{MR-LPF} in the kernelized setting, which -- thanks to its multi-phase approach -- is able to eliminate the $\kappa$-dependence and consequently exhibits a regret profile comparable to that of \textbf{GINOP}. 

\section*{Conclusion and Future Work}

By leveraging a novel eluder dimension tailored to the preference feedback setting together 
with an informed-optimistic exploration scheme, we show that the resulting algorithm 
(\textbf{GINOP}) enjoys tight regret guarantees. These results hold for general reward 
function classes, thereby accommodating a wide range of problem instances. In particular, 
this highlights that the non-linearity of the logistic model -- inherited from the 
Bradley-Terry assumption -- does not adversely impact the regret in the preference bandit 
setting. Consequently, our work demonstrates that learning from preference feedback can be as statistically efficient as learning from direct reward observations.

While the primary focus of this paper is theoretical, on the computational front \textbf{GINOP} matches the computational complexity of existing work in specific settings (linear and kernelized utilities). Computational efficiency for general function classes, however, remains a common problem in the literature~\citep{sekhari2023contextual}; the design of suitable oracle-based implementations lies beyond the scope of this paper and is left for future work. In addition, promising future work (following  ~\cite{pmlr-v162-bengs22a}) lies in extending the analysis to observation models beyond the Bradley--Terry model.


\bibliographystyle{apalike}
\bibliography{Library}

\newpage
\appendix

\section*{Appendices}
\startcontents[appendices] 
\printcontents[appendices]{}{1}{%
  \section*{Appendix Contents}
  \mbox{}\hrulefill\par
}%

\newpage

\section{A Closer Look at the Localization Technique of~\texorpdfstring{\cite{Local2025eluder}}{Bakhtiari et al. (2025)}}
\phantomsection\label{sec:Closer_look_Loc_Elud}

\cite{Local2025eluder} build on the global $\ell_1$-eluder dimension of~\cite{liu-eluder-proof}, defined as follows.
\begin{definition}
Let $\cZ$ be a set, $\Psi$ a class of real-valued functions on $\cZ$, and 
$z = (z_1, z_2, \dots, z_n)$ a sequence of length $n$ in $\cZ$. We define the following.
\begin{enumerate}
    \item An element $z_t \in \cZ$ is said to be $\varepsilon$-independent of $z_{1:t-1}$ with respect 
    to $\Psi$ if there exists $\psi \in \Psi$ such that
    \[
    \sum_{i=1}^{t-1} |\psi(z_i)| \leq \varepsilon
    \quad \text{and} \quad
    |\psi(z_t)| > \varepsilon.
    \]
    \item The sequence $z$ is called an $\varepsilon$-eluder sequence with respect to $\Psi$ if, 
    for every $t \leq n$, $z_t$ is $\varepsilon$-independent of $z_{1:t-1}$ with respect 
    to $\Psi$.
    \item The $\varepsilon$-eluder dimension $\dim_{\mathrm{elud}}(\varepsilon;\, \Psi)$ 
    of $\Psi$ is the length of the longest $\varepsilon'$-eluder sequence with respect to 
    $\Psi$, maximized over all $\varepsilon' \geq \varepsilon$.
\end{enumerate}
\end{definition}

Building on their lower bound~\cite[Theorem~2]{Local2025eluder}, the authors establish that the global eluder dimension $\dim_{\mathrm{elud}}(1/T;\, \cF)$, taken over the entire class $\cF$, necessarily incurs an undesirable dependence on $\kappa$. This observation motivates a form of \emph{localization}: rather than considering the eluder dimension of the full class, they propose restricting it to a suitable subset $\cF' \subseteq \cF$ and deriving a regret upper bound in terms of $\dim_{\mathrm{elud}}(1/T;\, \cF')$ instead of $\dim_{\mathrm{elud}}(1/T;\, \cF)$. This refinement, at least ostensibly, removes the $\kappa$-dependence from the leading-order term. However, this comes at the cost of an opaque second-order term of the form 
\begin{align}
\operatorname{card}\{t \in [T] : f_t \notin \cF'\},\label{eq:second_term}
\end{align}
which is generally intractable: the subclass $\cF'$ is defined only implicitly and cannot be evaluated for general function classes.

To better illustrate the localization technique, the authors instantiate it in the generalized linear model setting, where they establish that the appropriate local family $\cF'$ permitting the removal of the $\kappa$-dependence from the leading term is given by~\cite[Section~4.2]{Local2025eluder}:
\[
\cF' \equiv \Theta' = \left\{ \theta \in \Theta : \forall x \in \cX,\; \abs{\langle x,\, \theta - \theta^\star \rangle} \leq 1/M \right\},
\]
where $M = 1/4$ is the self-concordance constant. Hence, the challenge lies in bounding the additional term:
\begin{align}
  \operatorname{card}\{t \in [T] : f_t \notin \cF'\} = \operatorname{card}\{t \in [T] : \exists x \in \cX,\; \abs{\langle x,\, \theta_t - \theta^\star \rangle} > 1/M\}  \label{eq:card}
\end{align}

Unfortunately, instead of upper bounding the term in~\eqref{eq:card} which requires sufficient accuracy \textit{uniformly over arms}, \cite[Proposition~5]{Local2025eluder} inaccurately bound the quantity: $$\operatorname{card}\{t \in [T] : \abs{\langle x_t,\, \theta_t - \theta^\star \rangle} > 1/M\},$$ which differs substantially from the correct second-order term~\eqref{eq:card} in that it requires sufficient accuracy only for actions played over the trajectory. While the latter \textit{on-policy} term is indeed logarithmic, it does not imply a bound on~\eqref{eq:card} which can be linear in $T$ in all generality. 

While we align with~\cite{Local2025eluder} regarding the shortcomings of the standard global eluder dimension, we depart from their approach: in~\cref{def:defsigmaeluder}, we introduce a novel \emph{locally sensitive eluder dimension} that explicitly and directly encodes the sensitivity of the link function within the definition itself, rather than relying on an external localization technique.
\section{Preliminaries}
\phantomsection\label{sec:concentrations}

\subsection{Concentrations}
We recall~\cref{lem:Ft_includes_r} stated in the main:

\lemFtIncludesr*

For each $f \in \cF$, we define the excess loss function
\(
\varphi_f : [0,1] \times \cX \times \cX \to \mathbb{R}
\)
and its corresponding expected excess loss
\(
\bar{\varphi}_f : \cX \times \cX \to \mathbb{R}
\)
as
\begin{align*}
&\varphi_f(y,x,x')
=
\ell\!\left(y, \sigma\!\left(\Delta_f(x, x')\right)\right)
-
\ell\!\left(y, \sigma\!\left(\Delta_{f^\star}(x, x')\right)\right), \\
&\bar{\varphi}_f(x,x')
=
\mathbb{E}_y\!\left[
\ell\!\left(y, \sigma\!\left(\Delta_f(x, x')\right)\right)
\right]
-
\mathbb{E}_y\!\left[
\ell\!\left(y, \sigma\!\left(\Delta_{f^\star}(x, x')\right)\right)
\right].
\end{align*}

With this in hand, we introduce Proposition~\ref{prop:csabsaConcentration}, which is a direct adaptation of Proposition~22 in~\cite{Local2025eluder}.

\begin{restatable}[]{proposition}{csabsaConcentration}
\phantomsection\label{prop:csabsaConcentration}
With probability at least $1 -\delta$, the following holds: 
        \begin{align*}
        \forall f \in \cF,\ \forall t \in [T], \quad 
        \sum_{s=1}^t \bar{\varphi}_f(x_s,x'_s)
        &\leq
        2 \left( \sum_{s=1}^t \varphi_f(y_s,x_s,x'_s) + \beta_t(\delta) \right). 
        \end{align*}
\end{restatable}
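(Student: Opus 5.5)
We follow the standard route for log-loss concentration over a covered class: we use an exponential supermartingale built from the excess losses, take a union bound over a finite cover of $\Phi(\Delta\cF)$, and apply a peeling/stitching argument over time to obtain the $\log(e+\log(1+t))$ factor in $\beta_t(\delta)$.

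\emph{Fixed $f$.} Fix $f\in\cF$ and write $Z_s := \varphi_f(y_s,x_s,x'_s)$. By construction $\bar\varphi_f(x_s,x'_s)=\mathbb{E}[Z_s\mid \cH_{s-1},x_s,x'_s]$, and this quantity is a KL divergence between Bernoulli laws, hence nonnegative. The key fact is the likelihood-ratio identity
\[
\mathbb{E}\bigl[e^{-Z_s/2}\mid\cdot\bigr]=\sum_y \sqrt{p^\star_y p_y}=1-H^2 ,
\]
together with the comparison $-\log(1-H^2)\ge H^2\ge \tfrac{1}{2(2S+1)}\,\mathrm{KL}$. This comparison is valid because $|\Delta_f|\le 2S$, so the likelihood ratios are bounded by $e^{2S}$-type constants; alternatively one can use a Bernstein-type bound, since $|Z_s|\le 2(2S+1)$ and $\mathrm{Var}(Z_s)\lesssim (2S+1)\,\bar\varphi_f$. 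Either route gives that
\[
\exp\Bigl(\sum_{s\le t}\bigl(\lambda\bar\varphi_f(x_s,x'_s)-\lambda' Z_s\bigr)\Bigr)
\]
is a nonnegative supermartingale for suitable constants $\lambda,\lambda'$ with ratio $1/2$. Ville's inequality then yields, with probability $1-\delta$, simultaneously for all $t$,
\[
\sum_{s\le t}\bar\varphi_f\le 2\Bigl(\sum_{s\le t}Z_s + c(2S+1)\log(1/\delta)\Bigr).
\]

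\emph{Uniformity.} Take a $1/T$-cover of $\Phi(\Delta\cF)$ in sup norm and union bound over its $\cN_T$ elements, giving $\log(\cN_T/\delta)$. For a general $f$ with cover element $g$, both $\sum_s Z_s$ and $\sum_s\bar\varphi_f$ change by at most $t/T\le1$. This discretization error is what the additive constant $5/2$ in $\beta_t$ absorbs. If one instead uses a Freedman-type bound with a fixed $\lambda$ that needs tuning to the scale of $\sum_s\bar\varphi_f$, we peel over dyadic epochs in $t$ (or over variance levels) and allocate $\delta/(k(k+1))$ to epoch $k$; this produces the $\log(e+\log(1+t))$ term.

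The main obstacle is obtaining the multiplicative factor $2$ with a constant proportional to $(2S+1)$ rather than to $\kappa$. This requires comparing the Hellinger/exponential moment to the KL divergence, or bounding the variance of the log-likelihood ratio by $(2S+1)$ times its mean. We would use the standard bound for bounded log-ratios,
\[
\mathrm{KL}\le (2+\log M)\,H^2,
\]
where $M$ bounds the likelihood ratio; since $\log M\lesssim 2S$, this gives exactly the $(2S+1)$ scaling. We then follow Proposition~22 of \cite{Local2025eluder} verbatim for the constants.
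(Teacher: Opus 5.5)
Your skeleton is sound: a fixed-$f$ exponential supermartingale with Ville's inequality, a union bound over a cover of $\Phi(\Delta\cF)$, and an $O(1)$ sup-norm discretization error absorbed by the additive constant. The paper itself gives nothing beyond the pointer to Proposition~22 of \cite{Local2025eluder}.

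\textbf{The gap.} It sits in the step you single out as the crux. The Hellinger route does not produce the factor $2$, so ``either route gives ratio $1/2$'' is false, and your last paragraph commits to the route that fails. Ville applied to $\prod_s e^{-Z_s/2}/(1-H_s^2)$ gives $\sum_{s\le t}\bigl(-\log(1-H_s^2)\bigr)\le\frac12\sum_{s\le t}Z_s+\log(1/\delta)$. Converting to KL via $\mathrm{KL}\le(2+\log M)\sum_y(\sqrt{p_y}-\sqrt{q_y})^2$ with $\log M\le 4S$ multiplies the \emph{whole} right-hand side by a factor of order $2S+1$, including $\sum_s Z_s$. You only get $\sum_s\bar{\varphi}_f\le 2(2S+1)\sum_s\varphi_f+4(2S+1)\log(1/\delta)$. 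This does not imply the proposition when $\sum_s\varphi_f$ is large, and it would leak an extra $2S+1$ into Step~2 of Lemma~\ref{lem:sigma._omega_upperbound}.

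No sharper Hellinger--KL comparison can repair this. For $\Delta_{f^\star}(x,x')=0$ and $\Delta_f(x,x')=-2S$ one has $\bar{\varphi}_f(x,x')=\log\cosh S$, while $-\log(1-H^2)\le\frac12\log 2$. The exponent $1/2$ is simply too large. Keeping the coefficient of $\sum_s Z_s$ at twice that of $\sum_s\bar{\varphi}_f$ forces an exponent of order $1/(2S+1)$, i.e., the Bernstein route.

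\textbf{The fix.} The Bernstein alternative you mention in passing is the one that works. The variance-to-mean inequality it needs, which you only assert with $\lesssim$, is exact in this model.
\begin{itemize}
\item With $a=\Delta_{f^\star}(x_s,x'_s)$ and $b=\Delta_f(x_s,x'_s)$, one has $\varphi_f(y,x_s,x'_s)=y(a-b)+\log\frac{\sigma(-a)}{\sigma(-b)}$. Hence $Z_s-\bar{\varphi}_f(x_s,x'_s)=(y_s-\sigma(a))(a-b)$, which is bounded by $4S$ and has conditional variance $\dot{\sigma}(a)(a-b)^2$.
\item $\bar{\varphi}_f(x_s,x'_s)$ is the Bregman divergence of $u\mapsto\log(1+e^u)$ between $b$ and $a$. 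Self-concordance (Lemma~8 of \cite{pmlr-v130-abeille21a}, which the paper uses in its linear and kernel instantiations) gives $\bar{\varphi}_f\ge\dot{\sigma}(a)(a-b)^2/(2+|a-b|)$. Therefore $\mathrm{Var}(Z_s\mid\cdot)\le 2(2S+1)\,\bar{\varphi}_f(x_s,x'_s)$.
\item Take the fixed value $\lambda=1/(4(2S+1))$, so that $4S\lambda\le 1$. The Bernstein m.g.f.\ bound gives $\log\mathbb{E}\bigl[e^{\lambda(\bar{\varphi}_f-Z_s)}\mid\cdot\bigr]\le\lambda^2\mathrm{Var}(Z_s\mid\cdot)\le\frac{\lambda}{2}\bar{\varphi}_f(x_s,x'_s)$.
\item Hence $\exp\bigl(\lambda\sum_{s\le t}(\frac12\bar{\varphi}_f(x_s,x'_s)-Z_s)\bigr)$ is a nonnegative supermartingale. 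Ville yields $\sum_{s\le t}\bar{\varphi}_f\le 2\bigl(\sum_{s\le t}\varphi_f+4(2S+1)\log(1/\delta)\bigr)$ simultaneously for all $t$.
\end{itemize}

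No peeling is needed; the $\log(e+\log(1+t))$ factor in $\beta_t$ is slack. Your union bound and $t/T$ discretization then finish the proof with additive term $\frac32+4(2S+1)\log(\cN_T/\delta)\le\beta_t(\delta)$. This requires the cover to be internal, i.e., made of functions $\varphi_g$ with $g\in\cF$, so that the martingale argument applies to its elements. Passing from an external $1/T$-cover to an internal $2/T$-cover changes only constants, which $\beta_t$ absorbs.
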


\begin{remark}\label{remark:GE}
 We denote by the event $\cG\cE$ the intersection of the good events in Lemma~\ref{lem:Ft_includes_r}, and Proposition~\ref{prop:csabsaConcentration}, which occurs with probability at least $1-2\delta$. This event will be used extensively throughout the proofs in the appendices.   
\end{remark}

\subsection{Bounding the On-Policy Prediction Error }
\phantomsection\label{sec:sigma._omega_upperbound_proof}

\lemSigmaOmegaproof*

\begin{proof}[Lemma~\ref{lem:sigma._omega_upperbound}]
Throughout the proof we suppose we are on $\cG\cE$. \\

\textbf{Step 1.} First we establish :
\begin{align}
    \forall t \in [T], \, \omega_t(x_t,x'_t)^2&=\Bigl(\sup_{f,f' \in \cF_t} \Delta_f(x_t,x'_t) - \Delta_{f'}(x_t,x'_t) \Bigr)^2 \nonumber\\
    &\leq \sup_{f,f' \in \cF_t} \Bigl( \Delta_f(x_t,x'_t) - \Delta_{f'}(x_t,x'_t) \Bigr)^2 \nonumber\\
    &\leq \sup_{f,f' \in \cF_t} \Bigl( \Delta_f(x_t,x'_t) - \Delta_{f^\star}(x_t,x'_t) + \Delta_{f'}(x_t,x'_t) - \Delta_{f^\star}(x_t,x'_t)\Bigr)^2 \nonumber\\
    &\overset{(i)}{\leq} \sup_{f,f' \in \cF_t} 2\Bigl( \Delta_f(x_t,x'_t) - \Delta_{f^\star}(x_t,x'_t) \Bigr)^2 + 2\Bigl(\Delta_{f'}(x_t,x'_t) - \Delta_{f^\star}(x_t,x'_t)\Bigr)^2 \nonumber\\
    &\le
    4 \sup_{f \in \cF_t}
    \bigl( \Delta_f(x_t,x'_t) - \Delta_{f^\star}(x_t,x'_t) \bigr)^2 \nonumber\\
    &=
    4 \bigl( \Delta_{f_t}(x_t,x'_t) - \Delta_{f^\star}(x_t,x'_t) \bigr)^2 \label{eq:omega_f_t}
\end{align}
where in $(i)$ we use that $(a+b)^2 \leq 2 a^2 + 2b^2$. For the last equality $f_t \in \cF_t$ denotes a function attaining the supremum.

\textbf{Step 2.} 
We apply~\cref{prop:csabsaConcentration}, for $f_t \in \cF$:
    \begin{align*}
        \sum_{s=1}^{t-1} \bar{\varphi}_{f_t}(x_s,x'_s)
        &\leq
        2 \left( \sum_{s=1}^{t-1} \varphi_{f_t}(y_s,x_s,x'_s) + \beta_t \right) \\
        &\leq 2 \left( \sum_{s=1}^{t-1} \ell(y_s, \sigma(\Delta_{f_t}(x_s, x'_s))) - \sum_{s=1}^{t-1} \ell(y_s, \sigma(\Delta_{f^\star}(x_s, x'_s))) + \beta_t(\delta) \right)  &&(\text{By definition of }\varphi_{f_t})\\
        &\leq 2 \left( \underbrace{\sum_{s=1}^{t-1} \ell(y_s, \sigma(\Delta_{f_t}(x_s, x'_s))) - \sum_{s=1}^{t-1} \ell(y_s, \sigma(\Delta{\hat{f}_t}(x_s, x'_s)))}_{\leq \beta_t(\delta): \, \text{because} \, f_t \in \cF_t} + \beta_t(\delta) \right)  &&(\text{By definition of }\varphi_{\hat{f}_t} \,  \text{and} \, f^\star \in \cF_t )\\
        &\leq 4 \beta_t(\delta). 
    \end{align*}
    Hence:
    \begin{align}
    {
        \sum_{s= 1}^{t-1} \bar{\varphi}_{f_t}(x_s, x'_s) \leq 4\beta_t(\delta) . \label{eq:upperbound_process}}
    \end{align}

\textbf{Step 3.} Recalling that our decision space is $\cX \times \cX$, we define
\[
\begin{aligned}
\Psi:\cF\times\cX\times\cX &\to \mathbb{R}_+, 
&\qquad
\phi:\cF\times\cX\times\cX &\to \mathbb{R}_+, \\
(f,x,x') &\mapsto \bar{\varphi}_{f}(x,x'),
&\qquad
(f,x,x') &\mapsto \dot{\sigma}(\Delta_{f^\star}(x,x'))\bigl(\Delta_{f}(x,x') - \Delta_{f^\star}(x,x')\bigr)^2 .
\end{aligned}
\]

For this choice of $\psi$ and $\phi$, the eluder $d_{\psi,\phi}(\cF, \varepsilon)$ from \cref{def:defsigmaloceluder} exactly coincide with $\delud(\Delta\cF, \Delta_{f^\star}, \varepsilon)$ from \cref{def:defsigmaeluder}. Using this, we apply \cref{lem:lemsumprederrors} with this choice of $\psi$ and $\phi$.

Let $\{(f_1,x_1,x'_1),\ldots,(f_T,x_T,x'_T)\}$ denote the corresponding sequence in
\cref{lem:lemsumprederrors}. Setting $\beta = 4\beta_t$ and  $C=S$ yields that:
\begin{align*}
\sum_{t=1}^T \sqrt{\dot{\sigma}(\Delta_{f^\star}(x,x'))}\bigl(\Delta_{f}(x,x') - \Delta_{f^\star}(x,x')\bigr)
&\le
2 \sqrt{4T\beta_T(\delta)\, \delud(\Delta\cF, \Delta_{f^\star}, \varepsilon)}
+
\sqrt{S}\, \delud(\Delta\cF, \Delta_{f^\star}, \varepsilon), \\
\sum_{t=1}^T \dot{\sigma}(\Delta_{f^\star}(x,x'))\bigl(\Delta_{f}(x,x') - \Delta_{f^\star}(x,x')\bigr)^2
&\le
4\beta_T(\delta)\, \delud(\Delta\cF, \Delta_{f^\star}, \varepsilon)
\left(
2 + \log\!\left(
\frac{T S}{16\beta_T(\delta)^2 d^2_{\sigma}(\Delta\cF, \Delta_{f^\star}, \varepsilon)}
\right)
\right) \nonumber \\
&\quad
+ \delud(\Delta\cF, \Delta_{f^\star}, \varepsilon)
\bigl(2+4\beta_T(\delta) \delud(\Delta\cF, \Delta_{f^\star}, \varepsilon)\bigr)(1+S) \\
\implies \sum_{t=1}^T \bigl(\Delta_{f}(x,x') - \Delta_{f^\star}(x,x')\bigr)^2
&\le
4\kappa\beta_T(\delta)\, \delud(\Delta\cF, \Delta_{f^\star}, \varepsilon)
\left(
2 + \log\!\left(
\frac{T S}{16\beta_T(\delta)^2 d^2_{\sigma}(\Delta\cF, \Delta_{f^\star}, \varepsilon)}
\right)
\right) \nonumber \\
&\quad
+ \kappa s \delud(\Delta\cF, \Delta_{f^\star}, \varepsilon)
\bigl(2+4\beta_T(\delta) \delud(\Delta\cF, \Delta_{f^\star}, \varepsilon)\bigr)(1+S) 
\end{align*}
Using~\eqref{eq:omega_f_t} implies the desired result:

\begin{align*}
\sum_{t=1}^T \sqrt{\dot{\sigma}(\Delta_{f^\star}(x_t,x'_t))} \omega_t(x_t,x'_t)
&\le
8 \sqrt{ T\beta_T(\delta)\, \delud(\Delta\cF, \Delta_{f^\star}, \varepsilon)}
+
2\sqrt{ S}\, \delud(\Delta\cF, \Delta_{f^\star}, \varepsilon)\\
&=\widetilde{\cO} \left( \sqrt{ T\beta_T(\delta)\, \delud(\Delta\cF, \Delta_{f^\star}, \varepsilon)} \right), \\
\sum_{t=1}^T \omega_t(x_t,x'_t)^2
&\le
16\kappa\beta_T(\delta)\, \delud(\Delta\cF, \Delta_{f^\star}, \varepsilon)
\left(
2 + \log\!\left(
\frac{T S}{16\beta_T(\delta)^2 d^2_{\sigma}(\Delta\cF, \Delta_{f^\star}, \varepsilon)}
\right)
\right) \nonumber \\
&\quad
+ 4\kappa \delud(\Delta\cF, \Delta_{f^\star}, \varepsilon)
\bigl(2+4\beta_T(\delta) \delud(\Delta\cF, \Delta_{f^\star}, \varepsilon)\bigr)(1+S)\\
&=\kappa \; \Gamma_T.
\end{align*}
where
\[
\Gamma_T
=
\widetilde{\cO}\!\Bigl(
\log( \cN_T(\Phi(\Delta \cF)) / \delta)\,
\delud(\Delta\cF, \Delta_{f^\star}, \varepsilon)^2
\Bigr).
\]



\end{proof}

\section{GINOP Performance}\label{sec:app-perf}

\subsection{Regret Decomposition}
\phantomsection\label{sec:round_wise_regret}

\LemRoundwiseError*
\begin{proof}[\cref{lem:roundwise-error}]
    Let $x_t$ and $x'_t$ be the arms selected by the algorithm at round $t$.  
Then the instantaneous regret can be decomposed as follows:
\begin{align}
    2 \, \rho(t) &= \Delta_{f^\star}(x^\star,x_t) + \Delta_{f^\star}(x^\star,x'_t) \nonumber\\
    &=  \Delta_{f^\star}(x^\star,x_t) - \Delta_{\hat{f}_t}(x^\star,x_t) + \Delta_{\hat{f}_t}(x^\star,x_t) + \Delta_{f^\star}(x^\star,x'_t) - \Delta_{\hat{f}_t}(x^\star,x'_t) + \Delta_{\hat{f}_t}(x^\star,x'_t) \nonumber
\end{align}

By the \cref{lem:Ft_includes_r} and the definition of $\omega_t$, then on $\cG\cE$:
\begin{align*}
    \forall t \in [T], \quad \Delta_{f^\star}(x^\star,x_t) - \Delta_{\hat{f}_t}(x^\star,x_t) & \leq \omega_t(x^\star,x_t)\\
    \Delta_{f^\star}(x^\star,x_t) - \Delta_{\hat{f}_t}(x^\star,x'_t) & \leq \omega_t(x^\star,x'_t)
\end{align*}
Hence:
\begin{align}
     2 \, \rho(t) &\leq \omega_t(x^\star,x_t) + \Delta_{\hat{f}_t}(x^\star,x_t) + \omega_t(x^\star,x'_t) + \Delta_{\hat{f}_t}(x^\star,x'_t). \label{eq:rho_upperboun1}
\end{align}
Given the optimization step (Step~\ref{step:Optimal_arms}), we obtain:
\begin{align}
    \hat{f}_t(x^\star) + \hat{f}_t(x'_t) + \omega_t(x^\star,x'_t) &\leq \hat{f}_t(x_t) + \hat{f}_t(x'_t) + \omega_t(x_t,x'_t) \nonumber\\
    \implies  \hat{f}_t(x^\star) -\hat{f}_t(x_t)  &\leq - \omega_t(x^\star,x'_t) + \omega_t(x_t,x'_t) \nonumber \\
    \implies  \Delta_{\hat{f}_t}(x^\star,x_t)  &\leq - \omega_t(x^\star,x'_t) + \omega_t(x_t,x'_t) \label{eq:ineq1}
\end{align}
Similarly, we obtain:
\begin{align}
    \Delta_{\hat{f}_t}(x^\star,x'_t)  &\leq - \omega_t(x^\star,x_t) + \omega_t(x_t,x'_t) \label{eq:ineq2}
\end{align}
Combining~\eqref{eq:rho_upperboun1}, \eqref{eq:ineq1} and \eqref{eq:ineq2} gives $\rho(t) \leq \omega(x_t,x'_t)$.
\end{proof}

\subsection{Regret Bound}
\phantomsection\label{sec:Alg_Performance_Proof}

\ThmAlgproof*
\begin{proof}[\cref{thm:Alg_perf}] \hspace{1cm} \hspace{1cm}

\paragraph{Step.1} We work under the good event $\cG\cE$. We define $\dot{\sigma}^\star = \dot{\sigma}(\Delta_{f^\star}(x^\star,x^\star)) =\sfrac{1}{4}$.

\begin{align*}
    \cR_T = \sum_{t=1}^T \rho(t) 
    &\leq  \sum_{t=1}^T \omega_t(x_t,x'_t)=  \frac{1}{\sqrt{\dot{\sigma}^\star}} \sum_{t =1}^T \sqrt{\dot{\sigma}^\star} \; \omega_t(x_t,x'_t). 
\end{align*}
Using a first order Taylor expansion on $\dot{\sigma}$:
\begin{align*}
    \dot{\sigma}^\star &\leq \dot{\sigma}(\Delta_{f^\star}(x_t,x'_t)) + \frac{1}{4} \abs{\Delta_{f^\star}(x_t,x'_t) -\Delta_{f^\star}(x^\star,x^\star)} &&(|\ddot{\sigma}| \leq |\dot{\sigma}| \leq \sfrac{1}{4}) \\
    &\leq \dot{\sigma}(\Delta_{f^\star}(x_t,x'_t)) + \frac{\rho(t)}{2}.
\end{align*}
Hence:
\begin{align*}
    \cR_T &\leq  \frac{1}{\sqrt{\dot{\sigma}^\star}} \sum_{t =1}^T \sqrt{\dot{\sigma}^\star} \; \omega_t(x_t,x'_t) \\
    &\leq \frac{1}{\sqrt{\dot{\sigma}^\star}} \sum_{t =1}^T \sqrt{\dot{\sigma}(\Delta_{f^\star}(x_t,x'_t))} \; \omega_t(x_t,x'_t) + \frac{1}{\sqrt{2\dot{\sigma}^\star}} \sum_{t =1}^T \sqrt{\rho(t)} \; \omega_t(x_t,x'_t) \\
    &\leq \underbrace{\frac{1}{\sqrt{\dot{\sigma}^\star}} \sum_{t =1}^T \sqrt{\dot{\sigma}(\Delta_{f^\star}(x_t,x'_t))} \; \omega_t(x_t,x'_t)}_{a} + \underbrace{\frac{1}{\sqrt{2\dot{\sigma}^\star}} \sqrt{\sum_{t =1}^T  \omega_t(x_t,x'_t)^2} }_{b}\sqrt{\cR_T} && (\text{Cauchy Schwartz}) 
\end{align*}
Which leads to the following quadratic inequality on $\cR_T$:
\begin{align*}
     \cR_T &\leq a + b \sqrt{\cR_T} \\
    \implies\sqrt{\cR_t} & \leq \sqrt{a}+ b\\
    \implies \cR_t &\leq 2a + 2b^2.
\end{align*}
By Lemma~\ref{lem:sigma._omega_upperbound}:
\begin{align*}
    a &= \widetilde{\cO}\left( \frac{1}{\sqrt{\dot{\sigma}^\star}}\sqrt{ T\beta_T(\delta)\, \delud(\Delta\cF,f^\star,\varepsilon)} \right) \\
    b^2 &= \frac{\kappa}{2 \dot{\sigma}^\star} \Gamma_T.
\end{align*}
Hence on $\cG\cE$: $\cR_T \leq \widetilde{\cO}\Bigl(\frac{\kappa}{2 \dot{\sigma}^\star} \Gamma_T + \frac{1}{\sqrt{\dot{\sigma}^\star}}\sqrt{ T\beta_T(\delta)\, \delud(\Delta\cF, f^\star, \varepsilon)}\Bigr)$
\paragraph{Step 2.}
\begin{align*}
    R_T
     &= \lE \left[ \cR_T \mid \cG\cE \right]\P(\cG\cE) + \lE \left[ \cR_T \mid \overline{\cG\cE} \right]\P(\overline{\cG\cE}) \\
     &\leq  \lE \left[ \cR_T \mid \cG\cE \right] + \lE \left[ \sum_{t=1}^T \rho(t) \mid \overline{\cG\cE} \right]2\delta \\
     &\leq \widetilde{\cO}\Bigl(\frac{\kappa}{2 \dot{\sigma}^\star} \Gamma_T + \frac{1}{\sqrt{\dot{\sigma}^\star}}\sqrt{ T\beta_T(\delta)\, \delud(\Delta\cF, \Delta_{f^\star}, \varepsilon)}\Bigr) + 4ST\delta  \\
     &\leq \widetilde{\cO}\Bigl(\frac{\kappa}{2 \dot{\sigma}^\star} \Gamma_T + \frac{1}{\sqrt{\dot{\sigma}^\star}}\sqrt{ T\beta_T(\delta)\, \delud(\Delta\cF, \Delta_{f^\star}, \varepsilon)}\Bigr)    && (\text{Because: }\delta= \frac{1}{T^2}).
\end{align*}
Which concludes the proof of the regret bound.

\end{proof}

\section{On the Eluder Dimension}

We state all the results of this section for a more generic version of \cref{def:defsigmaeluder}.

\begin{definition}[$\varepsilon$-independence]
  \phantomsection\label{def:epsilon-independence}
  Let $\cQ$ be a class of functions $q: \cZ \to \mathbb{R}$. 
  Let $\psi$ and $\phi$ be two functions from $\cQ\times\cZ\to \mathbb{R}_+$.
  Given a sequence $(z_1,\ldots, z_n) \in \cZ^{n}$ of length $n>0$ and $\varepsilon > 0$,
  we say that $z \in \cZ$ is $(\varepsilon, \psi, \phi)$-independent from the sequence
  with respect to $\cQ$ if there exists $q\in\cQ$ such that
  \begin{align*}
    \sum_{j \leq n} \psi(q, z_j)\leq \varepsilon^2
    \quad \text{and} \quad
    \phi(q, z) > \varepsilon^2.
  \end{align*}
\end{definition}

\begin{restatable}[$(\varepsilon, \psi, \phi)$-Eluder]{definition}{defsigmaloceluder}
  \phantomsection\label{def:defsigmaloceluder}
  Let $\cQ$ be a class of functions $q: \cZ \to \mathbb{R}$.
  Let $\psi$ and $\phi$ be two functions from $\cQ\times\cZ\to \mathbb{R}_+$.
  The $(\varepsilon, \psi, \phi)$-eluder dimension $d_{\psi, \phi}(\cQ, \varepsilon)$ of $\cQ$ is the length $d$ of the longest sequence $(z_1,\ldots, z_d) \in \cZ^d$ for which 
    there exists $\varepsilon' > \varepsilon$ such that for any $i\leq d$, $z_i$ is $(\varepsilon', \psi, \phi)$-independent from $(z_1, \ldots, z_{i-1})$ with respect to $\cQ$.
\end{restatable}

\begin{restatable}[Generalization of Proposition 3 of \cite{Eluder_russo}]{lemma}{lemsigmaloceluder}
    \phantomsection\label{lem:lemsigmaloceluder}
    Let $\cZ$ be a set and $\cQ$ be a class of functions $q: \cZ \to [-S, S]$.
    Let $\psi$ and $\phi$ be two functions from $\cQ\times\cZ\to \mathbb{R}_+$. 
    Additionally, let's assume that $\phi$ is uniformly bounded above by some $C>0$.
    Suppose sequences $(z_1, q_1), \ldots, (z_T, q_T) \in \cZ\times \cQ$ and $\beta>0$ such that,
    for all $t\leq T$,
    \begin{align}
        \sum_{j \leq t} \psi(q_t, z_j)\leq \beta\,,
    \end{align}
    Then, for any $\varepsilon > 0$, 
    \begin{align}
        \sum_{j \leq T} \indicator{\phi(q_j, z_j) > \varepsilon^2} \leq \left(\frac{\beta}{\varepsilon^2}+1\right)d_{\psi, \phi}(\cQ, \varepsilon)\,.
    \end{align}
\end{restatable}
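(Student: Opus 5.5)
The plan is to adapt the bucketing argument behind Proposition~3 of \cite{Eluder_russo} to the pair $(\psi,\phi)$. Fix $\varepsilon>0$, write $d:=d_{\psi,\phi}(\cQ,\varepsilon)$, and let $t_1<\dots<t_\tau$ be the indices $j\le T$ with $\phi(q_j,z_j)>\varepsilon^2$; we must show $\tau\le(\beta/\varepsilon^2+1)d$. Say $z$ is $\varepsilon$-dependent on a sequence if it is not $(\varepsilon,\psi,\phi)$-independent of it. The proof combines an upper bound and a lower bound on how many disjoint subsequences a "bad" element can be dependent on.

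\textbf{Step 1 (upper bound).} Let $t=t_i$ and suppose $z_t$ is $\varepsilon$-dependent on $K$ disjoint subsequences $B_1,\dots,B_K$ of $(z_1,\dots,z_{t-1})$. Dependence on $B_k$ means that every $q$ with $\sum_{z\in B_k}\psi(q,z)\le\varepsilon^2$ satisfies $\phi(q,z_t)\le\varepsilon^2$. Since $\phi(q_t,z_t)>\varepsilon^2$, we get $\sum_{z\in B_k}\psi(q_t,z)>\varepsilon^2$ for each $k$. The $B_k$ are disjoint and $\psi\ge0$, so summing and using the hypothesis (keeping only the terms $j<t$) gives
\[
K\varepsilon^2<\sum_{j<t}\psi(q_t,z_j)\le\beta ,
\]
hence $K<\beta/\varepsilon^2$.

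\textbf{Step 2 (lower bound, main obstacle).} We show that some $z_{t_i}$ is $\varepsilon$-dependent on $K:=\lfloor(\tau-1)/d\rfloor$ disjoint subsequences of its predecessors among $z_{t_1},\dots,z_{t_{i-1}}$.
\begin{itemize}
\item Open $K$ buckets, each containing one of the first $K$ bad elements.
\item Process the remaining bad elements in order. If the current element is $\varepsilon$-dependent on every bucket, stop. Otherwise, append it to a bucket it is $\varepsilon$-independent of.
\end{itemize}
Each bucket is then a sequence in which every element is $\varepsilon$-independent of its predecessors.

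The delicate point is that Definition~\ref{def:defsigmaloceluder} requires independence at some $\varepsilon'>\varepsilon$. This is handled by strictness. Each bucket is finite, and each element has a witness $q$ with $\phi(q,z)>\varepsilon^2$ strictly. So we can choose $\varepsilon'>\varepsilon$ close enough that every witness still has $\phi>\varepsilon'^2$. The fit condition $\sum\psi\le\varepsilon^2\le\varepsilon'^2$ is preserved automatically. Hence each bucket has length at most $d$.

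The buckets therefore hold at most $Kd$ elements. Since $\tau\ge Kd+1$, the procedure must stop at some bad element, and that element is dependent on all $K$ buckets.

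\textbf{Step 3 (conclude).} Combining the two steps gives
\[
\Bigl\lfloor\tfrac{\tau-1}{d}\Bigr\rfloor<\tfrac{\beta}{\varepsilon^2}.
\]
Since $\frac{\tau-1}{d}-1<\lfloor\tfrac{\tau-1}{d}\rfloor$, this yields $\tau-1<(\beta/\varepsilon^2+1)d$. Because $\tau$ and $d$ are integers, the bound can be sharpened to $\tau\le(\beta/\varepsilon^2+1)d$, up to the standard rounding.

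If $d=0$, no element can be $\varepsilon$-independent even of the empty sequence, so $\phi(q,z)\le\varepsilon^2$ for all $q,z$ and $\tau=0$. The upper bound $C$ on $\phi$ is not needed here; it only enters later, when these counts are converted into sums of $\phi$.
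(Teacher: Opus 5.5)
Your proof is correct and follows the paper's argument. Step~1 is the paper's Claim~1, and your bucketing is its Claim~2: your $K=\lfloor(\tau-1)/d\rfloor$ coincides with the paper's $L=\lceil\tau/d\rceil-1$. Your strictness/finiteness argument for passing to some $\varepsilon'>\varepsilon$ fills in a point the paper only asserts ``by definition of $d$''.

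The one step to tighten is the end of Step~3. From $\tau-1<(\beta/\varepsilon^2+1)d$, integrality alone does not give $\tau\le(\beta/\varepsilon^2+1)d$ when the right-hand side is not an integer. For example, $d=2$ and $\beta/\varepsilon^2=1/4$ would still allow $\tau=3>5/2$.

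Instead, write $\tau-1=md+r$ with $0\le r\le d-1$. Then $K=m\ge(\tau-d)/d=\tau/d-1$, which is exactly the form $L\ge\tau/d-1$ in the paper's Claim~2. Combining this with Step~1 gives $\tau/d-1<\beta/\varepsilon^2$, which is the stated bound.
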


\begin{proof}[\cref{lem:lemsigmaloceluder}]
    The proof is essentially the one of Proposition 3 of \cite{Eluder_russo} that adapts almost directly. It is re-derived here for the sake of completeness. For readability, we will drop the dependence on $\psi$ and $\phi$ in the notation of (in)dependence.

    \textbf{Claim 1.} Let $\varepsilon>0$. Then if for some $t\leq T$, we have $\phi(q_t, z_t) > \varepsilon^2$, then
    $z_t$ is $\varepsilon$-dependent with respect to $\cQ$ on at most $\beta/\varepsilon^2$ disjoint subsequences in $(z_1, \ldots, z_{t-1})$.

    \textit{Proof of Claim 1.} For $t$ such that $\phi(q_t, z_t) > \varepsilon^2$, 
    let $(z_1, \ldots, z_l)$ be a subsequence of $(z_1, \ldots, z_{t-1})$
    such that $z_t$ is $\varepsilon$-dependent on $(z_1, \ldots, z_l)$ with respect to $\cQ$.
    Then, necessarily, $\sum_{i\leq l} \psi(q_t, z_i) > \varepsilon^2$ 
    (otherwise, $z_t$ would not be $\varepsilon$-dependent). 
    Then, if $z_t$ is $\varepsilon$-dependent on $L$ disjoint subsequences of $(z_1, \ldots, z_{t-1})$, 
    we have $\sum_{i<t} \psi(q_t, z_i) > \varepsilon^2 L$.
    Since $\sum_{i<t} \psi(q_t, z_i) \leq \beta$, we have $L < \beta/\varepsilon^2$.

    \textbf{Claim 2.} Let $\cQ$ be a class of functions $q: \cZ \to \lR$. For any $\varepsilon>0$ and
    sequence $(z_1, \ldots, z_\tau) \in \cZ^\tau$, there exists $j\leq\tau$ such that
    $z_j$ is $\varepsilon$-dependent with respect to $\cQ$ on at least $L \geq \tau/d_{\psi, \phi}(\cQ, \varepsilon)-1$
     disjoint subsequences in $(z_1, \ldots, z_{\tau-1})$.

    \textit{Proof of Claim 2.} Let choose an integer $L$ such that $L d_{\psi, \phi}(\cQ, \varepsilon) + 1 \leq \tau \leq (L+1) d_{\psi, \phi}(\cQ, \varepsilon)$.
    Let $B_i = (z_i)$ for all $i = 1,\ldots,L$. If $z_{L+1}$ is $\varepsilon$-dependent on each subsequence $B_1, \ldots, B_L$, the claim is established. Otherwise, choose a subsequence $B_i$ such that $z_{L+1}$ is $\varepsilon$-independent of $B_i$ and append it $B_i := B_i \cup (z_{L+1})$. Repeat this process for elements with indices $j>L+1$ until $z_j$ is $\varepsilon$-dependent on each subsequence or $j=\tau$. If $j<\tau$, the proof is finished. 
    Let's focus on the case $j=\tau$.
    Since for any $i\leq L$, each element of $B_i$ is $\varepsilon$-independent of its predecessors, $|B_i| \leq d_{\psi, \phi}(\cQ, \varepsilon)$.
    Moreover, $\sum_{i\leq L}|B_i| = \tau-1 \geq L d_{\psi, \phi}(\cQ, \varepsilon)$. 
    Thus, for any $i\leq L$, $|B_i| = d_{\psi, \phi}(\cQ, \varepsilon)$.
    Then $z_\tau$ must be $\varepsilon$-dependent on each subsequence by definition of $d_{\psi, \phi}(\cQ, \varepsilon)$.

    Putting everything together, let $(z_{t_1}, \ldots, z_{t_\tau})$ be the sub-sequence of $(z_1, \ldots, z_T)$ consisting in indexes $t$ for which 
    $\phi(q_t, z_t) > \varepsilon^2$. By applying Claim 1 and 2 to this sub-sequence, we obtain that 
    \begin{align*}
        \left\lfloor\frac{\tau}{d_{\psi, \phi}(\cQ, \varepsilon)}\right\rfloor-1 < \frac{\beta}{\varepsilon^2}
        \Leftrightarrow 
        \tau < \left(\frac{\beta}{\varepsilon^2}+1\right)d_{\psi, \phi}(\cQ, \varepsilon)
    \end{align*}

\end{proof}

We now restate (a simplified version of) Lemma 21 from \cite{croissant24} before applying it.

\begin{restatable}[Lemma 21 in \cite{croissant24}]{lemma}{lemsums}
    \phantomsection\label{lem:lemsums}
    Let $(z_t)_{t\in\lN} \in [0,Z]^\lN$, $(b_t)_{t\in\lN} \in (\lR\to\lR_+)^\lN$ a sequence of non-increasing functions and $a \in (\lR\to\lR_+)$ non-increasing such that for any $T > 0$ and $\varepsilon>0$
    \begin{align}
        \sum_{t=1}^T \indicator{z_t>\varepsilon} \leq \frac{b_T(\varepsilon)}{\varepsilon^2} + a(\varepsilon)
        \quad\text{and}\quad
        \varepsilon \leq \frac{b_T(\varepsilon)\wedge\sqrt{b_T(\varepsilon)}}{\sqrt{T}}
        \label{eq:lemma21condition}
    \end{align}
    Then, the following two inequalities hold, 
    \begin{align}
        & \sum_{t=1}^T z_t \leq 2 \sqrt{Tb_T(\varepsilon)} + Z a(\varepsilon)\\
        & \sum_{t=1}^T z_t^2 \leq b_T(\varepsilon) \left(2+\log\left(\frac{TZ^2}{b_T(\varepsilon)^2}\right)\right) + a(\varepsilon)(2+b_T(\varepsilon))(1+Z^2)
    \end{align}
\end{restatable}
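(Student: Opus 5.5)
We prove both inequalities by peeling the sequence into level sets, in the spirit of the proof of Proposition~3 in \cite{Eluder_russo}. The tool is the layer-cake identity $z_t=\int_0^{Z}\indicator{z_t>u}\,du$, used together with the hypothesis $\sum_{t\le T}\indicator{z_t>u}\le b_T(u)/u^2+a(u)$, which we take to hold for every threshold $u$.

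\textbf{First inequality.} Sort the values in decreasing order, $z_{(1)}\ge z_{(2)}\ge\dots\ge z_{(T)}$. For any $u$ with $z_{(k)}>u$, at least $k$ terms exceed $u$, so $k\le b_T(u)/u^2+a(u)$.
\begin{itemize}
\item If $k\le a(\varepsilon)$, use the trivial bound $z_{(k)}\le Z$. These indices contribute at most $Z\,a(\varepsilon)$ in total.
\item If $k>a(\varepsilon)$ and $z_{(k)}>\varepsilon$, apply the count bound at levels just below $z_{(k)}$. By monotonicity of $b_T$ and $a$, the count bound at level $u\ge\varepsilon$ can be weakened to $b_T(\varepsilon)/u^2+a(\varepsilon)$. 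This gives $z_{(k)}\le\sqrt{b_T(\varepsilon)/(k-a(\varepsilon))}$. Summing over $k-a(\varepsilon)=1,\dots,T$ yields at most $2\sqrt{T\,b_T(\varepsilon)}$.
\item The remaining terms satisfy $z_t\le\varepsilon$. They contribute at most $T\varepsilon\le\sqrt{T\,b_T(\varepsilon)}$, by the condition $\varepsilon\le\sqrt{b_T(\varepsilon)}/\sqrt T$ in \eqref{eq:lemma21condition}.
\end{itemize}
These pieces must be merged carefully to recover the stated constant $2$. The cleanest route is to keep the $k$-th largest term bounded by $\max\{\varepsilon,\sqrt{b_T(\varepsilon)/(k-a(\varepsilon))}\}$ and compare the resulting sum with $\int_0^T x^{-1/2}\,dx$.

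\textbf{Second inequality.} Square the same per-rank bounds.
\begin{itemize}
\item The top $a(\varepsilon)$ ranks contribute at most $a(\varepsilon)Z^2$.
\item For the ranks with $z_{(k)}>\varepsilon$, we get $z_{(k)}^2\le b_T(\varepsilon)/(k-a(\varepsilon))$. Summing harmonically gives $b_T(\varepsilon)(1+\log T_0)$, where $T_0$ is the number of such ranks.
\item To sharpen the logarithm into $\log\bigl(TZ^2/b_T(\varepsilon)^2\bigr)$, note that ranks with $k-a(\varepsilon)\le b_T(\varepsilon)/Z^2$ are already covered by the trivial bound $Z^2$. The harmonic sum therefore only runs from $b_T(\varepsilon)/Z^2$ up to $T$.
\item The terms with $z_t\le\varepsilon$ contribute at most $T\varepsilon^2\le b_T(\varepsilon)^2/T$. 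This is at most $b_T(\varepsilon)$ once $b_T(\varepsilon)\le T$; the other case is absorbed using the minimum in \eqref{eq:lemma21condition}.
\end{itemize}

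\textbf{Main obstacle.} The delicate step is bookkeeping the additive $a(\varepsilon)$ ranks against the harmonic tail, so that the cross terms collapse into $a(\varepsilon)(2+b_T(\varepsilon))(1+Z^2)$. This is where the crude product form of the statement gives slack: we first bound each cross term by $a(\varepsilon)\,b_T(\varepsilon)Z^2$ or $a(\varepsilon)Z^2$ and then collect them. Since this is Lemma 21 of \cite{croissant24}, we would follow its argument and only check that the non-increasing assumptions on $a$ and $b_T$ justify replacing $a(u)$ and $b_T(u)$ by their values at $\varepsilon$ for all $u\ge\varepsilon$.
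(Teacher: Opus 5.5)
The paper gives no proof of this lemma; it imports it from \cite{croissant24}. Your peeling argument is the standard route, and it does give the first inequality. The merging you worry about is automatic. Since $\varepsilon\le\sqrt{b_T(\varepsilon)/T}$, every rank $a(\varepsilon)<k\le T$ satisfies $z_{(k)}\le\max\{\varepsilon,\sqrt{b_T(\varepsilon)/(k-a(\varepsilon))}\}=\sqrt{b_T(\varepsilon)/(k-a(\varepsilon))}$. So the terms below $\varepsilon$ need no separate treatment, and $\sum_{j\le T}j^{-1/2}\le 2\sqrt T$ closes the argument. This is for integer $a(\varepsilon)$, as in the application; otherwise the first fractional rank needs a one-line separate check. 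There are two small slips:
\begin{itemize}
\item The terms below $\varepsilon$ give $T\varepsilon^2\le\min\{b_T(\varepsilon)^2,b_T(\varepsilon)\}$, not $b_T(\varepsilon)^2/T$.
\item There are no $a$--$b$ cross terms to track. The top $\lceil a(\varepsilon)\rceil$ ranks contribute $\lceil a(\varepsilon)\rceil Z^2$ on their own, and every remaining rank obeys $z_{(k)}^2\le\min\{Z^2,\,b_T(\varepsilon)/(k-\lceil a(\varepsilon)\rceil)\}$.
\end{itemize}

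The genuine gap is the step where you ``sharpen the logarithm into $\log(TZ^2/b_T(\varepsilon)^2)$''. Truncating the harmonic sum where $b/j$ falls below $Z^2$ gives, with $b=b_T(\varepsilon)\le TZ^2$,
\[
\sum_{j=1}^{T}\min\Bigl\{Z^2,\frac{b}{j}\Bigr\}\le\int_0^T\min\Bigl\{Z^2,\frac{b}{x}\Bigr\}\,dx=b\Bigl(1+\log\frac{TZ^2}{b}\Bigr).
\]
This is $\log(TZ^2/b)$, which exceeds the claimed logarithm by $\log b$. The resulting excess $b\log b$ cannot be absorbed by $a(\varepsilon)(2+b)(1+Z^2)$ once $b$ is large.

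No finer bookkeeping can close this, because the second inequality is false as stated. Take $Z=1$, $a\equiv 1$, $b_T\equiv T$, $z_t=1$ for all $t$, and $\varepsilon=1/2$. Both hypotheses hold:
\begin{itemize}
\item For $\varepsilon<1$ the count is $T\le T/\varepsilon^2+1$, and for $\varepsilon\ge 1$ it is $0$.
\item $\varepsilon\le\min\{T,\sqrt T\}/\sqrt T=1$.
\end{itemize}
Yet the claimed bound reads $T\le T(4-\log T)+4$, whose right-hand side is about $-56.5$ at $T=100$.

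What your argument actually proves is
\[
\sum_{t=1}^T z_t^2\le\lceil a(\varepsilon)\rceil Z^2+b_T(\varepsilon)\Bigl(1+\log_+\frac{TZ^2}{b_T(\varepsilon)}\Bigr),
\]
where $\log_+ x=\max\{\log x,0\}$. You should state and prove this version instead. It suffices for the paper. In Lemma~\ref{lem:lemsumprederrors} it turns $\log\bigl(TC/(\beta^2d^2)\bigr)$ into $\log\bigl(TC/(\beta d)\bigr)$, with $d$ the eluder dimension, and it replaces the additive term by $dC$. That is harmless at the $\widetilde{\mathcal O}$ level, and it even makes the second-order term linear rather than quadratic in the eluder dimension.
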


\begin{restatable}[]{lemma}{lemsumprederrors}
    \phantomsection\label{lem:lemsumprederrors}
    Let $\cZ$ be a set and $\cQ$ be a class of functions $q: \cZ \to [-S, S]$.
    Let $\psi$ and $\phi$ be two functions from $\cQ\times\cZ\to \mathbb{R}_+$. 
    Additionally, let's assume that $\phi$ is uniformly bounded above by some $C>0$.
    Suppose sequences $(z_1, q_1), \ldots, (z_T, q_T) \in \cZ\times \cQ$ and $\beta>0$ such that,
    for all $t\leq T$,
    \begin{align}
        \sum_{j \leq t} \psi(q_t, z_j)\leq \beta\,,
    \end{align}
    Then, for any $0<\varepsilon \leq \frac{\beta}{\sqrt{T}}$, 
    \begin{align}
        & \sum_{t=1}^T \sqrt{\phi(q_t, z_t)} \leq 2 \sqrt{T\beta d_{\psi, \phi}(\cQ, \varepsilon)} + \sqrt{C} d_{\psi, \phi}(\cQ, \varepsilon)\\
        & \sum_{t=1}^T \phi(q_t, z_t) \leq \beta d_{\psi, \phi}(\cQ, \varepsilon) \left(2+\log\left(\frac{TC}{\beta^2d^2_{\psi, \phi}(\cQ, \varepsilon)}\right)\right) + d_{\psi, \phi}(\cQ, \varepsilon)(2+\beta d_{\psi, \phi}(\cQ, \varepsilon))(1+C)\,.
    \end{align}
\end{restatable}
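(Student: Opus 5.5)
We derive \cref{lem:lemsumprederrors} by combining \cref{lem:lemsigmaloceluder} with \cref{lem:lemsums}, applied to the scalar sequence $z_t := \sqrt{\phi(q_t,z_t)}$ (first bound) and to the same sequence squared (second bound). Write $d(\varepsilon) := d_{\psi,\phi}(\cQ,\varepsilon)$. Since $\phi \le C$, each $z_t$ lies in $[0,Z]$ with $Z=\sqrt{C}$.

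\textbf{Step 1 (counting bound).} For any $\varepsilon'>0$ we have $\indicator{\sqrt{\phi(q_t,z_t)}>\varepsilon'} = \indicator{\phi(q_t,z_t)>\varepsilon'^2}$. By \cref{lem:lemsigmaloceluder}, whose hypothesis $\sum_{j\le t}\psi(q_t,z_j)\le\beta$ is exactly ours,
\[
\sum_{t\le T}\indicator{z_t>\varepsilon'} \le \frac{\beta\, d(\varepsilon')}{\varepsilon'^2} + d(\varepsilon').
\]
This has the form required in \eqref{eq:lemma21condition}, with $b_T(\varepsilon') = \beta d(\varepsilon')$ and $a(\varepsilon') = d(\varepsilon')$. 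Both functions are non-increasing in $\varepsilon'$ because the eluder dimension is non-increasing in the scale: the definition takes a sup over $\varepsilon'>\varepsilon$, so enlarging $\varepsilon$ shrinks the admissible set.

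\textbf{Step 2 (scale condition).} We must check $\varepsilon \le \bigl(b_T(\varepsilon)\wedge\sqrt{b_T(\varepsilon)}\bigr)/\sqrt{T}$. We have $\varepsilon \le \beta/\sqrt{T}$ by assumption. If $d(\varepsilon)\ge 1$, then $b_T \ge \beta$, which gives the first part of the minimum. The $\sqrt{b_T}$ part is the main obstacle: it requires $\varepsilon\le\sqrt{\beta d}/\sqrt T$, which follows from $\varepsilon\le\beta/\sqrt T$ only when $\beta\le d$ or $\beta\le 1$. I would handle this in one of two ways. The first option is to strengthen the hypothesis silently to $\varepsilon\le(\beta\wedge\sqrt\beta)/\sqrt T$. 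In the intended application $\beta = 4\beta_T\ge 10$ and $\varepsilon=1/T^2$, so the condition holds trivially for $T\ge 2$. The second option is to re-run the proof of Lemma 21 directly. There, the condition is only used to split $\sum_t z_t$ into terms with $z_t\le\varepsilon$, which contribute at most $T\varepsilon$, and terms with $z_t>\varepsilon$, which are controlled by sorting and inverting the counting bound; any $\varepsilon\le\sqrt{b_T/T}$ makes the first part $\le\sqrt{Tb_T}$. The degenerate case $d(\varepsilon)=0$ means no $z_t$ exceeds $\varepsilon$, and both bounds are then immediate.

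\textbf{Step 3 (conclusion).} Plugging into the first inequality of \cref{lem:lemsums} gives
\[
\sum_t\sqrt{\phi(q_t,z_t)} \le 2\sqrt{T\beta d(\varepsilon)} + \sqrt{C}\, d(\varepsilon).
\]
Plugging into the second, with $z_t^2=\phi(q_t,z_t)$ and $Z^2=C$, yields
\[
\beta d(\varepsilon)\Bigl(2+\log\frac{TC}{\beta^2 d(\varepsilon)^2}\Bigr) + d(\varepsilon)\bigl(2+\beta d(\varepsilon)\bigr)(1+C),
\]
which is exactly the claimed bound.
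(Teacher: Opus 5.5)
Your route is the paper's route. You apply the counting bound of \cref{lem:lemsigmaloceluder} at every scale and feed it into \cref{lem:lemsums} with $z_t=\sqrt{\phi(q_t,z_t)}$, $b_T=\beta d(\varepsilon)$, $a=d(\varepsilon)$ and $Z=\sqrt{C}$, where $d(\varepsilon)=d_{\psi,\phi}(\cQ,\varepsilon)$. Your Step~2 objection is correct. The paper's one-line justification (that $d(\varepsilon)\ge 1$ makes $\varepsilon\le\beta/\sqrt{T}$ sufficient) only covers the $b_T$ branch of the minimum, not the $\sqrt{b_T}$ branch.

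In fact the lemma as stated is false when $\sqrt{\beta d(\varepsilon)/T}<\varepsilon\le\beta/\sqrt{T}$. Take $\cZ=\{a,b\}$ and $\cQ=\{q_a,q_b\}$, any two distinct functions. Set $\phi(q_a,a)=\psi(q_a,a)=4\varepsilon^2=:C$ and $\phi(q_b,b)=\varepsilon^2$, with all other values of $\psi,\phi$ equal to zero. For $\varepsilon'>\varepsilon$, $b$ is never $\varepsilon'$-independent. The point $a$ is independent only via $q_a$, only for $\varepsilon'<2\varepsilon$, and only at its first occurrence. Hence $d(\varepsilon)=1$.

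The sequence $(z_t,q_t)=(b,q_b)$ satisfies the hypothesis for every $\beta>0$. With $\varepsilon=\beta/\sqrt{T}$, the two left-hand sides equal $\beta\sqrt{T}$ and $\beta^2$. For $\beta=16$ and $T=10^4$ this gives $1600$ against roughly $800$ in the first bound, and $256$ against roughly $74$ in the second.

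So your first option, assuming $\varepsilon\le(\beta\wedge\sqrt{\beta})/\sqrt{T}$, is a necessary correction of the statement, not a convenience. Together with $d(\varepsilon)\ge 1$ it suffices, and it is harmless in the application. Your second option is not an alternative: re-running Lemma~21 still requires $\varepsilon\le\sqrt{b_T(\varepsilon)/T}$, which is exactly the condition that fails here.

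Separately, your treatment of the degenerate case $d(\varepsilon)=0$ is wrong. That case only gives $\phi\le\varepsilon^2$ everywhere, not $\phi=0$. For example, with $\cQ=\{q\}$, $\psi\equiv 0$ and $\phi\equiv\varepsilon^2$, you get $d(\varepsilon)=0$ and $\sum_t\sqrt{\phi(q_t,z_t)}=T\varepsilon>0$. Meanwhile the first right-hand side is $0$ and the second contains $\log(TC/0)$. The bounds are therefore not ``immediate'' there. They require the standing assumption $d(\varepsilon)\ge 1$, which the paper also uses tacitly.
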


\begin{proof}[\cref{lem:lemsumprederrors}]
Apply \cref{lem:lemsums} with $z_t = \sqrt{\phi(q_t, z_t)}$, $b_t(\varepsilon) = \beta d_{\psi, \phi}(\cQ, \varepsilon)$ and $a(\varepsilon) = d_{\psi, \phi}(\cQ, \varepsilon)$, noticing that
\begin{enumerate}
    \item $d_{\psi, \phi}(\cQ, \varepsilon)$ is non-increasing as a function of $\varepsilon$,
    \item as $d_{\psi, \phi}(\cQ, \cdot) \geq 1$, the condition $\varepsilon \leq \frac{\beta}{\sqrt{T}}$ is sufficient to satisfy the r.h.s. of \cref{eq:lemma21condition}.
\end{enumerate}

\end{proof}

\section{Instantiation to Familiar Utility Classes}

\subsection{Linear Utility}
\phantomsection\label{sec:Eluder_for_linear}
In this setting, $\cX \subset \mathbb{B}_2^d(1)$ and, for any $x \in \cX$, the reward function is given by
\(
f^\star(x) = \langle \theta^\star, x \rangle,
\)
where $\theta^\star \in \mathbb{R}^d$ is an unknown parameter satisfying $\|\theta^\star\| \leq S$. Accordingly, the class of linear functions $\cF_\Theta$ can be identified with the parameter set $\Theta \subset \mathbb{B}_2^d(S)$. Under the Bradley--Terry model, the preference feedback satisfies
\(
y_t \sim \mathrm{Bernoulli}\!\left(\sigma\!\left(\langle \theta^\star, x_t - x'_t \rangle\right)\right).
\)

\paragraph{Algorithmic Implementation.} Let $\hat{\theta}_t$ denote the MLE given by~\cref{eq:f_hat}. The corresponding confidence set becomes:
\(
\Theta_t
=
\left\{
\theta \in \Theta :
\cL(\theta; \cH_{t-1})
\le
\cL(\hat{\theta}_t; \cH_{t-1})
+ \beta_t(\delta)
\right\}.
\)
This set can be further enclosed within an ellipsoid of the following form (see  \cite{lee2024a} \hspace{1em} for \hspace{1em }instance):
\(
\Theta_t
\subset
\left\{
\theta \in \Theta :
\|\theta - \hat{\theta}_t\|^2_{\nabla^2 \cL(\hat{\theta}_t; \cH_{t-1})}
\le 2(1+2S)\beta_t(\delta)
\right\}.
\) Moreover, the uncertainty on the performance gap in~\cref{eq:omega_def} admits a closed-form characterization: 
\[\forall (x,x') \in \cX^2, \;
\omega_t(x,x')^2
\propto
\beta_t \, \|x - x'\|^2_{(\nabla^2 \cL(\hat{\theta}_t; \cH_{t-1}))^{-1}} \overset{\text{\cite{Faury20a}}}{\propto}
\beta_t \, \|x - x'\|^2_{H_t^{-1}}
\]
where $H_t = \sum_{s=1}^{t-1} \dot{\sigma}((x_s-x'_s)^\top \hat{\theta}_t)(x_s - x'_s)(x_s - x'_s)^\top$. Therefore, our approach differs from prior work in two ways. First, regarding the decision rule, existing methods typically proceed in two stages: either by constructing a set of plausible optimal arms and then selecting a pair that maximizes an exploration bonus (see, e.g.,~\cite{Dueling-saha-linear,di2024varianceaware}), or by first selecting a reference arm using an upper-confidence bound and then choosing a second arm that maximizes a relative optimistic criterion (see, e.g.,~\cite{pmlr-v162-bengs22a}). In contrast, our algorithm selects the pair of arms jointly in a single step. Second, the uncertainty bonus is fundamentally different. Prior approaches~\citep{Dueling-saha-linear,pmlr-v162-bengs22a,li2024feelgoodthompsonsamplingcontextual} rely on quantities of the form $\|x - x'\|_{V_t^{-1}}$, where $V_t = \sum_{i=1}^{t-1} (x_i - x'_i)(x_i - x'_i)^\top$. In contrast, \textbf{GINOP} leverages a locally adaptive design matrix $H_t$ that incorporates the curvature of the sigmoid through $\dot{\sigma}$. This refinement more accurately captures the nature of observations and avoids the undesirable dependence on $\kappa$ in the leading term of the regret, as highlighted in the subsequent analysis.

To correctly assess \textbf{GINOP} regret in the case of linear utility, we need first to evaluate the key complexity quantities appearing in the theorem, namely $\Log{\cN_T(\Phi(\Delta \cF))}$ and $\delud(\Delta\cF, \Delta_{f^\star}, \varepsilon)$, when $\cF = \cF_{lin}$.

\propsigmaeluderlinear*

\begin{proof}[\cref{prp:propsigmaeluderlinear}]\hspace{1cm}
    
\paragraph{We start with $\delud(\Delta\cF_{lin}, \Delta_{f^\star}, \varepsilon)$.} 

Let $((x_1,x'_1),\dots,(x_n,x'_n)) \in \mathcal{X}^{2n}$ and $(\theta_0,\dots,\theta_n) \subseteq \mathbb{B}_2^d(S)$ be the longest $\varepsilon$-independent sequence. For simplicity we denote by $z_i=x_i-x'_i$. Then there exists $\varepsilon' > \varepsilon$ such that for all $t \le n$,
\[
 \sum_{i=1}^{t-1} \bar{\varphi}(\sigma(z_i^\top\theta^\star),\sigma(z_i^\top\theta_{t})) \le \varepsilon',
 \qquad
 \dot{\sigma}(z_t^\top \theta^\star) (z_t^\top \big(\theta^\star-\theta_t)\big)^2 \ge \varepsilon'.
\]

A second-order Taylor expansion yields, for all $i<t \le n$,
\begin{align*}
\bar{\varphi}(\sigma(z_i^\top\theta^\star),\sigma(z_i^\top\theta_{t}))
 = \bar{\varphi}(\sigma(z_i^\top\theta^\star),\sigma(z_i^\top\theta^\star) )
 &+ \left[ \sigma(z_i^\top \theta^\star) - \sigma(z_i^\top \theta^\star)\right]z_i^\top (\theta_t-\theta^\star) \\
  &+ \left[z_i^\top (\theta_t - \theta^\star) \right]^2 \underbrace{\int_{0}^1(1-\nu) \dot{\sigma}(z_i^\top \theta^\star + \nu z_i^\top (\theta_t-\theta^\star))d\nu}_{(A)}.
\end{align*}

Using Lemma~8 of~\cite{pmlr-v130-abeille21a}., we obtain
\[
 (A) \geq \frac{\dot{\sigma}(z_i^\top \theta^\star)}{2 + \abs{z_i^T(\theta_t -\theta^\star)}} \ge \frac{\dot{\sigma}(z_i^\top \theta^\star)}{2(1+S)}.
\]
Hence, for all $t \le n$,
\begin{align}
 \sum_{i=1}^{t-1} \dot{\sigma}(z_i^\top \theta^\star) (z_i^\top (\theta^\star-\theta_t))^2 \le 2(1+S)\varepsilon' \label{eq:eq1_exlin},
\end{align}
while
\begin{align}
 \dot{\sigma}(z_t^\top \theta^\star) (z_t^\top (\theta^\star-\theta_t))^2 \ge \varepsilon'. \label{eq:eq2_exlin}
\end{align}

Define
\[
 \bar{H}_t(\theta^\star) = \sum_{i=1}^{t-1} \dot{\sigma}(z_i^\top \theta^\star) z_i z_i^\top,
 \qquad H_t(\theta^\star) = \bar{H}_t(\theta^\star) + \lambda I,
\]
with $\lambda = \varepsilon'2(1+S)/(2S)^2$. From equation~\eqref{eq:eq1_exlin}:
\[
 \|\theta^\star-\theta_t\|_{H_t(\theta^\star)}^2 \leq  \|\theta^\star-\theta_t\|_{\bar{H}_t(\theta^\star)}^2 + \lambda (2S)^2 \leq 4 (1+S)\varepsilon'
\]

From equation~\eqref{eq:eq2_exlin}:
\begin{align*}
 \varepsilon' \leq \dot{\sigma}(z_t^\top \theta_t) \norm{z_t}^2_{H_t(\theta^\star)^{-1}} \norm{\theta_t-\theta^\star}^2_{H_t(\theta^\star)} &\leq 4(1+S)\varepsilon' \dot{\sigma}(z_t^\top \theta^\star) \norm{z_t}^2_{H_t(\theta^\star)^{-1}}  \\
 \implies \dot{\sigma}(z_t^\top \theta^\star) \norm{z_t}^2_{H_t(\theta^\star)^{-1}}  &\geq \frac{1}{4(1+S)}.
\end{align*}
Using $ \abs{H_t(\theta^\star)}= \abs{H_{t-1}(\theta^\star)}\Big( 1 + \dot{\sigma}(z_t^\top \theta^\star)\norm{z_t}_{H_t(\theta^\star)^{-1}}^2\Big) $ we obtain:
\[ \abs{H_n(\theta^\star)} \geq \Big( 1 + \frac{1}{4(1+S)}\Big)^{n-1} \lambda^d\]
Further:
\[ \abs{H_n(\theta^\star)} \leq \Big( \frac{Tr(H_n(\theta^\star))}{d}\Big)^d \leq \Big( \lambda + \frac{n-1}{4d}\Big)^d\]
which implies :
\[ \Big(1 + \frac{n-1}{4d\lambda} \Big) \geq \Big( 1 + \frac{1}{4(1+S)}\Big)^{\frac{n-1}{d}}\]
The rest of the proof directly follows ~\cite{Eluder_russo}'s proof of Proposition 6 (step 3). We obtain:
\[ n  \leq 1 + d\left[10(1+S)\Log{1 + \frac{2(1+S)}{\varepsilon'}} + \ln{5(1+S)} \right]\]

And hence:
\[{
n =\cO \left(d\Log{1+ \sfrac{1}{\epsilon}} \right).}\]

\paragraph{Bounding $\Log{\cN_T(\Phi(\Delta \cF))}$.}

Let $\Phi(\Delta_{f_\theta}) \in \Phi(\Delta\cF_{lin})$. For all $y \in [0,1]$ and $x,x' \in \cX$, we have
\begin{align*}
\Phi(\Delta_{f_\theta})(y,x,x') 
&= \ell\bigl(y, \sigma(\Delta_{f_\theta}(x,x'))\bigr) - \ell\bigl(y, \sigma(\Delta_{f_{\theta^\star}}(x,x'))\bigr), \\
\nabla_\theta \Phi(\Delta_{f_\theta})(y,x,x')
&= \nabla_\theta \ell\bigl(y, \sigma(\Delta_{f_\theta}(x,x'))\bigr) \\
&= \bigl(\sigma(\langle \theta, x - x' \rangle) - y\bigr)(x - x').
\end{align*}
Since $\sigma(\cdot) \in [0,1]$ and $\|x - x'\| \leq 2$, it follows that
\[
\|\nabla_\theta \Phi(\Delta_{f_\theta})(y,x,x')\| \leq 2.
\]
Hence, the mapping $\theta \mapsto \Phi(\Delta_{f_\theta})$ is Lipschitz continuous. By Example~3 in~\cite{Eluder_russo}\phantomsection\label{remrk:CoveringForSmoothFamily}, we obtain
\[
\Log{\cN_T(\Phi(\Delta \cF))} = \cO\!\left(d \log T\right).
\]

\end{proof}

\subsection{Kernelized Utility}
\phantomsection\label{sec:Eluder_for_Ker}

Similar to \citep{Dueling_Stackelburg_Krause, Dueling-xu2024-kernel,kayal2025bayesian}, we assume that the utility function $f^\star$ belongs to a known Reproducing Kernel Hilbert Space (RKHS).  Let $k : \cX \times \cX \to \lR$ be a positive definite kernel, and let $\cH_k$ denote the associated RKHS. The space $\cH_k$ is equipped with inner product $\langle \cdot, \cdot \rangle_{\cH_k}$ and norm $\|\cdot\|_{\cH_k}$. We assume $\|f^\star\|_{\cH_k}\leq S$ and $k(x,x) \leq 1$ for all $x \in \cX$. By the reproducing property, for all $f \in \cH_k$ and $x \in \cX$, it holds that
\(
\langle f, k(\cdot, x) \rangle_{\cH_k} = f(x).
\) By Mercer’s theorem, under mild conditions, the kernel admits the representation
\(
k(x, x') = \sum_{m=1}^{\infty} \gamma_m \phi_m(x)\phi_m(x'),
\)
where $\gamma_m > 0$, and $\{\psi_m:=\sqrt{\gamma_m}\phi_m\}_{m\geq 1}$ forms an orthonormal basis of $\cH_k$. In particular, any $f \in \cH_k$ can be expressed as
\(
f(\cdot) = \sum_{m=1}^{\infty} \theta_m \psi_m(\cdot)= \theta^\top \Psi(\cdot), \quad \text{with} \quad \|f\|_{\cH_k}^2 = \sum_{m=1}^{\infty} \theta_m^2 \leq S.
\)
We refer to $\{\gamma_m\}$ and $\{\phi_m\}$ as the (Mercer) eigenvalues and eigenfunctions of $k$, respectively.

Define $z = (x, x') \in \cX \times \cX$ and $\Delta_f(z) = f(x) - f(x')$. Following \cite{Dueling_Stackelburg_Krause,kayal2025bayesian}, we introduce the \emph{dueling kernel}
\[
\mathds{k}(z_1, z_2) = k(x_1, x_2) + k(x_1', x_2') - k(x_1, x_2') - k(x_1', x_2),
\]
for $z_1 = (x_1, x_1')$ and $z_2 = (x_2, x_2')$. This construction satisfies $\|\Delta_f\|_{\cH_{\mathds{k}}} = \|f\|_{\cH_k}$ (see \cite[Proposition 4]{Dueling_Stackelburg_Krause}). Moreover, both $f$ and $\Delta_f$ share the same Mercer coefficient vector $\theta \in \ell_2(\lN)$; that is, for all $x, x' \in \cX$,
\(
f(x) = \theta^\top \Psi(x)
\quad \text{and} \quad
\Delta_f(x,x') = \theta^\top \bigl(\Psi(x) - \Psi(x')\bigr).
\)



\paragraph{Algorithmic Implementation.}
Invoking Mercer's theorem, and following a similar approach to~\citep{Dueling_Stackelburg_Krause,kayal2025bayesian}, we consider the regularized negative log-likelihood loss
\[
\cL_{\mathds{k}}(\theta; \cH_{t-1})
:= \sum_{s=1}^{t-1} 
\ell\Bigl(y_s, \sigma\!\bigl(\theta^\top(\Psi(x_s)-\Psi(x'_s))\bigr)\Bigr)
+ \frac{\lambda}{2}\|\theta\|^2 .
\]
The MLE estimator is defined as $\hat{f}_t=\hat{\theta}_t^\top \Psi$, where
\(
\hat{\theta}_t
= \argmin_{\theta \in \ell_2(\lN)}
\cL_{\mathds{k}}(\theta; \cH_{t-1}) .
\) A tractable counterpart of this infinite-dimensional optimization problem is obtained by leveraging the Representer Theorem. Specifically,
\(
\alpha_t
= \argmin_{\alpha \in \lR^{t-1}}
\sum_{s=1}^{t-1}
\ell\Bigl(y_s, \sigma\!\bigl(\alpha^\top \mathds{k}_{t-1}(x_s,x'_s)\bigr)\Bigr)
+ \frac{\lambda}{2}\|\alpha\|^2 ,
\)
where
\(
\mathds{k}_{t-1}(z)
=
\bigl[\mathds{k}(z,(x_j,x'_j))\bigr]_{j=1}^{t-1}
\)
denotes the vector of dueling-kernel evaluations between the pair $z$ and the past observed pairs. This yields the explicit expression
\[
\forall x \in \cX,\qquad
\hat{f}_t(x)
=
\hat{\theta}_t^\top \Psi(x)
=
\bigl\langle \alpha_t, k_{t-1}(x)-k'_{t-1}(x) \bigr\rangle ,
\]
where
\(
k_{t-1}(x)=\bigl[k(x,x_j)\bigr]_{j=1}^{t-1},
\;
k'_{t-1}(x)=\bigl[k(x,x'_j)\bigr]_{j=1}^{t-1}.
\) As in the linear case, the confidence width over the pair space admits the kernelized form
\begin{align*}
\omega_t(x,x')^2
&\propto
\beta_t
\bigl\|\Psi(x)-\Psi(x')\bigr\|_{H_t^{-1}}^2 \\
&\propto
\frac{\beta_t}{\lambda}
\Bigl(
\mathds{k}(z,z)
-
\mathds{k}_{t-1}(z)^\top
W_{t-1}^{1/2}
\bigl(
W_{t-1}^{1/2}\mathds{K}_{t-1}W_{t-1}^{1/2}
+\lambda I
\bigr)^{-1}
W_{t-1}^{1/2}
\mathds{k}_{t-1}(z)
\Bigr),
\end{align*}
where $z=(x,x')$,
\(
H_t
=
\lambda I_{\cH_{\mathds{k}}}
+
\sum_{s=1}^{t-1}
\dot{\sigma}\bigl(\hat{f}_t(x_s)-\hat{f}_t(x'_s)\bigr)
\bigl(\Psi(x_s)-\Psi(x'_s)\bigr)
\bigl(\Psi(x_s)-\Psi(x'_s)\bigr)^\top
\), $W= \operatorname{diag}(\dot{\sigma}(\Delta_{\hat{f}_t}(z_1)),\ldots,\dot{\sigma}(\Delta_{\hat{f}_t}(z_{t-1}))) $ and
\(
\mathds{K}_{t-1}
=
\bigl[\mathds{k}\bigl((x_i,x_i'),(x_j,x_j')\bigr)\bigr]_{i,j=1}^{t-1}
\)
is the dueling-kernel Gram matrix over the observed pairs.

\propsigmaeluderkern*
\begin{proof}[\cref{prp:propsigmaeluderker}]
Let $\bigl((x_1,x'_1),\dots,(x_n,x'_n)\bigr) \in \mathcal{X}^{2n}$ and $(f_1,\dots,f_n) \subseteq \cF_k$ be the longest $\varepsilon$-independent sequence, i.e $\delud\!\left(\Delta_{\cF_k},\, \Delta_{f^\star},\, \varepsilon\right)=n$. For brevity, we denote $z_i = \Psi(x_i) - \Psi(x'_i)$ and consider $(\theta_1,\dots,\theta_n) \subseteq \ell_2(\lN)$ such that $f_i = \theta_i^\top \Psi$ for all $i \in [n]$. Then there exists $\varepsilon' > \varepsilon$ such that for all $t \leq n$,
\[
\sum_{i=1}^{t-1} \bar{\varphi}(\sigma(z_i^\top\theta^\star),\sigma(z_i^\top\theta_{t})) \leq \varepsilon',
\qquad
\dot{\sigma}\bigl(z_t^\top \theta^\star\bigr) \bigl(z_t^\top (\theta^\star - \theta_t)\bigr)^2 \geq \varepsilon'.
\]

A second-order Taylor expansion yields, for all $i<t \le n$,
\begin{align*}
\bar{\varphi}(\sigma(z_i^\top\theta^\star),\sigma(z_i^\top\theta_{t}))
 = \bar{\varphi}(\sigma(z_i^\top\theta^\star),\sigma(z_i^\top\theta^\star))
 &+ \left[ \sigma(z_i^\top \theta^\star) - \sigma(z_i^\top \theta^\star)\right]z_i^\top (\theta_t-\theta^\star) \\
  &+ \left[z_i^\top (\theta_t - \theta^\star) \right]^2 \underbrace{\int_{0}^1(1-\nu) \dot{\sigma}(z_i^\top \theta^\star + \nu z_i^\top (\theta_t-\theta^\star))d\nu}_{(A)}.
\end{align*}

Using Lemma~8 of~\cite{pmlr-v130-abeille21a}., we obtain
\[
 (A) \geq \frac{\dot{\sigma}(z_i^\top \theta^\star)}{2 + \abs{z_i^T(\theta_t -\theta^\star)}} \ge \frac{\dot{\sigma}(z_i^\top \theta^\star)}{2(1+S)}.
\]
Hence, for all $t \le n$,
\begin{align}
 \sum_{i=1}^{t-1} \dot{\sigma}(z_i^\top \theta^\star) (z_i^\top (\theta^\star-\theta_t))^2 \le 2(1+S)\varepsilon' \label{eq:eq1_exker},
\end{align}
while
\begin{align}
 \dot{\sigma}(z_t^\top \theta^\star) (z_t^\top (\theta^\star-\theta_t))^2 \ge \varepsilon'. \label{eq:eq2_exker}
\end{align}

Define
\[
 \bar{H}_t(\theta^\star) = \sum_{i=1}^{t-1} \dot{\sigma}(z_i^\top \theta^\star) z_i z_i^\top,
 \qquad H_t(\theta^\star) = \bar{H}_t(\theta^\star) +  \lambda I,
\]
with $\lambda = 2\varepsilon(1+S)/(2S)^2$. From equation~\eqref{eq:eq1_exker}:
\[
 \|\theta^\star-\theta_t\|_{H_t(\theta^\star)}^2 \leq \|\theta^\star-\theta_t\|_{\bar{H}_t(\theta^\star)}^2 + (2S)^2 \lambda
 \leq 2(1+S)\varepsilon' + (2S)^2 \lambda
 \leq 2 (1+S)(\varepsilon'+ \varepsilon)
\]

From equation~\eqref{eq:eq2_exker}:
\begin{align*}
 \varepsilon' \leq \dot{\sigma}(z_t^\top \theta_t) \norm{z_t}^2_{H_t(\theta^\star)^{-1}} \norm{\theta_t-\theta^\star}^2_{H_t(\theta^\star)} &\leq 2 (1+S)(\varepsilon'+ \varepsilon) \dot{\sigma}(z_t^\top \theta^\star) \norm{z_t}^2_{H_t(\theta^\star)^{-1}}  \\
 \implies \dot{\sigma}(z_t^\top \theta^\star) \norm{z_t}^2_{H_t(\theta^\star)^{-1}}  &\geq \frac{\varepsilon'}{2 (1+S)(\varepsilon'+ \varepsilon)}\geq  \frac{1}{4 (1+S)}.
\end{align*}
Using that $ \operatorname{det}\Bigl(\lambda^{-1}H_t(\theta^\star)\Bigr)= \operatorname{det}\Bigl(\lambda^{-1}H_{t-1}(\theta^\star)\Bigr)\Big( 1 + \dot{\sigma}(z_t^\top \theta^\star)\norm{z_t}_{H_t(\theta^\star)}^{-1}\Big) $ we obtain:
\[ \operatorname{det}\Bigl(\lambda^{-1}H_n(\theta^\star)\Bigr)\geq \operatorname{det}\Bigl(I\Bigr) \Big( 1 + \frac{1}{4(1+S)}\Big)^{n} \geq \Big( 1 + \frac{1}{4(1+S)}\Big)^{n} \]

Given that $\dot{\sigma}(\cdot) \leq 1$ and the definition of the maximum information gain, we obtain:
\begin{align*}
\gamma_T(\lambda;\, \cX \times \cX)
&\geq \Log{\operatorname{det}\Bigl(\lambda^{-1} H_n(\theta^\star)\Bigr)}, \\
\implies n &\leq \left(\Log{1 + \frac{1}{4(1+S)}}\right)^{-1} \gamma_T(\lambda;\, \cX \times \cX).
\end{align*}
\end{proof}

\subsubsection{Specialization to Commonly Used Kernel Functions}\label{sec:specified_ker}
Combining~\cref{thm:Alg_perf} and~\cref{prp:propsigmaeluderker} with the fact that, for bounded RKHS, $\log \cN_T \asymp \gamma_T$, the regret upper bound of \textbf{GINOP} in the kernelized setting becomes:
\begin{align*}
    \cR_T
&=
\cO\!\left( \gamma_T
\sqrt{\frac{T}{\dot{\sigma}^\star}}
+ \kappa\, \gamma_T^3 \right).
\end{align*}
In what follows, we specialize this result to several commonly used kernel functions.

\paragraph{Linear kernel.} The linear kernel is defined as
\(
k(x, x') = x^\top x'
\) where $x,x'\in \lR^d$
and, according to~\cite{pmlr-v130-vakili21a}, $\gamma_T = \cO(d \log T)$. Hence, $\gamma_T^3 = o(\sqrt{T})$, and consequently
\begin{align}
   \cR_T = \widetilde{\cO}\!\left( \gamma_T \sqrt{\frac{T}{\dot{\sigma}^\star}} \right)= \widetilde{\cO}\!\left( 2d \Log{T} \sqrt{{T}} \right). \label{eq:lin_ker}
\end{align}

\paragraph{Squared exponential (SE) kernel.} The SE kernel is defined as
\[
k(x, x') = \sigma_{\mathrm{SE}}^2 \exp\!\left( - \frac{\|x - x'\|^2}{\ell^2} \right), \qquad x, x' \in \lR^d,
\]
where $\sigma_{\mathrm{SE}}^2$ denotes the variance parameter and $\ell$ the length-scale parameter. According to~\cite{pmlr-v130-vakili21a}, $\gamma_T = \cO\!\left( \log^{d+1}(T) \right)$. Hence, $\gamma_T^3 = o(\sqrt{T})$, and consequently
\begin{align}
   \cR_T = \widetilde{\cO}\!\left( \gamma_T \sqrt{\frac{T}{\dot{\sigma}^\star}} \right) = \widetilde{\cO}\!\left( 2\log^{d+1}(T) \sqrt{{T}} \right). \label{eq:se_ker}
\end{align}

\paragraph{Matérn kernel.} The Matérn kernel is defined as
\[
k(x, x') = \frac{2^{1-\nu}}{\Gamma(\nu)} 
\left( \frac{\sqrt{2\nu}\,\|x - x'\|}{\rho} \right)^\nu 
K_\nu\!\left( \frac{\sqrt{2\nu}\,\|x - x'\|}{\rho} \right), \qquad x, x' \in \lR^d,
\]
where $\rho, \nu > 0$ are kernel parameters, $\Gamma(\cdot)$ denotes the gamma function, and $K_\nu(\cdot)$ is the modified Bessel function of the second kind. The parameter $\nu$ controls the smoothness of the kernel and, following~\cite{Dueling-xu2024-kernel}, is assumed to be large enough such that $\nu > 3d/2$. According to~\cite{pmlr-v130-vakili21a}, $\gamma_T = \cO\!\left( T^{\frac{d}{2\nu + d}} \Log{T} \right)$. Hence, $\gamma_T^3 = o(\gamma_T \sqrt{T})$, and consequently
\begin{align}
   \cR_T = \widetilde{\cO}\!\left( \gamma_T \sqrt{\frac{T}{\dot{\sigma}^\star}} \right) = \widetilde{\cO}\!\left( 2 \Log{T} T^{\frac{1}{2} + \frac{d}{2\nu + d}} \right). \label{eq:matern_ker}
\end{align}

As a result, by~\cref{eq:lin_ker,eq:se_ker,eq:matern_ker} and in line with~\cite{kayal2025bayesian}, \textbf{GINOP} exhibits the correct dependence on $\kappa$ and improves upon the bounds of~\cite{Dueling-xu2024-kernel} and~\cite{Dueling_Stackelburg_Krause}, which are of order $\widetilde{\cO}\bigl((\gamma_T T)^{3/4}\bigr)$ and $\widetilde{\cO}\bigl(\gamma_T \kappa^2 \sqrt{T}\bigr)$, respectively.

\section{Details of Experiments}\phantomsection\label{app:experiments}

To corroborate our theoretical results, we conduct numerical experiments evaluating the performance of \textbf{GINOP} against several existing baselines across a range of reward function classes. For all experiments, we report the cumulative regret over a horizon of $T = 2000$ rounds, averaged over $20$ independent trials. All experiments were conducted on a machine featuring an Apple M1 chip (8 cores) and 16 GB of RAM.

\paragraph{Linear Setting.} We consider the linear utility setting in which $f^\star(x) = \langle x, \theta^\star \rangle$ for all $x \in \cX$, with $\theta^\star \in \mathbb{B}_2^d(S)$, $d \in \{10,15,20\}$, $S = 2$, and $\abs{\cX} = 30$, with the arms drawn uniformly at random from $\{\frac{-1}{\sqrt{d}}, \frac{1}{\sqrt{d}}\}^d$. We benchmark against \textbf{MaxInP}~\citep{Dueling-saha-linear}, \textbf{CoLSTIM}~\citep{pmlr-v162-bengs22a}, and \textbf{FGTS.CDB}~\citep{li2024feelgoodthompsonsamplingcontextual}, which, despite their differing decision rules, all attain a regret bound of order $\kappa d \sqrt{T}$. While \textbf{MaxInP} requires no specific hyperparameter tuning, for \textbf{CoLSTIM} we follow the recommendations of the original paper and set $c = C_{\mathrm{thresh}} = \eta = \sqrt{d \log T}$, $\tau = t_0 = d \abs{\cX}$, and 
\(
p_t = \min\!\left(1,\, \sqrt{\frac{d}{t - \tau} \log(dT)}\right);
\)
for \textbf{FGTS.CDB}, in accordance with their Theorem~5.2, we set $\eta = 0.25$ and $\mu = 1 / (10 e^S \sqrt{T})$. The results, reported in~\cref{fig:LinUT}, indicate that \textbf{GINOP} consistently outperforms the competing baselines: due to their adverse dependence on $\kappa$, the baselines' regret is inflated, exhibiting a $\sqrt{T}$-like shape that becomes apparent only at considerably larger horizons.


\begin{figure}[h]
    \centering
    \begin{subfigure}[b]{0.32\linewidth}
        \centering
        \includegraphics[width=\linewidth]{Figure/cumulative_regret_K30_d10_T2000_epochs20_S2.png}
        \caption{$d=10$.}
        \label{fig:Linfirst}
    \end{subfigure}
    \hfill
    \begin{subfigure}[b]{0.32\linewidth}
        \centering
        \includegraphics[width=\linewidth]{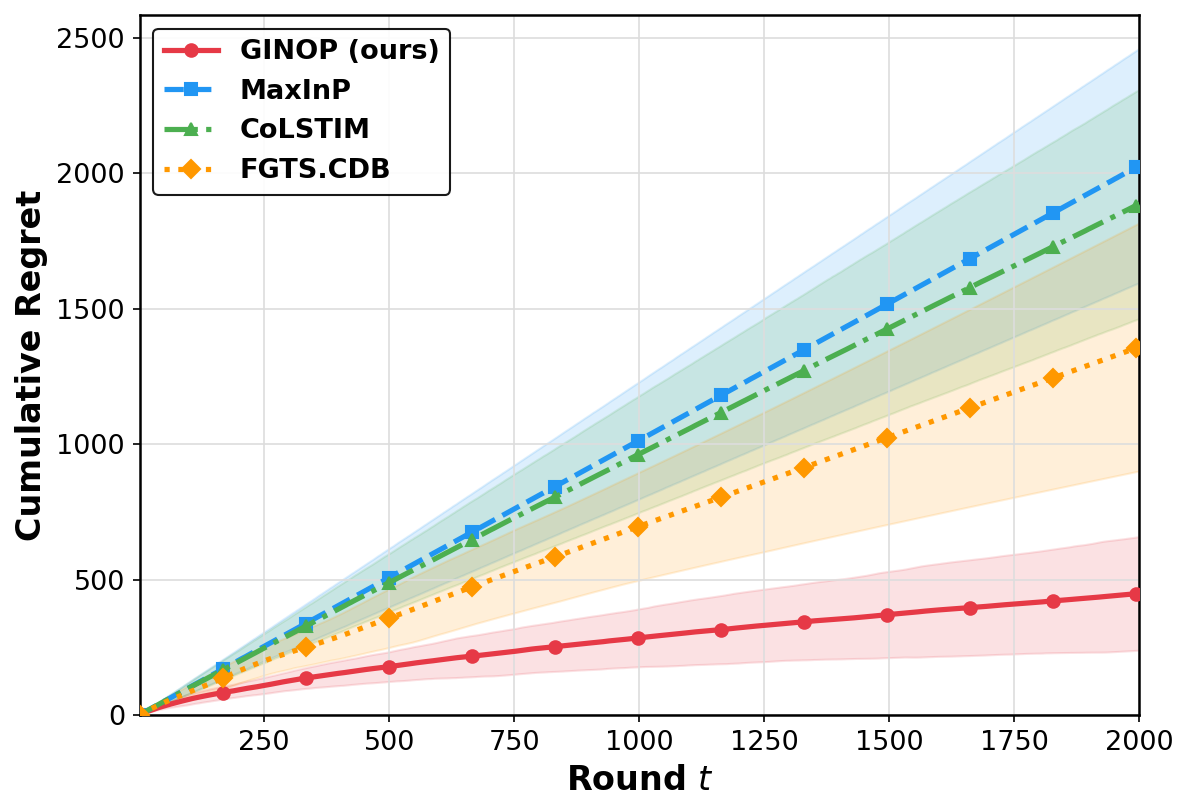}
        \caption{$d=15$.}
        \label{fig:Linsecond}
    \end{subfigure}
    \hfill
    \begin{subfigure}[b]{0.32\linewidth}
        \centering
        \includegraphics[width=\linewidth]{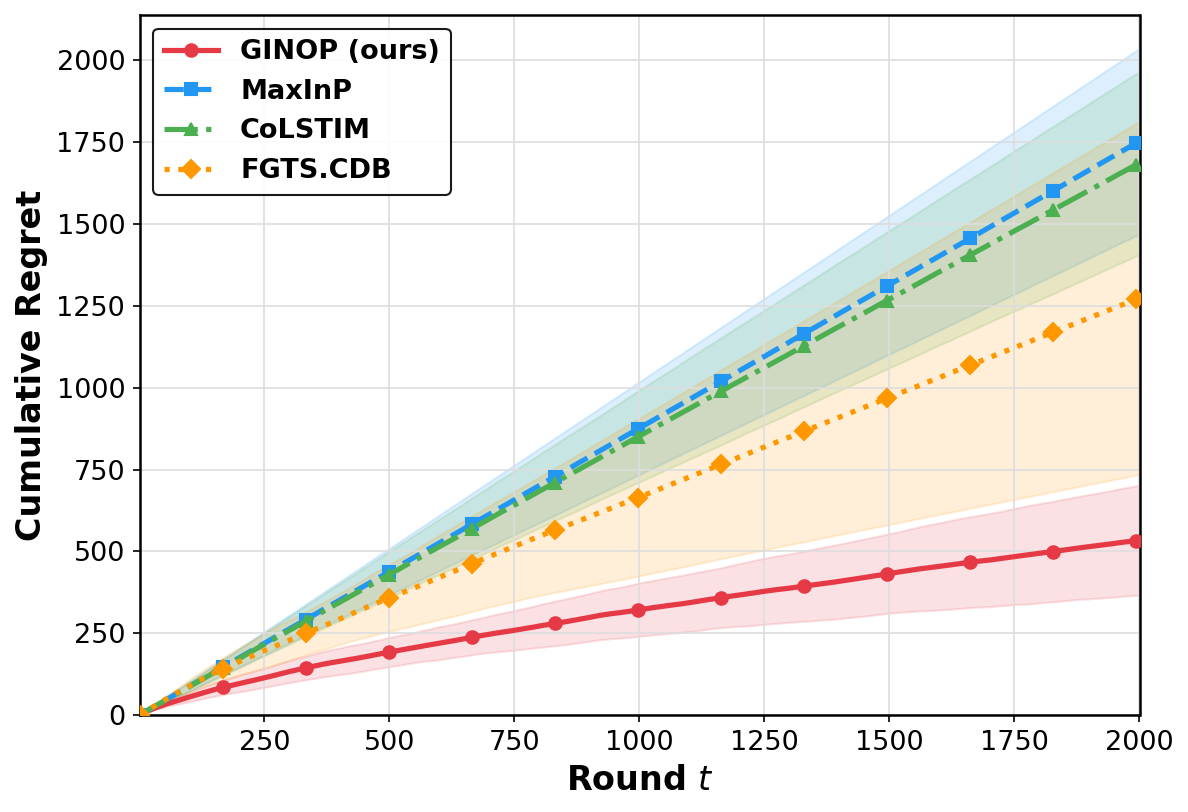}
        \caption{$d=20$.}
        \label{fig:Linthird}
    \end{subfigure}
\caption{Linear Utility.}
    \label{fig:LinUT}
\end{figure}

\paragraph{Kernelized Setting.} We adopt the Ackley function as the reward on the interval $[-5, 5]$. The Ackley function has a diverse optimization landscape, featuring multiple local minima, flat plateaus, and valleys, which makes it a popular benchmark in the non-convex optimization literature. It is defined as (here we consider $d=1$):
\[
f(x) = -20 \exp\!\left(-0.2 \sqrt{\frac{1}{d} \sum_{i=1}^d x_i^2}\right) - \exp\!\left(\frac{1}{d} \sum_{i=1}^d \cos(2\pi x_i)\right) + 20 + \exp(1).
\]
We consider $30$ arms forming a uniform mesh over the input domain, and employ the Matérn kernel with smoothness parameter $\nu = 2.5$ and lengthscale $\rho = 0.1$. We benchmark against \textbf{POP-BO}~\citep{Dueling-xu2024-kernel}, \textbf{MaxMinLCB}~\citep{Dueling_Stackelburg_Krause}, and \textbf{MR-LPF}~\citep{kayal2025bayesian}. The results are reported in~\cref{fig:KerUT}. \textbf{GINOP} clearly outperforms \textbf{POP-BO} and \textbf{MaxMinLCB}: the former scales as $(\gamma_TT)^{3/4}$, while the latter incurs an additional adverse dependence on $\kappa$ in the leading term. \textbf{MR-LPF}, in contrast, exhibits a regret profile comparable to that of \textbf{GINOP}, as its multi-phase approach allows it to eliminate the $\kappa$-dependence.

\begin{figure}[ht]
    \centering
    \includegraphics[width=0.5\linewidth]{Figure/cumulative_regret_Matern52_l=0.5_N30_T2000_trials20_ackley.png}
    \caption{Kernelized Utility.}
    \label{fig:KerUT}
\end{figure}

\paragraph{Against a Neural-Network Approach.} In this setting, the reward is generated by $f^\star(x) = \cos(3 \langle x, \theta^\star \rangle)$, with $\theta^\star \in \mathbb{R}^d$, where we take $d = 10$ and $\abs{\cX} = 30$, with the arms drawn uniformly at random from $\{-1, 1\}^d$. We benchmark against \textbf{NDB-UCB} and \textbf{NDB-TS}~\citep{verma2025neural}, which employ a neural network in the NTK regime to estimate the reward function and select arms accordingly. Following the original paper, we use a neural network with $2$ hidden layers of width $50$ and ReLU activation functions, with hyperparameters $\lambda = 1.0$, $\delta = 0.05$, and a fixed $\nu_T = \nu = 1.0$. The results, reported in~\cref{fig:NnUT}, show that \textbf{GINOP} outperforms both \textbf{NDB-UCB} and \textbf{NDB-TS}, in line with the theory: the regret bounds of \textbf{NDB-UCB} and \textbf{NDB-TS} exhibit an adverse dependence on $\kappa$.
\begin{figure}[ht]
    \centering
    \includegraphics[width=0.5\linewidth]{Figure/cumulative_regret_d10_K30_T2000_M20_m50_trials20.png}
    \caption{Against a NN.}
    \label{fig:NnUT}
\end{figure}

\newpage
\clearpage
\section*{NeurIPS Paper Checklist}


\begin{enumerate}

\item {\bf Claims}
    \item[] Question: Do the main claims made in the abstract and introduction accurately reflect the paper's contributions and scope?
    \item[] Answer: \answerYes{} 
    \item[] Justification: 
    \item[] Guidelines:
    \begin{itemize}
        \item The answer \answerNA{} means that the abstract and introduction do not include the claims made in the paper.
        \item The abstract and/or introduction should clearly state the claims made, including the contributions made in the paper and important assumptions and limitations. A \answerNo{} or \answerNA{} answer to this question will not be perceived well by the reviewers. 
        \item The claims made should match theoretical and experimental results, and reflect how much the results can be expected to generalize to other settings. 
        \item It is fine to include aspirational goals as motivation as long as it is clear that these goals are not attained by the paper. 
    \end{itemize}

\item {\bf Limitations}
    \item[] Question: Does the paper discuss the limitations of the work performed by the authors?
    \item[] Answer: \answerYes{} 
    \item[] Justification:
    \item[] Guidelines:
    \begin{itemize}
        \item The answer \answerNA{} means that the paper has no limitation while the answer \answerNo{} means that the paper has limitations, but those are not discussed in the paper. 
        \item The authors are encouraged to create a separate ``Limitations'' section in their paper.
        \item The paper should point out any strong assumptions and how robust the results are to violations of these assumptions (e.g., independence assumptions, noiseless settings, model well-specification, asymptotic approximations only holding locally). The authors should reflect on how these assumptions might be violated in practice and what the implications would be.
        \item The authors should reflect on the scope of the claims made, e.g., if the approach was only tested on a few datasets or with a few runs. In general, empirical results often depend on implicit assumptions, which should be articulated.
        \item The authors should reflect on the factors that influence the performance of the approach. For example, a facial recognition algorithm may perform poorly when image resolution is low or images are taken in low lighting. Or a speech-to-text system might not be used reliably to provide closed captions for online lectures because it fails to handle technical jargon.
        \item The authors should discuss the computational efficiency of the proposed algorithms and how they scale with dataset size.
        \item If applicable, the authors should discuss possible limitations of their approach to address problems of privacy and fairness.
        \item While the authors might fear that complete honesty about limitations might be used by reviewers as grounds for rejection, a worse outcome might be that reviewers discover limitations that aren't acknowledged in the paper. The authors should use their best judgment and recognize that individual actions in favor of transparency play an important role in developing norms that preserve the integrity of the community. Reviewers will be specifically instructed to not penalize honesty concerning limitations.
    \end{itemize}

\item {\bf Theory assumptions and proofs}
    \item[] Question: For each theoretical result, does the paper provide the full set of assumptions and a complete (and correct) proof?
    \item[] Answer: \answerYes{} 
    \item[] Justification: 
    \item[] Guidelines:
    \begin{itemize}
        \item The answer \answerNA{} means that the paper does not include theoretical results. 
        \item All the theorems, formulas, and proofs in the paper should be numbered and cross-referenced.
        \item All assumptions should be clearly stated or referenced in the statement of any theorems.
        \item The proofs can either appear in the main paper or the supplemental material, but if they appear in the supplemental material, the authors are encouraged to provide a short proof sketch to provide intuition. 
        \item Inversely, any informal proof provided in the core of the paper should be complemented by formal proofs provided in appendix or supplemental material.
        \item Theorems and Lemmas that the proof relies upon should be properly referenced. 
    \end{itemize}

    \item {\bf Experimental result reproducibility}
    \item[] Question: Does the paper fully disclose all the information needed to reproduce the main experimental results of the paper to the extent that it affects the main claims and/or conclusions of the paper (regardless of whether the code and data are provided or not)?
    \item[] Answer: \answerYes{} 
    \item[] Justification: 
    \item[] Guidelines:
    \begin{itemize}
        \item The answer \answerNA{} means that the paper does not include experiments.
        \item If the paper includes experiments, a \answerNo{} answer to this question will not be perceived well by the reviewers: Making the paper reproducible is important, regardless of whether the code and data are provided or not.
        \item If the contribution is a dataset and\slash or model, the authors should describe the steps taken to make their results reproducible or verifiable. 
        \item Depending on the contribution, reproducibility can be accomplished in various ways. For example, if the contribution is a novel architecture, describing the architecture fully might suffice, or if the contribution is a specific model and empirical evaluation, it may be necessary to either make it possible for others to replicate the model with the same dataset, or provide access to the model. In general. releasing code and data is often one good way to accomplish this, but reproducibility can also be provided via detailed instructions for how to replicate the results, access to a hosted model (e.g., in the case of a large language model), releasing of a model checkpoint, or other means that are appropriate to the research performed.
        \item While NeurIPS does not require releasing code, the conference does require all submissions to provide some reasonable avenue for reproducibility, which may depend on the nature of the contribution. For example
        \begin{enumerate}
            \item If the contribution is primarily a new algorithm, the paper should make it clear how to reproduce that algorithm.
            \item If the contribution is primarily a new model architecture, the paper should describe the architecture clearly and fully.
            \item If the contribution is a new model (e.g., a large language model), then there should either be a way to access this model for reproducing the results or a way to reproduce the model (e.g., with an open-source dataset or instructions for how to construct the dataset).
            \item We recognize that reproducibility may be tricky in some cases, in which case authors are welcome to describe the particular way they provide for reproducibility. In the case of closed-source models, it may be that access to the model is limited in some way (e.g., to registered users), but it should be possible for other researchers to have some path to reproducing or verifying the results.
        \end{enumerate}
    \end{itemize}

\item {\bf Open access to data and code}
    \item[] Question: Does the paper provide open access to the data and code, with sufficient instructions to faithfully reproduce the main experimental results, as described in supplemental material?
    \item[] Answer: \answerNA{} 
    \item[] Justification: 
    \item[] Guidelines:
    \begin{itemize}
        \item The answer \answerNA{} means that paper does not include experiments requiring code.
        \item Please see the NeurIPS code and data submission guidelines (\url{https://neurips.cc/public/guides/CodeSubmissionPolicy}) for more details.
        \item While we encourage the release of code and data, we understand that this might not be possible, so \answerNo{} is an acceptable answer. Papers cannot be rejected simply for not including code, unless this is central to the contribution (e.g., for a new open-source benchmark).
        \item The instructions should contain the exact command and environment needed to run to reproduce the results. See the NeurIPS code and data submission guidelines (\url{https://neurips.cc/public/guides/CodeSubmissionPolicy}) for more details.
        \item The authors should provide instructions on data access and preparation, including how to access the raw data, preprocessed data, intermediate data, and generated data, etc.
        \item The authors should provide scripts to reproduce all experimental results for the new proposed method and baselines. If only a subset of experiments are reproducible, they should state which ones are omitted from the script and why.
        \item At submission time, to preserve anonymity, the authors should release anonymized versions (if applicable).
        \item Providing as much information as possible in supplemental material (appended to the paper) is recommended, but including URLs to data and code is permitted.
    \end{itemize}

\item {\bf Experimental setting/details}
    \item[] Question: Does the paper specify all the training and test details (e.g., data splits, hyperparameters, how they were chosen, type of optimizer) necessary to understand the results?
    \item[] Answer: \answerYes{} 
    \item[] Justification: 
    \item[] Guidelines:
    \begin{itemize}
        \item The answer \answerNA{} means that the paper does not include experiments.
        \item The experimental setting should be presented in the core of the paper to a level of detail that is necessary to appreciate the results and make sense of them.
        \item The full details can be provided either with the code, in appendix, or as supplemental material.
    \end{itemize}

\item {\bf Experiment statistical significance}
    \item[] Question: Does the paper report error bars suitably and correctly defined or other appropriate information about the statistical significance of the experiments?
    \item[] Answer: \answerYes{} 
    \item[] Justification: 
    \item[] Guidelines:
    \begin{itemize}
        \item The answer \answerNA{} means that the paper does not include experiments.
        \item The authors should answer \answerYes{} if the results are accompanied by error bars, confidence intervals, or statistical significance tests, at least for the experiments that support the main claims of the paper.
        \item The factors of variability that the error bars are capturing should be clearly stated (for example, train/test split, initialization, random drawing of some parameter, or overall run with given experimental conditions).
        \item The method for calculating the error bars should be explained (closed form formula, call to a library function, bootstrap, etc.)
        \item The assumptions made should be given (e.g., Normally distributed errors).
        \item It should be clear whether the error bar is the standard deviation or the standard error of the mean.
        \item It is OK to report 1-sigma error bars, but one should state it. The authors should preferably report a 2-sigma error bar than state that they have a 96\% CI, if the hypothesis of Normality of errors is not verified.
        \item For asymmetric distributions, the authors should be careful not to show in tables or figures symmetric error bars that would yield results that are out of range (e.g., negative error rates).
        \item If error bars are reported in tables or plots, the authors should explain in the text how they were calculated and reference the corresponding figures or tables in the text.
    \end{itemize}

\item {\bf Experiments compute resources}
    \item[] Question: For each experiment, does the paper provide sufficient information on the computer resources (type of compute workers, memory, time of execution) needed to reproduce the experiments?
    \item[] Answer: \answerYes{} 
    \item[] Justification: 
    \item[] Guidelines:
    \begin{itemize}
        \item The answer \answerNA{} means that the paper does not include experiments.
        \item The paper should indicate the type of compute workers CPU or GPU, internal cluster, or cloud provider, including relevant memory and storage.
        \item The paper should provide the amount of compute required for each of the individual experimental runs as well as estimate the total compute. 
        \item The paper should disclose whether the full research project required more compute than the experiments reported in the paper (e.g., preliminary or failed experiments that didn't make it into the paper). 
    \end{itemize}
    
\item {\bf Code of ethics}
    \item[] Question: Does the research conducted in the paper conform, in every respect, with the NeurIPS Code of Ethics \url{https://neurips.cc/public/EthicsGuidelines}?
    \item[] Answer: \answerYes{} 
    \item[] Justification: 
    \item[] Guidelines:
    \begin{itemize}
        \item The answer \answerNA{} means that the authors have not reviewed the NeurIPS Code of Ethics.
        \item If the authors answer \answerNo, they should explain the special circumstances that require a deviation from the Code of Ethics.
        \item The authors should make sure to preserve anonymity (e.g., if there is a special consideration due to laws or regulations in their jurisdiction).
    \end{itemize}

\item {\bf Broader impacts}
    \item[] Question: Does the paper discuss both potential positive societal impacts and negative societal impacts of the work performed?
    \item[] Answer: \answerNA{} 
    \item[] Justification: 
    \item[] Guidelines:
    \begin{itemize}
        \item The answer \answerNA{} means that there is no societal impact of the work performed.
        \item If the authors answer \answerNA{} or \answerNo, they should explain why their work has no societal impact or why the paper does not address societal impact.
        \item Examples of negative societal impacts include potential malicious or unintended uses (e.g., disinformation, generating fake profiles, surveillance), fairness considerations (e.g., deployment of technologies that could make decisions that unfairly impact specific groups), privacy considerations, and security considerations.
        \item The conference expects that many papers will be foundational research and not tied to particular applications, let alone deployments. However, if there is a direct path to any negative applications, the authors should point it out. For example, it is legitimate to point out that an improvement in the quality of generative models could be used to generate Deepfakes for disinformation. On the other hand, it is not needed to point out that a generic algorithm for optimizing neural networks could enable people to train models that generate Deepfakes faster.
        \item The authors should consider possible harms that could arise when the technology is being used as intended and functioning correctly, harms that could arise when the technology is being used as intended but gives incorrect results, and harms following from (intentional or unintentional) misuse of the technology.
        \item If there are negative societal impacts, the authors could also discuss possible mitigation strategies (e.g., gated release of models, providing defenses in addition to attacks, mechanisms for monitoring misuse, mechanisms to monitor how a system learns from feedback over time, improving the efficiency and accessibility of ML).
    \end{itemize}
    
\item {\bf Safeguards}
    \item[] Question: Does the paper describe safeguards that have been put in place for responsible release of data or models that have a high risk for misuse (e.g., pre-trained language models, image generators, or scraped datasets)?
    \item[] Answer: \answerNA{} 
    \item[] Justification: 
    \item[] Guidelines:
    \begin{itemize}
        \item The answer \answerNA{} means that the paper poses no such risks.
        \item Released models that have a high risk for misuse or dual-use should be released with necessary safeguards to allow for controlled use of the model, for example by requiring that users adhere to usage guidelines or restrictions to access the model or implementing safety filters. 
        \item Datasets that have been scraped from the Internet could pose safety risks. The authors should describe how they avoided releasing unsafe images.
        \item We recognize that providing effective safeguards is challenging, and many papers do not require this, but we encourage authors to take this into account and make a best faith effort.
    \end{itemize}

\item {\bf Licenses for existing assets}
    \item[] Question: Are the creators or original owners of assets (e.g., code, data, models), used in the paper, properly credited and are the license and terms of use explicitly mentioned and properly respected?
    \item[] Answer: \answerNA{} 
    \item[] Justification: 
    \item[] Guidelines:
    \begin{itemize}
        \item The answer \answerNA{} means that the paper does not use existing assets.
        \item The authors should cite the original paper that produced the code package or dataset.
        \item The authors should state which version of the asset is used and, if possible, include a URL.
        \item The name of the license (e.g., CC-BY 4.0) should be included for each asset.
        \item For scraped data from a particular source (e.g., website), the copyright and terms of service of that source should be provided.
        \item If assets are released, the license, copyright information, and terms of use in the package should be provided. For popular datasets, \url{paperswithcode.com/datasets} has curated licenses for some datasets. Their licensing guide can help determine the license of a dataset.
        \item For existing datasets that are re-packaged, both the original license and the license of the derived asset (if it has changed) should be provided.
        \item If this information is not available online, the authors are encouraged to reach out to the asset's creators.
    \end{itemize}

\item {\bf New assets}
    \item[] Question: Are new assets introduced in the paper well documented and is the documentation provided alongside the assets?
    \item[] Answer: \answerNA{} 
    \item[] Justification: 
    \item[] Guidelines:
    \begin{itemize}
        \item The answer \answerNA{} means that the paper does not release new assets.
        \item Researchers should communicate the details of the dataset\slash code\slash model as part of their submissions via structured templates. This includes details about training, license, limitations, etc. 
        \item The paper should discuss whether and how consent was obtained from people whose asset is used.
        \item At submission time, remember to anonymize your assets (if applicable). You can either create an anonymized URL or include an anonymized zip file.
    \end{itemize}

\item {\bf Crowdsourcing and research with human subjects}
    \item[] Question: For crowdsourcing experiments and research with human subjects, does the paper include the full text of instructions given to participants and screenshots, if applicable, as well as details about compensation (if any)? 
    \item[] Answer: \answerNA{} 
    \item[] Justification: 
    \item[] Guidelines:
    \begin{itemize}
        \item The answer \answerNA{} means that the paper does not involve crowdsourcing nor research with human subjects.
        \item Including this information in the supplemental material is fine, but if the main contribution of the paper involves human subjects, then as much detail as possible should be included in the main paper. 
        \item According to the NeurIPS Code of Ethics, workers involved in data collection, curation, or other labor should be paid at least the minimum wage in the country of the data collector. 
    \end{itemize}

\item {\bf Institutional review board (IRB) approvals or equivalent for research with human subjects}
    \item[] Question: Does the paper describe potential risks incurred by study participants, whether such risks were disclosed to the subjects, and whether Institutional Review Board (IRB) approvals (or an equivalent approval/review based on the requirements of your country or institution) were obtained?
    \item[] Answer: \answerNA{} 
    \item[] Justification: 
    \item[] Guidelines:
    \begin{itemize}
        \item The answer \answerNA{} means that the paper does not involve crowdsourcing nor research with human subjects.
        \item Depending on the country in which research is conducted, IRB approval (or equivalent) may be required for any human subjects research. If you obtained IRB approval, you should clearly state this in the paper. 
        \item We recognize that the procedures for this may vary significantly between institutions and locations, and we expect authors to adhere to the NeurIPS Code of Ethics and the guidelines for their institution. 
        \item For initial submissions, do not include any information that would break anonymity (if applicable), such as the institution conducting the review.
    \end{itemize}

\item {\bf Declaration of LLM usage}
    \item[] Question: Does the paper describe the usage of LLMs if it is an important, original, or non-standard component of the core methods in this research? Note that if the LLM is used only for writing, editing, or formatting purposes and does \emph{not} impact the core methodology, scientific rigor, or originality of the research, declaration is not required.
    \item[] Answer: \answerNA{} 
    \item[] Justification: 
    \item[] Guidelines:
    \begin{itemize}
        \item The answer \answerNA{} means that the core method development in this research does not involve LLMs as any important, original, or non-standard components.
        \item Please refer to our LLM policy in the NeurIPS handbook for what should or should not be described.
    \end{itemize}

\end{enumerate}
\end{document}